\documentclass[10pt]{article} 
\usepackage[preprint]{tmlr}

\usepackage{amsmath,amsfonts,bm}

\def\eqref#1{equation~\ref{#1}}

\def\1{\bm{1}}

\DeclareMathAlphabet{\mathsfit}{\encodingdefault}{\sfdefault}{m}{sl}
\SetMathAlphabet{\mathsfit}{bold}{\encodingdefault}{\sfdefault}{bx}{n}

\DeclareMathOperator*{\argmax}{arg\,max}

\usepackage{hyperref}
\usepackage{url}
\usepackage{amssymb} 
\usepackage{amsthm} 
\usepackage{graphicx}  
\usepackage{enumitem}  
\usepackage{thmtools}
\usepackage{subcaption}
\usepackage{thm-restate}

\declaretheorem[style=plain, numberwithin=section]{theorem}
\declaretheorem[style=plain, sibling=theorem]{proposition}
\declaretheorem[style=plain, sibling=theorem]{lemma}

\declaretheorem[style=definition, sibling=theorem]{definition}

\usepackage[most]{tcolorbox}

\newtcolorbox{summarybox}{
  colback=gray!10,
  colframe=black,
  boxrule=0.8pt,
  arc=3pt,
  left=8pt,
  right=8pt,
  top=6pt,
  bottom=6pt
}

\hypersetup{
  colorlinks=true,
  linkcolor=blue!50!black,
  citecolor=blue!50!black,
  urlcolor=blue!50!black
}

\title{Reward Learning from Best-of-$N$ Preference Data:\\
 Targets, Tradeoffs, and Design Principles}

\author{\name Rattana Pukdee \email rpukdee@cs.cmu.edu \\
      \addr Machine Learning Department\\
      Carnegie Mellon University
      \AND
      \name Maria-Florina Balcan \email ninamf@cs.cmu.edu \\
      \addr Machine Learning Department\\
      Carnegie Mellon University
      \AND
      \name Pradeep Ravikumar \email pkr@cs.cmu.edu\\
      \addr Machine Learning Department\\
      Carnegie Mellon University}

\def\month{MM}  
\def\year{YYYY} 
\def\openreview{\url{https://openreview.net/forum?id=XXXX}} 

\newcommand{\pBoN}{P_{\text{BoN}}}
\newcommand{\pBoNNminus}{P_{\text{BoN}, N-1}}
\newcommand{\pWoN}{P_{\text{WoN}}}

\newcommand{\lambdaconn}{\lambda_{\text{conn}}}
\newcommand{\Deltar}{\Delta_{r}}
\newcommand{\Deltarstarp}{\Delta_{r^*_P}}
\newcommand{\Deltarstar}{\Delta_{r^*}}

\newcommand{\DeltarBI}{\Delta_{r^*_{\text{BI}}}}
\newcommand{\DeltarBWI}{\Delta_{r^*_{\text{BWI}}}}
\newcommand{\cH}{\mathcal{H}}
\newcommand{\cX}{\mathcal{X}}
\newcommand{\cY}{\mathcal{Y}}

\newcommand{\bE}{\mathbb{E}}

\newcommand{\rstar}{r^*}
\newcommand{\pbase}{P_{\text{base}}(y \mid x)}
\newcommand{\pbaseyprime}{P_{\text{base}}(y' \mid x)}
\newcommand{\PBI}{P_{\text{BI}}}
\newcommand{\PBR}{P_{\text{BR}}}
\newcommand{\PBW}{P_{\text{BW}}}
\newcommand{\PBWI}{P_{\text{BWI}}}
\newcommand{\PBWIunnorm}{\widetilde{P}_{\text{BWI}}}
\newcommand{\PBWunnorm}{\widetilde{P}_{\text{BW}}}
\newcommand{\rhat}{\hat{r}}
\newcommand{\rstarbi}{r^*_{\text{BI}}}
\newcommand{\rstarbr}{r^*_{\text{BR}}}
\newcommand{\rstarbw}{r^*_{\text{BW}}}
\newcommand{\rstarbwi}{r^*_{\text{BWI}}}
\newcommand{\TV}{\operatorname{TV}}
\newcommand{\LP}{\mathcal{L}_P}
\newcommand{\LQ}{\mathcal{L}_Q}
\newcommand{\rstarp}{r^*_{P}}
\newcommand{\rstarq}{r^*_{Q}}
\newcommand{\LBR}{\mathcal{L}_{\PBR}}
\newcommand{\LBI}{\mathcal{L}_{\PBI}}
\newcommand{\Wp}{\widetilde{P}}
\newcommand{\Wq}{\widetilde{Q}}
\newcommand{\PBIunnorm}{\widetilde{P}_{\text{BI}}}
\newcommand{\PBRunnorm}{\widetilde{P}_{\text{BR}}}
\newcommand{\cHpair}{\mathcal{H}_\text{pair}}
\newcommand{\aalphax}{A_{x, \alpha}}

\begin{document}

\maketitle

\begin{abstract}
  Best-of-$N$ sampling is widely used to construct pairwise preference data: $N$ candidates are drawn from a base distribution, and the best is paired with a rejected response. Despite its widespread use, what Bradley--Terry (BT) reward learning extracts from such data, and how to choose $N$ and the base distribution, remain unclear. We specialize a recent analysis of preference data via its induced conditional distribution to Best-of-$N$. For independent-reference variants, we derive closed-form reward targets as explicit functions of $N$ and the base distribution, and show that they preserve the latent reward ranking. For the practical Best-vs-Random and Best-vs-Worst variants, chosen and rejected responses are coupled through the same candidate set, so exact BT representability generally fails; nevertheless, bounded-class minimizers approach the reference targets as $N$ grows. Although margin and connectivity are known to govern sample efficiency in pairwise preference learning, Best-of-$N$ couples them through $N$ in opposing directions: larger $N$ widens pairwise margins but reduces connectivity. This trade-off yields two design principles: use larger $N$ when preference labels are the bottleneck, smaller $N$ when generation is the bottleneck; and shape the base distribution to place mass between the responses whose comparison matters most at test time. Experiments on synthetic and real preference data support the predicted dependence on sample size and base-distribution shape. \footnote{Code available at: \href{https://github.com/rattaoup/bon-preference}{https://github.com/rattaoup/bon-preference}}
\end{abstract}

\section{Introduction}

Pairwise preference data is a general and increasingly important supervision signal in machine learning. It has been extensively studied in the literature on learning to rank \citep{cao2007learning, wauthier2013efficient} and learning preference \citep{furnkranz2010preference, qian2015learning}, learning combinatorial functions \citep{balcan2016learning}  and more recently has become central to modern AI systems, especially in reward learning and post-training of large language models \citep{christiano2017deep, ouyang2022training}. This form of supervision is widely used because relative judgements are often easier to obtain, from both human annotators and increasingly AI evaluators \citep{zheng2023judging}, than calibrated scalar scores. 

In modern AI systems, pairwise preference data is often used to train reward models, typically with a Bradley-Terry objective, and these reward models may then be used to further optimize a language model through reinforcement learning \citep{ziegler2019fine, stiennon2020learning, ouyang2022training}. Preference data is also used more directly in fine-tuning the model as well \citep{rafailov2023direct, meng2024simpo}. A common approach to construct the preference data is through Best-of-$N$ sampling. For a given context, $N$ responses are sampled from a base distribution, and an oracle identifies the best response as the chosen one. The rejected response is then selected either uniformly from the remaining $N - 1$ responses (Best-vs-Random) or as the worst response (Best-vs-Worst). The special case $N = 2$ recovers the standard pairwise-comparison setting \citep{christiano2017deep}. The common intuition is that larger $N$ improves this data, since the chosen response is better in absolute terms. We show this intuition is incomplete: what matters is not only the quality of the chosen response, but which comparisons the dataset makes available. We study reward learning as a clean lens on the role of data with two fundamental questions.

\begin{enumerate}
  \item When preference data is generated through Best-of-N sampling, what reward function is induced by the resulting dataset under Bradley-Terry objective?
  \item How should the candidate-set size N and the base distribution be chosen ?
\end{enumerate}

Our analysis builds on a recent distribution-agnostic framework \citep{pukdee2026doespreferencelearningrecover} that analyzes preference data through the conditional preference distribution (CPRD), characterizing when it is Bradley-Terry representable and showing that finite-sample learning is governed by margin and comparison connectivity. That framework leaves the induced reward targets, margins, and connectivity abstract, since they depend on the data-generation mechanism. We instantiate it for Best-of-$N$ preference data, where the order-statistic structure makes these quantities concrete and reveals how they scale with $N$. Our contributions are summarized as follows:

\textbf{Contributions.}
\begin{enumerate}
  \item \textit{Targets:} We study a clean reference setting in which the rejected response is sampled independently. In this setting, we characterize the reward targets induced by Best-of-$N$ preference data in closed form and show that they preserve the ranking of an underlying ground-truth reward. This identifies what signal reward learning is recovering from the data. For the practical Best-vs-Random and Best-vs-Worst settings, we show that the corresponding population minimizers do not admit the same clean characterization, but approach these reference targets as $N$ grows.
  \item \textit{Tradeoffs:}  We show that $N$ controls a fundamental tradeoff between margin and comparison connectivity. Increasing $N$ pushes chosen responses toward the top of the reward distribution, which amplifies pairwise margins and makes individual comparisons more informative. But this same concentration narrows the range of responses being compared, reducing comparison connectivity and leaving intermediate comparisons underrepresented. These two effects pull in opposite directions and govern sample efficiency.
  \item \textit{Design principles:} We derive two actionable guidelines. The candidate-set size $N$ should be chosen based on the resource bottleneck: larger $N$ is more helpful when preference labels are scarce, whereas smaller $N$ is often preferable when preference labels are cheap and data is abundant. Beyond choosing $N$, the base distribution itself is also a design lever through what we call the between-mass principle: the induced margin between two responses grows with the amount of base-distribution mass placed between them in reward order. Shaping the base distribution to put mass between the response pairs that matter under the target evaluation distribution selectively amplifies those margins. We validate these predictions on both synthetic and real preference data. 
\end{enumerate}

Taken together, our results provide a learning-theoretic foundation for reward learning from Best-of-$N$ preference data, while yielding concrete principles for data design in practice.

\section{Related work}

\textbf{Best-of-$N$ Sampling for Inference and Training.} Best-of-$N$ sampling (BoN) is an inference-time method with strong empirical performance \citep{stiennon2020learning, cobbe2021training, nakano2021webgpt, 
brown2024large}. The idea is to sample $N$ responses from a policy and select 
the one with the highest reward. Despite its simplicity, BoN is a strong baseline 
for inference-time scaling \citep{snell2024scaling}. Various works have studied the theoretical properties of the induced BoN policy \citep{huang2025sample}, including its distance from the reference policy and its win rate  \citep{beiramitheoretical, yang2024asymptotics}, and the soft BoN variants \citep{verdun2025soft, aminianbest}. These results suggest that win rate increases 
with $N$. However, larger $N$ does not always improve performance in practice; 
when $N$ is too large, reward hacking can degrade output quality 
\citep{gao2023scaling, huangbest}. Given the effectiveness of BoN but its $N$-fold inference cost, several works aim  to train a policy that matches the BoN distribution while requiring only a single  sample at inference time. This can be done offline with a fixed BoN dataset 
\citep{touvron2023llama, dongraft, aminivariational} or iteratively \citep{sessabond, yang2025faster, guo2024direct}. \citet{gui2024bonbon} provide theoretical analysis for this training time setting, showing that the BoN distribution is optimal in terms of win rate against the base model for a given KL distance budget. 
While this literature clarifies the role of BoN in improving generation quality and policy training, it leaves open the question of how BoN affects learning when it is used to construct preference data.

\textbf{Learning from Best-of-$N$ Preference Data.}
Understanding how preference data affects the final model is an important question, since it can inform the design of better datasets. Recent work suggests that better preference data leads to better downstream policies \citep{ivison2024unpacking,morimura2024filtered,pan2025matters}. In this literature, however, better preference data is often identified using external proxies such as reward-model scores. While a higher score may indicate that a response is better in an absolute sense, it does not necessarily mean that the resulting pairwise comparison is more informative for preference learning. We study a related question from a different perspective: rather than analyzing the end-to-end policy outcome, we stop at the reward-learning stage and ask how the preference data shapes the learned reward function. This provides a cleaner view of the effect of the data itself, without conflating it with the base policy's behavior. \citet{razinmakes} study the complementary direction, linking reward properties to downstream policy performance, closing the loop from data to policy.


 In practice, preference data can be constructed in several ways, including by comparing responses from language models of different capacities \citep{kim2023aligning}, using different prompts to produce chosen and rejected responses \citep{yang2023rlcd}, or, most commonly, by sampling multiple responses from the same base model and using a human annotator, reward model, or model-based judge to identify preferred and dispreferred responses from the candidate set \citep{bai2022constitutional,guo2024direct,lee2024rlaif}. In this paper, we focus on the last setting, namely Best-of-$N$ preference data. Prior work has studied this setting empirically for reward learning: \citet{pace2024west} show that selecting the best and worst candidates from a pool can improve reward-model training, while \citet{yuan2024self} study settings in which the language model itself is also used to provide reward feedback. We ask a more foundational question: what reward function is induced when pairwise comparisons are generated through Best-of-$N$ sampling?

\section{Problem Setup}

In practice, Best-of-$N$ (BoN) preference data pipelines may rely on human annotators, reward models, or LLM judges to identify preferred responses. We study a clean theoretical abstraction of this setting in which preferences are induced by an underlying latent reward function $r^*$, even though $r^*$ itself is never observed. This abstraction allows us to ask what reward function is learned from the resulting comparison data.

Let $\cX$ be a context space and $\cY$ be a set of outcomes. Assume that $\cX$ and $\cY$ are finite but can be exponentially large. Let $\rstar: \cX \times \cY \to \mathbb{R}$ be a target latent reward function. The supervision signal is in the form of pairwise comparison data where triplets $(x, y^+, y^-)$ indicate that a chosen outcome $y^+$ is preferred over a rejected outcome $y^-$ given a context $x$ i.e. the latent reward $r^*(x, y^+) > r^*(x, y^-)$.

To construct the pairwise preference data, the chosen outcome $y^+$ is sampled from the Best-of-$N$ mechanism. First, we sample $N$ outcomes from a base distribution $y_1, \dots, y_N \sim P_{\text{base}}(\cdot \mid x)$, the outcome with the highest reward score is the chosen outcome, $y^+ = \arg\max_{y_i} r^*(x, y_i)$. This procedure only requires access to an oracle that can identify the best outcome among $N$ candidates, rather than access to the reward scores $r^*(x, y_i)$ themselves. This is a common approach in practice to construct pairwise preference data. For the rejected outcome $y^-$, two dominant approaches are
\begin{itemize}
    \item \textbf{Best-vs-Random} \citep{khaki2024rs, ivison2024unpacking}: we sample $y^-$ from $\{y_1, \dots, y_N\} \setminus \{y^+\}$ uniformly at random.
    \item \textbf{Best-vs-Worst} \citep{pace2024west, yuan2024self}: the outcome with the lowest reward score among $\{y_1, \dots, y_N\}$ is the rejected outcome, $y^- = \arg\min_{y_i} r^*(x, y_i)$. This may require another oracle call to identify the worst outcome.
\end{itemize}
Whenever we observe a collision $y^+ = y^-$, we discard the triplet. Best-vs-Random and Best-vs-Worst guarantee that the latent reward score $y^+$ is always higher than $y^-$. 
Let $P$ be the probability distribution of the BoN preference data. Let $\mathcal{H}$ be a hypothesis class of reward functions. A common learning objective for reward learning is Bradley-Terry (BT) objective given by
\begin{equation}
    \label{eq: population-risk}
    \mathcal{L}_P(r) = \mathbb{E}_{P} \left[ - \log \sigma(r(x,y^+) - r(x,y^-)) \right]
\end{equation}
when $\sigma(t) = \frac{1}{1+e^{-t}}$ is the sigmoid function. Since the loss only depends on the difference between the reward scores of the chosen and rejected outcomes, $r(x,y^+) - r(x,y^-)$, the reward is identifiable only up to an additive constant. To fix this ambiguity, we restrict $\mathcal{H}$ to normalized reward functions satisfying $\mathbb{E}_{(x,y) \sim \mu}[r(x,y)] = 0$ for a fixed reference distribution $\mu$ over $\cX \times \cY$. For example, this can be a uniform distribution. Finally, we define some notations that will be useful for our analysis. First, score quantiles $F(x,y)$ capture the probability that a random outcome from the base distribution $P_{\text{base}}(\cdot \mid x)$ has a lower or strictly lower reward score than the reward of $y$.

\begin{definition}[Strict and Inclusive Score Quantiles]
    For any outcome $y\in \cY$, define the \emph{strict quantile}
    \begin{equation}
    \label{eq:strict-quantile}
        F^{-}(x,y)
        := \Pr_{y'\sim P_{\text{base}}(\cdot \mid x)}[\,r^*(x,y') < r^*(x,y)\,]
    \end{equation}
    as the probability that a randomly drawn outcome from the base distribution has strictly lower reward than $y$. We also define the \emph{inclusive quantile}
    \begin{equation}
    \label{eq:inclusive-quantile}
        F(x,y)
        := \Pr_{y'\sim P_{\text{base}}(\cdot \mid x)}[\,r^*(x,y') \le r^*(x,y)\,]
        = F^{-}(x,y) + \pbase.
    \end{equation}
    \end{definition}

We also define pairwise margin,
\begin{definition}[Margin]
    For any reward function $r$ and any context $x$ and outcomes $y, y'$,  the pairwise margin of $r$ is given by
    \begin{equation}
        \label{eq:pairwise-margin}
            \Delta_r(x;y,y') := r(x,y)-r(x,y')
        \end{equation}
    \end{definition}

With these definitions in hand, we turn to our first question: what reward function does BoN preference data induce?

\section{Characterizing the Learned Reward}
\label{sec:characterizing-learned-reward}

We begin with the infinite data setting and analyze the minimizer of the population risk (\eqref{eq: population-risk}). For analysis, we first define reference variants \textbf{Best-vs-Independent} and \textbf{Best-vs-Worst-Independent}, where the negative sample is drawn independently from the base distribution and the Worst-of-N sampling respectively. These variants have a much cleaner analysis and we will later on connect it back to the practical settings of Best-vs-Random and Best-vs-Worst. We will use subscript $BR, BW, BI, BWI$ to denote if the data distribution and the reward function corresponds to Best-vs-Random, Best-vs-Worst, Best-vs-Independent and Best-vs-Worst-Independent respectively.

\subsection{Independent Rejected Outcome}

We start with the Best-vs-Independent scenario. The rejected outcome is independent of the chosen one, we can factorize the probability mass of $\PBI$ as
\begin{equation}
    \PBI(x, y^+, y^-) = P(x) \pBoN(y^+ \mid x) P_{\text{base}}(y^- \mid x)
\end{equation}
where $\pBoN(y \mid x)$ is the conditional distribution of the outcome sampled with the Best-of-N sampling. We can derive $\pBoN(y \mid x)$ in terms of score quantiles.

\begin{restatable}[]{lemma}{BoNDistribution}
\label{lemma: bon-closed-form}    
Assuming that there is no tie in reward score, for any context $x$ and outcome $y$, the probability mass of $\pBoN$ is given by
    \begin{equation}
        \label{eq:bon-closed-form}
        \pBoN(y \mid x) = F(x,y)^N - F^{-}(x,y)^N
        \end{equation}
    \end{restatable}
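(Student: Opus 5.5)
The plan is to compute $\pBoN(y \mid x)$ by writing the event $\{y^+ = y\}$ as a difference of two events about the maximum reward among the $N$ i.i.d.\ draws, each of which has a simple product form. Fix $x$ and $y$, and let $y_1,\dots,y_N$ be i.i.d.\ from $P_{\text{base}}(\cdot \mid x)$. Since there are no ties in reward score, $r^*(x,\cdot)$ is injective on $\cY$. Two consequences follow. First, $\arg\max_{y_i} r^*(x,y_i)$ is a well-defined outcome even when some draws repeat (repeated draws are the same outcome, so the maximizer is unambiguous). Second, the only outcome with reward equal to $r^*(x,y)$ is $y$ itself, so $\Pr[r^*(x,y') = r^*(x,y)] = \pbase$, consistent with the definition of $F$.

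Next, identify the event $\{y^+ = y\}$ with $\{\max_i r^*(x,y_i) = r^*(x,y)\}$; injectivity makes these the same event. This in turn equals
\[
\{\max_i r^*(x,y_i) \le r^*(x,y)\} \setminus \{\max_i r^*(x,y_i) < r^*(x,y)\}.
\]
The second event is contained in the first, so the probability of the difference is the difference of the probabilities.

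By independence,
\[
\Pr\Bigl[\max_i r^*(x,y_i) \le r^*(x,y)\Bigr] = \prod_{i=1}^N \Pr\bigl[r^*(x,y_i)\le r^*(x,y)\bigr] = F(x,y)^N,
\]
and in the same way the strict version equals $F^{-}(x,y)^N$. Subtracting the two gives \eqref{eq:bon-closed-form}.

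There is no real obstacle here. The only point needing care is the role of the no-tie assumption. It guarantees that the argmax is unique as an outcome and that "maximum reward equals $r^*(x,y)$" pins down $y$ specifically; with ties, that event would spread mass across all tied outcomes, and a tie-breaking rule would be needed.
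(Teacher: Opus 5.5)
Your proposal is correct and follows the same route as the paper. Both use the no-tie assumption to identify $\{y^+ = y\}$ with $\{\max_i r^*(x,y_i) = r^*(x,y)\}$, write that event as the difference of $\{\max_i r^*(x,y_i) \le r^*(x,y)\}$ and $\{\max_i r^*(x,y_i) < r^*(x,y)\}$, and use independence of the $N$ draws to evaluate these as $F(x,y)^N$ and $F^{-}(x,y)^N$; your explicit remark that the strict event is contained in the non-strict one, so the probabilities subtract, fills in a step the paper leaves implicit.
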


Positive-negative conditional independence under $\PBI$ is useful because \citet{pukdee2026doespreferencelearningrecover} showed that it is exactly the condition under which the induced pairwise preference admits a Bradley--Terry representation. In other words, the BT form is not merely a modeling assumption here, but the true functional form of $\Pr_{\text{BI}}(y \succ y' \mid x)$, the conditional probability of preferring $y$ over $y'$ given $x$:
\begin{equation}
\Pr_{\text{BI}}(y \succ y' \mid x) := \frac{\PBI(x,y,y')}{\PBI(x,y,y') + \PBI(x,y',y)} = \sigma(\rstarbi(x,y) - \rstarbi(x,y'))
\end{equation}
when $\rstarbi$ is the population minimizer. \citet{pukdee2026doespreferencelearningrecover} also showed that the minimizer is given by the log ratio between the distribution of the chosen and rejected responses. These results allow us to characterize the target reward function $\rstarbi$ under $\PBI$.

\begin{restatable}[Identification of $\rstarbi$]{theorem}{Identificationrstarbi}
    \label{thm:identification-rstarbi}
Let $ \rstarbi \in \arg\min_{r} \mathcal{L}_{\PBI}(r)$ then there exists a function $c : \cX \to \mathbb{R}$ such that
\begin{equation}
    \rstarbi(x,y) = \log \frac{\pBoN(y \mid x)}{P_{\text{base}}(y \mid x)} + c(x)
\end{equation}
for almost every pair $(x,y)$ with respect to $\PBI$. Furthermore, $\rstarbi$ preserves the ranking induced by $\rstar$ that is for any $x, y, y'$ with non-zero probability mass,
\begin{equation} 
    \label{eq:bon-ranking-preservation}
    r^*(x, y) > r^*(x, y') \implies \rstarbi(x, y) > \rstarbi(x, y')
    \end{equation}
\end{restatable}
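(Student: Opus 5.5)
The proof has two parts: identifying the minimizer, and then showing it preserves the ranking.

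\textbf{Step 1: identification.} Since $\PBI$ factorizes as $P(x)\pBoN(y^+\mid x)P_{\text{base}}(y^-\mid x)$, the chosen and rejected outcomes are conditionally independent given $x$. By the result of \citet{pukdee2026doespreferencelearningrecover}, any population minimizer then equals the log ratio of the chosen and rejected marginals, up to a context-dependent constant. This gives $\rstarbi(x,y)=\log\frac{\pBoN(y\mid x)}{P_{\text{base}}(y\mid x)}+c(x)$ almost everywhere under $\PBI$.

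To keep the argument self-contained, I would also verify this directly. Condition on $x$ and on an unordered pair $\{y,y'\}$. The pairwise conditional loss is a logistic loss in $\Deltar(x;y,y')$. It is minimized exactly when
\[
\sigma(\Deltar(x;y,y')) = \frac{\PBI(x,y,y')}{\PBI(x,y,y')+\PBI(x,y',y)}.
\]
Under the product form, the log-odds factor as
\[
\log\frac{\pBoN(y\mid x)P_{\text{base}}(y'\mid x)}{\pBoN(y'\mid x)P_{\text{base}}(y\mid x)}.
\]
This is a difference of the proposed reward evaluated at $y$ and at $y'$. The candidate therefore attains the pointwise minimum of every pair term simultaneously, so it is a global minimizer.

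Conversely, any other minimizer must match these log-odds on every pair with positive mass. This pins down $r$ on each connected support component up to an additive $c(x)$. I would note that the pairs with positive mass under $\PBI$ are exactly $\mathrm{supp}\,\pBoN\times\mathrm{supp}\,P_{\text{base}}$. The support of $\pBoN$ is contained in that of $P_{\text{base}}$, and each supported $y$ is compared to every other supported $y'$. So the comparison graph is complete on the support, and there is a single constant per context.

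\textbf{Step 2: ranking preservation.} This is the substantive step. By Lemma~\ref{lemma: bon-closed-form}, writing $p=\pbase>0$, $a=F^-(x,y)$ and $F=a+p$, we have
\[
\frac{\pBoN(y\mid x)}{p}=\frac{(a+p)^N-a^N}{p}=\sum_{k=0}^{N-1}(a+p)^k a^{N-1-k}.
\]
I would define $g(a,b)=\frac{b^N-a^N}{b-a}=\sum_k b^k a^{N-1-k}$ for $a<b$. This is the average slope of $t\mapsto t^N$ on $[a,b]$, and it is nondecreasing in each of $a$ and $b$.

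Now suppose $r^*(x,y)>r^*(x,y')$. With no ties, $F(x,y')\le F^-(x,y)$. So both endpoints of the interval for $y'$, namely $[F^-(x,y'),F(x,y')]$, are at most the corresponding endpoints for $y$, namely $[F^-(x,y),F(x,y)]$. Monotonicity then gives $g$ at $y$ $\ge$ $g$ at $y'$.

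The main obstacle is making this inequality strict. Because $t^N$ is strictly convex for $N\ge2$, the average slope over $[a,b]$ is at least $N a^{N-1}$ and at most $N b^{N-1}$, and one of these inequalities is strict. Using the chain $g(y')\le N F(x,y')^{N-1}\le N F^-(x,y)^{N-1}\le g(y)$, at least one link is strict whenever $p>0$. The degenerate case $N=1$ gives a constant ratio, so the claim needs $N\ge 2$, and I would state that assumption explicitly. Taking logarithms and adding the common $c(x)$ then yields \eqref{eq:bon-ranking-preservation}.
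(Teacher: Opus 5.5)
Your proposal is correct and follows essentially the paper's route. You identify $\rstarbi$ as the log ratio via the conditional-independence result of \citet{pukdee2026doespreferencelearningrecover}, and your extra pointwise-logistic check with complete connectivity is sound. Strictly, completeness needs the two supports to be equal, not merely nested, which holds since $\pBoN(y\mid x)\ge P_{\text{base}}(y\mid x)^N$. You then expand $\pBoN/P_{\text{base}}$ as $\sum_{j}F^{N-1-j}(F^{-})^{j}$ and use monotonicity, as the paper does. The one difference is strictness: the paper simply notes that both $F$ and $F^{-}$ strictly increase along the reward order, so the positive-coefficient polynomial strictly increases. Your convexity chain also works, but only because both average-slope bounds are strict for $N\ge 2$ and $a<b$, not just one as you wrote, since the middle link $F(x,y')\le F^{-}(x,y)$ can be an equality. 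Your $N\ge 2$ caveat is one the paper's argument tacitly needs too.
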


In other words, even though we never observe $r^*$, minimizing the BT loss on BoN data recovers a reward function that agrees with $r^*$ on every pairwise comparison. Similarly, for the Best-vs-Worst-Independent, by conditional independence, we can also factorize the distribution $\PBWI$ as 
\begin{equation}
    \PBWI(x, y^+, y^-) = P(x) \pBoN(y^+ \mid x) \pWoN(y^- \mid x)
\end{equation}
and derive a similar characterization for $\rstarbwi$.
\begin{restatable}[]{lemma}{WoNDistribution}
    \label{lemma: won-closed-form}
    Assume that there is no tie in reward score, for any context $x$ and outcome $y$, the probability mass of $\pWoN$ is given by
    \begin{equation}
        \label{eq:won-closed-form}
        \pWoN(y \mid x) = (1 - F^{-}(x,y))^N - (1 - F(x,y))^N
        \end{equation}
\end{restatable}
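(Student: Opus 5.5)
The plan is to mirror the argument for Lemma~\ref{lemma: bon-closed-form}, expressing the Worst-of-$N$ event through tail probabilities instead of cumulative ones. Fix a context $x$ and an outcome $y$, and draw $y_1,\dots,y_N$ i.i.d.\ from $P_{\text{base}}(\cdot\mid x)$. Under the no-tie assumption, every outcome $y'\neq y$ with positive base mass satisfies either $r^*(x,y')<r^*(x,y)$ or $r^*(x,y')>r^*(x,y)$. Consequently the level set $\{y' : r^*(x,y')=r^*(x,y)\}$ carries base mass exactly $\pbase$. The event that the worst candidate is $y$ then has a clean description: every $y_i$ has reward at least $r^*(x,y)$, and at least one $y_i$ equals $y$.

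The first step computes the probability that all candidates have reward $\ge r^*(x,y)$. For a single draw this probability is $\Pr[r^*(x,y')\ge r^*(x,y)] = 1-F^{-}(x,y)$. By independence, all $N$ draws satisfy it with probability $(1-F^{-}(x,y))^N$.

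The second step subtracts the event that all candidates have reward strictly greater than $r^*(x,y)$. For a single draw this has probability $1-F(x,y)$, so for all $N$ draws it is $(1-F(x,y))^N$. The strict event is contained in the weak one. Moreover, the difference of the two events is exactly the event that the minimum reward among the candidates equals $r^*(x,y)$. By the no-tie assumption, that event is the same as the event that $\min_i r^*(x,y_i)$ is attained at $y$, so that $\arg\min_{y_i} r^*(x,y_i)=y$. Taking the difference of probabilities gives $\pWoN(y\mid x) = (1-F^{-}(x,y))^N-(1-F(x,y))^N$.

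The only real subtlety is the identification in the second step. We must check that the event ``minimum reward equals $r^*(x,y)$'' is exactly the event ``the argmin is $y$.'' This requires that no other outcome with positive base mass shares $y$'s reward, which is precisely what the no-tie hypothesis supplies. When $y$ is drawn several times, the argmin is still $y$, so the $\arg\min$ is well defined as an outcome.

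As a consistency check, one could alternatively apply Lemma~\ref{lemma: bon-closed-form} to the reward $-r^*$. Under this map the strict quantile becomes $1-F(x,y)$ and the inclusive quantile becomes $1-F^{-}(x,y)$, so the Best-of-$N$ formula for $-r^*$ yields exactly the stated formula. I would mention this reduction as a one-line alternative proof.
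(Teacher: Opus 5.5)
Your proof is correct and follows the paper's argument: the Worst-of-$N$ probability is ``all $N$ candidates have reward $\ge r^*(x,y)$'' minus ``all have reward $> r^*(x,y)$'', which by independence is $(1-F^{-}(x,y))^N-(1-F(x,y))^N$, and no ties makes this the event that the argmin is $y$. Your alternative of applying Lemma~\ref{lemma: bon-closed-form} to $-r^*$ is also valid, since it swaps the quantiles into $1-F(x,y)$ and $1-F^{-}(x,y)$ as you state.
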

\begin{restatable}[Identification of $\rstarbwi$]{theorem}{Identificationrstarbwi}
    \label{thm:identification-rstarbwi}
Let $ \rstarbwi \in \arg\min_{r} \mathcal{L}_{\PBWI}(r)$ then there exists a function $c : \cX \to \mathbb{R}$ such that
\begin{equation}
    \rstarbwi(x,y) = \log \frac{\pBoN(y \mid x)}{\pWoN(y \mid x)} + c(x)
\end{equation}
for almost every pair $(x,y)$. Furthermore, $\rstarbwi$ preserves the ranking induced by $\rstar$ that is for any $x, y, y'$ with non-zero probability mass,
\begin{equation} 
    \label{eq:bon-ranking-preservation-bwi}
    r^*(x, y) > r^*(x, y') \implies \rstarbwi(x, y) > \rstarbwi(x, y')
    \end{equation}
\end{restatable}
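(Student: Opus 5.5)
The plan mirrors the proof of Theorem~\ref{thm:identification-rstarbi}. The argument has two parts: identifying the minimizer through the cited CPRD framework, and an elementary monotonicity argument for ranking preservation.

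\textbf{Identification.} Under $\PBWI$ the chosen and rejected outcomes are conditionally independent given $x$, with marginals $\pBoN(\cdot\mid x)$ and $\pWoN(\cdot\mid x)$. By the result of \citet{pukdee2026doespreferencelearningrecover}, the induced preference is then Bradley--Terry representable:
\[
\Pr_{\text{BWI}}(y\succ y'\mid x)=\frac{\pBoN(y\mid x)\pWoN(y'\mid x)}{\pBoN(y\mid x)\pWoN(y'\mid x)+\pBoN(y'\mid x)\pWoN(y\mid x)}=\sigma\bigl(s(x,y)-s(x,y')\bigr),
\]
where $s=\log(\pBoN/\pWoN)$. The same framework says the population BT risk is minimized exactly when the margins $\Delta_r(x;y,y')$ match these log-odds on every pair that appears with positive probability. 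Hence $r$ and $s$ differ by a function of $x$ alone, i.e. $r=s+c(x)$ for almost every pair. If a self-contained argument is preferred, one can instead write the risk pairwise: for each unordered pair $\{y,y'\}$, the contribution is a weighted logistic loss in $\Delta_r(x;y,y')$, strictly convex with unique minimizer $\log\frac{\PBWI(x,y,y')}{\PBWI(x,y',y)}$. This log-ratio factorizes as $s(x,y)-s(x,y')$, so it is attainable, and any minimizer must match it on the support.

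\textbf{Ranking preservation.} Fix $x$, and suppose $r^*(x,y)>r^*(x,y')$ with no ties. Both $y$ and $y'$ need positive base mass, which is what ``non-zero probability mass'' requires. Then both $\pBoN$ and $\pWoN$ are positive at $y$ and $y'$, so $s$ is finite there, using Lemmas~\ref{lemma: bon-closed-form} and~\ref{lemma: won-closed-form}. Since $c(x)$ cancels in the difference, it suffices to show that $\pBoN/\pWoN$ is strictly increasing along the reward order.

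\begin{itemize}
\item Write $a=F^-(x,y)$, $b=F(x,y)$ for $y$, and $a'=F^-(x,y')$, $b'=F(x,y')$ for $y'$. Absence of ties gives $b'\le a$, so the interval $[a',b']$ lies weakly to the left of $[a,b]$.
\item By the mean value theorem, $\pBoN=b^N-a^N=N\xi^{N-1}(b-a)$ for some $\xi\in(a,b)$.
\item Similarly, $\pWoN=(1-a)^N-(1-b)^N=N(1-\eta)^{N-1}(b-a)$ for some $\eta\in(a,b)$.
\item A cleaner route avoids the two intermediate points: write
\[
\frac{\pBoN}{\pWoN}=\frac{\int_a^b t^{N-1}\,dt}{\int_a^b (1-t)^{N-1}\,dt}.
\]
Each integrand is now compared over intervals ordered left to right.
\end{itemize}

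The main obstacle is the final comparison, because it needs a strict inequality. Let $g(t)=t^{N-1}$, which is increasing, and $h(t)=(1-t)^{N-1}$, which is decreasing. For $t'\le b'\le a\le t$ we have $g(t)/h(t)\ge g(t')/h(t')$, with strict inequality whenever $t>t'$ and $N\ge2$. I would cross-multiply to reduce the claim to
\[
\iint g(t)h(t')-g(t')h(t)\,dt\,dt'>0
\]
over $[a,b]\times[a',b']$. The integrand is nonnegative everywhere and strictly positive on a set of positive measure, because $b>a$ and $b'>a'$. This gives the strict inequality. The case $N=1$ is degenerate: the ratio is constant, so the statement implicitly assumes $N\ge2$, and I would note this.
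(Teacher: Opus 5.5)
Your proposal is correct and follows essentially the paper's route. Identification comes from positive--negative conditional independence and the cited CPRD results, with $\pWoN$ replacing $P_{\text{base}}$. Ranking preservation comes from cancelling the common factor $F - F^{-} = P_{\text{base}}$; your $\int_a^b t^{N-1}\,dt$ is the continuous form of the paper's factorization $\sum_{j} F^{N-1-j}(F^{-})^{j}$. The paper then notes that the reduced numerator strictly increases and the reduced denominator strictly decreases along the reward order, which is a slightly more direct version of your cross-multiplied double-integral comparison.

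Your remark that strictness needs $N \ge 2$ is correct and applies to the paper's argument too, since for $N=1$ both reduced sums equal $1$.
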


\subsection{Best-vs-Random and Best-vs-Worst}

Next, we turn to the Best-vs-Random and Best-vs-Worst scenarios where the chosen outcome and rejected outcome are no longer independent. Without independence, we show that the population risk minimizer with respect to $\PBR$ and $\PBW$ does not have a closed-form minimizer anymore. We introduce a notion of comparison distribution to help with the analysis. This captures how often a pair of outcomes are compared together for a given context, regardless of which is preferred.
\begin{definition}[Comparison distribution]
    For any distribution $P$ over $\mathcal{X} \times \mathcal{Y} \times \mathcal{Y}$, the comparison distribution $\widetilde P$ is a distribution over $x$ and an unordered pair $\{y,y'\}$ with $y\neq y'$ with density
    \begin{equation}
        \widetilde P(x, \{y,y'\} ) \propto P(x,y,y') + P(x,y',y)
    \end{equation}
    \end{definition}

    \begin{restatable}[Best-vs-Random and Best-vs-Worst perform pairwise margin maximization]{proposition}{ObjectiveForBRandBW}
        Assume that there is no tie in reward score.
        For each context and pair of outcomes $(x,\{y,y'\})$, define
        \[
        y_{\max} \in \arg\max_{u\in\{y,y'\}} r^*(x,u),
        \qquad
        y_{\min} \in \arg\min_{u\in\{y,y'\}} r^*(x,u)
        \]
        Then for any hypothesis class $\cH$, the population risk minimizer $\rstarbr \in \arg\min_{r \in \cH} \mathcal{L}_{\PBR}(r)$ satisfies
        \[
        \rstarbr \in \arg\min_{r\in\cH}\;
        \mathbb{E}_{(x,\{y,y'\})\sim \widetilde P_{\mathrm{BR}}}
        \Big[-\log \sigma\!\big(r(x,y_{\max})-r(x,y_{\min})\big)\Big]
        \]
        The same result holds for $\rstarbw$ by replacing the comparison distribution $\widetilde P_{\mathrm{BR}}$ with $\widetilde P_{\mathrm{BW}}$.
        In particular, both objectives are the same pairwise margin-maximization loss, differing only in the comparison distribution.
        \end{restatable}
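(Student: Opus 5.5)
The plan is to rewrite $\mathcal{L}_{\PBR}(r)$ so that it becomes an expectation over unordered pairs under the comparison distribution $\PBRunnorm$. After that rewriting, the claimed objective appears directly, up to a positive multiplicative constant. Multiplying by such a constant does not change the set of minimizers over any class $\cH$, so the statement follows.

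\textbf{Key support fact.} Under Best-vs-Random, $y^+$ is the argmax of $r^*$ over the candidate set. The rejected $y^-$ is another element of that same set, and collisions are discarded. With no ties, every triplet in the support of $\PBR$ therefore satisfies $r^*(x,y^+) > r^*(x,y^-)$.

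Now fix an unordered pair $\{y,y'\}$ with $y\neq y'$, and write $y_{\max}, y_{\min}$ as in the statement. The fact above gives $\PBR(x,y_{\min},y_{\max}) = 0$. Hence
\[
\PBR(x,y_{\max},y_{\min}) = \PBR(x,y,y') + \PBR(x,y',y) = Z\,\PBRunnorm(x,\{y,y'\}),
\]
where $Z = \sum_{x,\{u,u'\}} \big(\PBR(x,u,u') + \PBR(x,u',u)\big)$ is the normalizing constant of $\PBRunnorm$.

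\textbf{Regrouping the sum.} Since $\cX$ and $\cY$ are finite, the population risk is a finite sum over ordered triplets $(x,y^+,y^-)$ with $y^+\neq y^-$. I would group these triplets by context and unordered pair. Each group $\{(x,y,y'),(x,y',y)\}$ contributes
\[
\PBR(x,y,y')\,\ell(r(x,y)-r(x,y')) + \PBR(x,y',y)\,\ell(r(x,y')-r(x,y)),
\]
with $\ell(t) = -\log\sigma(t)$. Only the ordering $(y_{\max},y_{\min})$ has nonzero mass, so the group contribution equals $Z\,\PBRunnorm(x,\{y,y'\})\,\ell\big(r(x,y_{\max})-r(x,y_{\min})\big)$. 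Summing over all groups gives
\[
\mathcal{L}_{\PBR}(r) = Z\,\mathbb{E}_{(x,\{y,y'\})\sim\PBRunnorm}\big[\ell\big(r(x,y_{\max})-r(x,y_{\min})\big)\big]
\]
for every $r$. Here $Z = 1$, because collisions have already been discarded; if instead collisions were kept with zero loss, $Z$ would still be a positive constant independent of $r$. The two objectives therefore have identical argmin sets over $\cH$.

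\textbf{Best-vs-Worst.} The argument is the same. Under Best-vs-Worst, $y^-$ is the argmin over the candidate set, so again $r^*(x,y^+) > r^*(x,y^-)$ on the support whenever the two differ. The only facts used are this support property and the finiteness of the sum, and both hold for any joint law of $(y^+,y^-)$ that always places the higher-reward element first. The specific coupling structure of BR and BW never enters, which is why they differ only through the comparison distribution.

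\textbf{Main obstacle.} Technically there is no serious obstacle. The step needing the most care is the support claim: one must justify that the no-tie assumption, combined with discarding collisions, rules out the reversed orientation. Without it, the group contribution would be a mixture of both orientations rather than the single margin term.
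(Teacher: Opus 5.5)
Your proposal is correct and follows essentially the paper's argument. The paper cites the decomposition of the BT loss as a cross-entropy between the CPRD and $\sigma(\Delta_r)$ under the comparison distribution $\widetilde P_{\mathrm{BR}}$ (Theorem 5.2 of Pukdee et al.), then notes that the CPRD is $0$ or $1$ because the chosen response always has higher latent reward; your regrouping of ordered triplets into unordered pairs, combined with the support fact $\PBR(x,y_{\min},y_{\max})=0$, is exactly that decomposition carried out by hand, so your version is self-contained and also shows the two objectives are literally equal ($Z=1$).
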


The proposition shows that for any pair of outcomes $(\{y,y'\})$, the BT objective maximizes the margin between the outcomes with higher and lower true reward scores. As a result, if $\cH$ is unbounded, then we can decrease the BT objective by increasing the margin on any pair indefinitely, $\Delta_r(x, y_{\max}, y_{\min}) \to \infty$, which leads to an unbounded minimizer. As such, we do not have a nice closed-form minimizer for $\rstarbr$ and $\rstarbw$, and the minimizer depends on properties of $\cH$. Regardless, we can still show that $\rstarbr \to \rstarbi$ and $\rstarbw \to \rstarbwi$ in margin semi-norm as $N$ grows.

\begin{definition}[Margin semi-norm]
    For a preference distribution $P$, the margin semi-norm between two reward functions $r_1$ and $r_2$ is defined as
\begin{equation}
\lVert r_1 - r_2 \rVert_{\Delta, P} = \mathbb{E}_{(x,y,y')\sim P} \left[ (\Delta_{r_1} - \Delta_{r_2})^2 \right]
\end{equation}

\end{definition}

  \begin{restatable}[$\rstarbr$ is close to $\rstarbi$]{theorem}{Qualityofrstarbr}
    \label{thm:quality-rstarbr}
    Let $\cH$ be a convex hypothesis class that is bounded by some constant $B$. Let $\rstarbr \in \arg\min_{r \in \cH} \mathcal{L}_{\PBR}(r)$  then  there exists a constant $c_B$ such that for any $N \geq 2$,
    \begin{equation}
        \lVert \rstarbr - \rstarbi \rVert_{\Delta, \PBR} \leq \frac{c_B}{N-1}
    \end{equation}
  \end{restatable}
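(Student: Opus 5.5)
We compare the two population risks $\LBR$ and $\LBI$ on the bounded class $\cH$ and convert a small risk gap into a small margin distance via strong convexity. Throughout, $\rstarbi$ is treated as an element of $\cH$ (or the comparison is made with the $\cH$-minimizer of $\LBI$). Since $\cH$ is bounded by $B$, every margin lies in $[-2B,2B]$. On this interval the logistic loss $\ell(t)=-\log\sigma(t)$ is $L_B$-Lipschitz and $m_B$-strongly convex, with $m_B=\sigma(2B)\sigma(-2B)$. The proof has three steps.

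\textbf{Step 1 (coupling).} We bound $\TV(\PBR,\PBI)\le C/(N-1)$, possibly after a reweighting. Best-vs-Random can be realized by drawing $y_1,\dots,y_N$, taking $y^+$ as the best, and taking $y^-$ uniform over the other $N-1$ candidates. Conditional on $y^+$, the other $N-1$ candidates are i.i.d.\ draws from $P_{\text{base}}$ conditioned on lying below $y^+$. By contrast, $\PBI$ draws $y^-$ from $P_{\text{base}}$ without conditioning. The conditioning costs a factor $1/F(x,y^+)$ on the support below $y^+$, and the lost mass is $1-F(x,y^+)$. Under $\pBoN$, $\bE[1-F(x,y^+)]$ is of order $1/(N+1)$ (uniform order statistic). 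This gives
\[
\TV(\PBR,\PBI)\lesssim \bE_{\pBoN}[1-F(x,y^+)] + \Pr[\text{collision}] = O(1/N).
\]
The collision term is also $O(1/N)$ once we account for discarding the triplet $y^+=y^-$ and renormalizing. The bound should be stated in whichever direction is needed below.

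\textbf{Step 2 (risk comparison).} Because $|\ell(\Delta_r)|\le \log(1+e^{2B})$ for every $r\in\cH$, we have
\[
|\LBR(r)-\LBI(r)|\le 2\log(1+e^{2B})\,\TV\le c/(N-1)\quad\text{uniformly on }\cH.
\]
It follows that $\LBR(\rstarbi)-\LBR(\rstarbr)\le 2c/(N-1)$, since $\rstarbi$ minimizes $\LBI$ and $\rstarbr$ minimizes $\LBR$ over $\cH$.

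\textbf{Step 3 (strong convexity).} $\LBR$ is the composition of $\ell$ with the linear margin map, and $\cH$ is convex. The first-order optimality of $\rstarbr$ over $\cH$ gives $\langle\nabla\LBR(\rstarbr),\rstarbi-\rstarbr\rangle\ge0$. Combining this with the second-order lower bound $\ell''\ge m_B$ yields
\[
\LBR(\rstarbi)-\LBR(\rstarbr)\ge \tfrac{m_B}{2}\,\bE_{\PBR}\big[(\Delta_{\rstarbi}-\Delta_{\rstarbr})^2\big]=\tfrac{m_B}{2}\lVert\rstarbr-\rstarbi\rVert_{\Delta,\PBR}.
\]
Chaining Steps 2 and 3 gives the bound with $c_B=4c/m_B$.

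\textbf{Main obstacle.} The hardest part is Step 1: proving the $O(1/(N-1))$ total-variation (or risk) bound cleanly. The first attempt will be to write
\[
\PBR(x,y^+,y^-)=P(x)\,\pBoN(y^+\mid x)\,\frac{P_{\text{base}}(y^-\mid x)\,\1[r^*(y^-)<r^*(y^+)]}{F^{-}(x,y^+)}.
\]
This form holds with no ties, and strictly only after removing $y^+$ from the candidate pool. From it one computes
\[
\TV=\bE_{\pBoN}\big[1-F^{-}(x,y^+)\big]\le \bE[1-U_{(N)}]+\bE[P_{\text{base}}(y^+\mid x)].
\]
Each of these terms is bounded by $O(1/N)$ via the order-statistic identity $\bE[1-U_{(N)}]=1/(N+1)$ for the first term and the BoN mass formula of Lemma~\ref{lemma: bon-closed-form} for the second.

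A further subtlety arises if $\rstarbi\notin\cH$. In that case the comparison must use the $\cH$-constrained minimizer of $\LBI$, or one assumes realizability. Separately, $\rstarbi$ can be unbounded where $P_{\text{base}}$ is tiny, so the boundedness of $\cH$ is what makes the Lipschitz constant $L_B$ and the modulus $m_B$ finite.
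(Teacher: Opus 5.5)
Your Steps 2 and 3 match the paper's argument: the uniform risk gap $2\log(1+e^{2B})\,\TV$, the $2\epsilon$ comparison of constrained minimizers, and strong convexity with modulus $\sigma(2B)(1-\sigma(2B))$ combined with first-order optimality over convex $\cH$. Your caveat about needing $\rstarbi\in\cH$ is also appropriate. The gap is in Step 1, which as written does not give $O(1/(N-1))$.

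The false claim is that the collision probability, or equivalently $\bE_{\pBoN}[P_{\text{base}}(y^+\mid x)]$, is $O(1/N)$. In a finite outcome space, BoN concentrates on the top outcome $y_m$. Hence $\bE_{\pBoN}[P_{\text{base}}(y^+\mid x)]\ge (1-(1-p_m)^N)p_m\to p_m>0$, and collisions occur with probability bounded away from zero under both mechanisms. Take two outcomes of base mass $1/2$ each: your bound $\bE_{\pBoN}[1-F^{-}(x,y^+)]$ tends to $1/2$, while the true $\TV(\PBR,\PBI)$ equals $2^{-N}$. The looseness comes from comparing $\PBR$, whose diagonal is removed, against $\pBoN\otimes P_{\text{base}}$ with its diagonal still present.

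There is a second problem. Your displayed form of $\PBR$ has the correct conditional law of $y^-$ given $y^+$ and no collision, but the wrong $y^+$-marginal once collisions are discarded. The exact unnormalized mass on correctly ordered pairs is $\frac{N}{N-1}P_{\text{base}}(y^-\mid x)\,\pBoNNminus(y^+\mid x)$. Its discrepancy from $\pBoN(y^+\mid x)P_{\text{base}}(y^-\mid x)$ must itself be controlled.

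The fix, which is the paper's route, only needs the collision mass to be bounded away from $1$, not small. Set the diagonal to zero in both unnormalized measures and compare them pair by pair. Order outcomes so that $r^*(x,y_1)<\dots<r^*(x,y_m)$, and write $F_i=F(x,y_i)$ and $p_i=P_{\text{base}}(y_i\mid x)$.
\begin{itemize}
\item On correctly ordered pairs ($i>j$), the difference is $(g(F_i)-g(F_{i-1}))\,p_j$, where $g(u)=\frac{N}{N-1}u^{N-1}-u^N$ is increasing on $[0,1]$. Summing over $i$ telescopes to at most $g(1)=\frac{1}{N-1}$.
\item On wrongly ordered pairs, $\PBR$ vanishes and the independent mass is $\sum_i p_iF_{i-1}^N\le\int_0^1u^N\,du=\frac{1}{N+1}$. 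This is your $\bE[1-F(x,y^+)]$ term.
\item Pass to the normalized distributions using $\TV(P,Q)\le\frac{2}{c}\TV(\widetilde P,\widetilde Q)$ for unnormalized measures $\widetilde P,\widetilde Q$ whose normalizers are both at least $c$. Here $Z_I\ge 1-\max_y P_{\text{base}}(y\mid x)$ and $Z_R\ge p_m^2$, both independent of $N$.
\end{itemize}
With this replacement for Step 1, your Steps 2 and 3 complete the proof.
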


  \begin{restatable}[$\rstarbw$ is close to $\rstarbwi$]{theorem}{Qualityofrstarbw}
    \label{thm:quality-rstarbw}
    Let $\cH$ be a convex hypothesis class that is bounded by some constant $B$. Let $\rstarbw \in \arg\min_{r \in \cH} \mathcal{L}_{\PBW}(r)$ then  there exists a constant $c_B$ and $\rho < 1$ such that for any $N \geq 2$,
    \begin{equation}
        \lVert \rstarbw - \rstarbwi \rVert_{\Delta, \PBW} \leq c_B\rho^N
    \end{equation}
 \end{restatable}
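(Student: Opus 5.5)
The plan is a perturbation argument. I would show that $\PBW$ and $\PBWI$ are exponentially close in total variation, and then turn closeness of the losses into closeness of margins using strong convexity of the logistic loss on a bounded range.

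Throughout I take $\rstarbwi$ to lie in $\cH$. This holds once $B$ is at least the sup-norm of the log-ratio in Theorem~\ref{thm:identification-rstarbwi}, which is finite because $\cX,\cY$ are finite and $\pBoN,\pWoN>0$ on the support. Otherwise I would read $\rstarbwi$ as the $\cH$-constrained minimizer of $\mathcal{L}_{\PBWI}$; that is all the argument below uses.

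\textbf{Step 1: total variation bound.} Fix $x$. Let $y_{\top}(x)$ and $y_{\bot}(x)$ be the maximal- and minimal-reward outcomes in the support of $P_{\text{base}}(\cdot\mid x)$, and let $p_{\min}>0$ be the smallest base mass over all supported $(x,y)$.
\begin{itemize}
\item Under $\PBW$, the pair is $(y_{\top},y_{\bot})$ unless the $N$ draws miss $y_{\top}$ or miss $y_{\bot}$. This happens with probability at most $2(1-p_{\min})^N$.
\item Under $\PBWI$, the Best-of-$N$ and Worst-of-$N$ draws are independent, and the same bound holds.
\end{itemize}
Hence both distributions put mass at least $1-2(1-p_{\min})^N$ on the same atom, and
\[
\TV(\PBW,\PBWI)\le 4(1-p_{\min})^N .
\]
This gives $\rho:=1-p_{\min}<1$. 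Discarding collisions only renormalizes by a factor $1-O(\rho^N)$, which is absorbed into the constant. If one wants a $\rho$ that does not degrade with $|\cY|$, the sharper route is to compare the explicit masses directly. For $r^*(x,y)>r^*(x,y')$, $\PBW$ has the order-statistic mass
\[
\Pr[\max=y,\min=y'] = \bigl(F(x,y)-F^-(x,y')\bigr)^N-\dots
\]
given by inclusion–exclusion, while $\PBWI$ is the product from Lemmas~\ref{lemma: bon-closed-form} and~\ref{lemma: won-closed-form}. One would then bound the difference termwise. I would present the simple coupling bound first.

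\textbf{Step 2: from loss gap to margin gap.} Write $\ell(t)=-\log\sigma(t)$. Every $r\in\cH$ has margins in $[-2B,2B]$, so on that range $\ell\le \log(1+e^{2B})=:M$ and $\ell''\ge \sigma(2B)\sigma(-2B)=:\kappa>0$. Therefore, for any bounded $r$,
\[
|\mathcal{L}_{\PBW}(r)-\mathcal{L}_{\PBWI}(r)|\le 2M\,\TV .
\]
Since $\rstarbw$ minimizes $\mathcal{L}_{\PBW}$ over $\cH$, we have $\mathcal{L}_{\PBW}(\rstarbw)\le\mathcal{L}_{\PBW}(\rstarbwi)$, and so
\[
\mathcal{L}_{\PBWI}(\rstarbw)-\mathcal{L}_{\PBWI}(\rstarbwi)\le 4M\,\TV .
\]
For the lower bound, expand $\ell$ to second order along the segment from $\rstarbwi$ to $\rstarbw$. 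The segment stays in $\cH$ by convexity, so the first-order optimality condition of $\rstarbwi$ makes the linear term nonnegative. This leaves
\[
\mathcal{L}_{\PBWI}(\rstarbw)-\mathcal{L}_{\PBWI}(\rstarbwi)\ \ge\ \tfrac{\kappa}{2}\,\lVert\rstarbw-\rstarbwi\rVert_{\Delta,\PBWI}.
\]

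\textbf{Step 3: change of measure.} The statement measures the semi-norm under $\PBW$. The integrand $(\Delta_{r_1}-\Delta_{r_2})^2$ is bounded by $16B^2$, so switching the measure from $\PBWI$ to $\PBW$ costs at most $32B^2\,\TV$. Combining the three steps gives
\[
\lVert\rstarbw-\rstarbwi\rVert_{\Delta,\PBW}\le\Bigl(\tfrac{8M}{\kappa}+32B^2\Bigr)\cdot 4\rho^N ,
\]
which is the claim with $c_B$ depending on $B$. The constant $4$ here is the slack in the Step~1 total variation bound.

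The main obstacle is Step 1. The coupling bound is clean but ties $\rho$ to the smallest base mass $p_{\min}$. A more intrinsic $\rho$ would require the order-statistic computation. The boundedness and optimization steps are routine.
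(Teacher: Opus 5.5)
Your argument is correct and is essentially the paper's. The paper also bounds $\TV(\PBW,\PBWI)$ by the triangle inequality through the point mass on the (best, worst) pair. Its explicit order-statistic formulas only evaluate the same ``miss the top or miss the bottom'' probabilities, giving $\rho=\max_x\max(1-p_{\top}(x),1-p_{\bot}(x))$; your coupling yields this same $\rho$ if you keep $p_{\top},p_{\bot}$ instead of loosening to $p_{\min}$. The paper then runs the same chain from TV to loss gap to strong convexity. The one difference is that it applies strong convexity of $\mathcal{L}_{\PBW}$ at its $\cH$-minimizer $\rstarbw$, with $\rstarbwi$ as the competitor, which gives the $\PBW$ semi-norm directly and makes your Step~3 change of measure unnecessary.

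One caution: your first justification for $\rstarbwi\in\cH$ cannot hold uniformly in $N$. By Proposition~\ref{thm:margin-rstarbwi}, the margins of the log-ratio grow linearly in $N$, so no fixed $B$ contains it for all $N$. Letting $B$ grow with $N$ instead makes $c_B\approx 2Be^{2B}$ exponentially large in $N$. It is your fallback reading of $\rstarbwi$ as the $\cH$-constrained minimizer of $\mathcal{L}_{\PBWI}$ under which the statement, and the paper's own argument, actually go through.
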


 The key step in the proof is to show that $\PBR$ and $\PBW$ are close in total variation distance to $\PBI$ and $\PBWI$, respectively. Establishing this requires deriving closed-form expressions for the probability mass functions of these distributions and then carrying out careful algebraic comparisons; this is the most technically involved part of the argument. 
 The convexity of $\cH$ is then used only to transfer this distributional closeness to closeness of the corresponding population minimizers, via a standard argument from learning theory. We defer the full proof to Appendix~\ref{app:characterizing-learned-reward}.

These results imply that, as $N$ grows, the margins induced by $\rstarbr$ and $\rstarbw$ become close to those induced by $\rstarbi$ and $\rstarbwi$, respectively. Since $\rstarbi$ and $\rstarbwi$ achieve perfect accuracy, and since the ranking is determined by the sign of the margin, it follows that the Best-vs-Random and Best-vs-Worst settings also approximately recover the ranking induced by $\rstar$. Having identified these as sensible population targets, we turn to the finite-sample setting, where a fundamental tradeoff emerges: increasing $N$ helps in one way, but hurts in another.

\textbf{Remark:} Unlike the independent variants, the pairwise preferences induced by $\PBR$ and $\PBW$ do not admit a BT form. However, the TV bounds above imply they approach those of the independent variants as $N$ grows, making BT an increasingly reasonable model.

\section{Sample Complexity}
\label{sec:bon-theory}

In practice, we only have access to a finite number of triplets, so even if the population risk minimizer is a good target, we must also ask how quickly we can learn it. We measure the performance of the learned reward in terms of the consistency of ordering with respect to the target reward function. We define the accuracy as follows.

\begin{definition}[Accuracy]
    Let $Q$ be a test distribution over $\cX \times \cY \times \cY$. The accuracy of a reward function $r$ with respect to $Q$ and $r^*$ is given by
    \begin{equation}
        \label{eq:accuracy-def}
        \operatorname{Acc}_Q(r) = \Pr_{\substack{x, y, y' \sim Q}} [\operatorname{sign}(\Deltar (x,y,y')) = \operatorname{sign}(\Deltarstar (x,y,y')) ]
    \end{equation}
    \end{definition}

Here, we fix the test distribution $Q$ to represent the downstream quantity of interest, while varying the training distribution $P$, which depends on $N$ and the base distribution $P_\text{base}$. This lets us assess the impact of the BoN preference distribution. We will focus on the Best-vs-Independent and Best-vs-Worst-Independent scenarios since they have clean target reward functions under BoN preference data $(\rstarbi, \rstarbwi)$ which preserve the ranking as $r^*$.  With finite samples, the learned reward $\hat{r}$ deviates from the target due to estimation error, and whether this error flips the ranking on a given pair depends on whether it exceeds the margin between them. \citet{pukdee2026doespreferencelearningrecover} formalized this intuition and showed that sample complexity is governed by two quantities: the margin and the connectivity of the comparison distribution. A larger margin between two pairs of outcomes implies that the correct ordering can tolerate a larger estimation error, and a larger connectivity leads to a smaller estimation error. We analyze how the BoN preference data shapes both quantities.

\subsection{Margin}

\begin{figure}[h]
    \centering
    \includegraphics[width=0.6\linewidth]{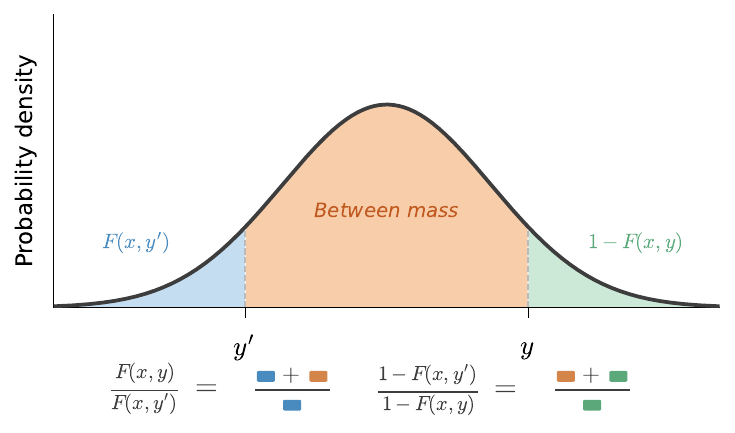}
    \caption{Illustration of the terms that contribute to the margin of $\rstarbi$ and $\rstarbwi$.}
    \label{fig:bon-margin-figure}
\end{figure}

First, we derive the margin of $\rstarbi$ and $\rstarbwi$ and show that they grow linearly with $N$.
\begin{restatable}[Margin of $\rstarbi$]{proposition}{Marginofrstarbi}
    \label{thm:margin-rstarbi}
Assuming no ties in reward score, for any context $x$ and pair of outcomes $y,y'$ with non-zero probability mass, the margin of $\rstarbi$ admits the decomposition
\begin{equation}
\label{eq:bon-margin-decomposition}
    \DeltarBI(x;y,y')
    ~=~
    N\log \frac{F(x,y)}{F(x,y')}
    ~+~ B_N(x;y,y'),
\end{equation}
where $B_N(x;y,y')$ is uniformly bounded in $N$. 
\end{restatable}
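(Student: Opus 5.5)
By Theorem~\ref{thm:identification-rstarbi}, the context-dependent constant $c(x)$ cancels in the margin. Combining this with Lemma~\ref{lemma: bon-closed-form}, the margin has the closed form
\[
\DeltarBI(x;y,y') = \log\frac{F(x,y)^N - F^{-}(x,y)^N}{F(x,y')^N - F^{-}(x,y')^N} - \log\frac{P_{\text{base}}(y\mid x)}{P_{\text{base}}(y'\mid x)} .
\]
The plan is to factor $F(x,y)^N$ out of the numerator and $F(x,y')^N$ out of the denominator. Writing $t_y := F^{-}(x,y)/F(x,y) \in [0,1)$, this gives
\[
\DeltarBI(x;y,y') = N\log\frac{F(x,y)}{F(x,y')} + \underbrace{\log\frac{1-t_y^N}{1-t_{y'}^N} - \log\frac{P_{\text{base}}(y\mid x)}{P_{\text{base}}(y'\mid x)}}_{=:B_N(x;y,y')} .
\]
We have $t_y<1$ because $F(x,y)-F^{-}(x,y)=P_{\text{base}}(y\mid x)>0$ for outcomes with non-zero mass. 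The leading term is exactly the one claimed, so it remains to show that $B_N$ is bounded uniformly in $N$.

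The second piece of $B_N$, the log base-probability ratio, does not depend on $N$ and is finite since both masses are positive. For the first piece, $t_y^N$ is nonincreasing in $N\ge 1$, so $1-t_y \le 1-t_y^N \le 1$. Hence
\[
|\log(1-t_y^N)| \le |\log(1-t_y)| = \log\frac{F(x,y)}{P_{\text{base}}(y\mid x)} ,
\]
and the same bound holds for $y'$. Therefore
\[
|B_N| \le \log\frac{F(x,y)}{P_{\text{base}}(y\mid x)} + \log\frac{F(x,y')}{P_{\text{base}}(y'\mid x)} + \left|\log\frac{P_{\text{base}}(y\mid x)}{P_{\text{base}}(y'\mid x)}\right| ,
\]
which is independent of $N$. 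As $N\to\infty$, $B_N$ converges to $-\log\frac{P_{\text{base}}(y\mid x)}{P_{\text{base}}(y'\mid x)}$.

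The argument has no serious obstacle. The only care needed is at the lowest-ranked outcome, where $F^{-}(x,y)=0$ gives $t_y=0$; the bound still holds there, since $1-t_y^N=1$. The no-ties assumption matters only because it is what lets us use Lemma~\ref{lemma: bon-closed-form}. I would also remark that when $r^*(x,y)>r^*(x,y')$ we have $F(x,y)\ge F(x,y')+P_{\text{base}}(y\mid x) > F(x,y')$, so the leading term is positive and grows linearly in $N$, consistent with the ranking preservation in Theorem~\ref{thm:identification-rstarbi}.
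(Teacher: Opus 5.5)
Your proof is correct and follows the paper's argument essentially step for step. Your $t_y = F^{-}(x,y)/F(x,y)$ is the paper's $\alpha_x(y) = 1 - P_{\text{base}}(y\mid x)/F(x,y)$, and the paper likewise bounds $\log(1-\alpha_x(y)^N)$ between $\log(1-\alpha_x(y))$ and $0$ using monotonicity in $N$. Your explicit $N$-independent bound on $|B_N|$ is just a more concrete statement of the same conclusion.
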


\begin{restatable}[Margin of $\rstarbwi$]{proposition}{Marginofrstarbwi}
    \label{thm:margin-rstarbwi}
Assume no ties in $r^*(x,\cdot)$.
For any context $x$ and pair of outcomes $y,y'$ with non-zero probability mass, the margin of $\rstarbwi$ admits the decomposition
\begin{equation}
\label{eq:bon-margin-decomposition-bwi}
    \DeltarBWI(x;y,y')
    = N \left(\log \frac{F(x,y)}{F(x,y')} + \log \frac{1 - F^{-}(x, y')}{1 - F^{-}(x,y)}\right) + B_N(x;y,y')
\end{equation}
where $B_N(x;y,y')$ is uniformly bounded in $N$.
\end{restatable}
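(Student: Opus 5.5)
We start from Theorem~\ref{thm:identification-rstarbwi}. The additive constant $c(x)$ cancels in a difference at a fixed context, so the margin is
\[
\DeltarBWI(x;y,y') = \log\frac{\pBoN(y\mid x)}{\pBoN(y'\mid x)} - \log\frac{\pWoN(y\mid x)}{\pWoN(y'\mid x)}.
\]
The plan is to expand each probability with Lemma~\ref{lemma: bon-closed-form} and Lemma~\ref{lemma: won-closed-form}. From each closed form we factor out the dominant $N$-th power, leaving a remainder factor whose logarithm stays bounded as $N$ varies.

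For the Best-of-$N$ part, write $F=F(x,y)$, $F^-=F^-(x,y)$ and $p=P_{\text{base}}(y\mid x)>0$. Then
\[
\pBoN(y\mid x)=F^N\bigl(1-(F^-/F)^N\bigr),
\]
so $\log\pBoN(y\mid x)=N\log F+\log\bigl(1-q^N\bigr)$ with $q=F^-/F=1-p/F\in[0,1)$.

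For the Worst-of-$N$ part, set $G^-=1-F^-(x,y)$ and $G=1-F(x,y)=G^--p$. Then
\[
\pWoN(y\mid x)=(G^-)^N\bigl(1-s^N\bigr),
\]
with $s=G/G^-\in[0,1)$, where $G^-\ge p>0$. Doing the same for $y'$ and subtracting gives exactly
\[
N\Bigl(\log\frac{F(x,y)}{F(x,y')}+\log\frac{1-F^-(x,y')}{1-F^-(x,y)}\Bigr)+B_N,
\]
with
\[
B_N=\log(1-q_y^N)-\log(1-q_{y'}^N)-\log(1-s_y^N)+\log(1-s_{y'}^N).
\]

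The main step is to show that $B_N$ is uniformly bounded in $N\ge1$. For any $t\in[0,1)$ the sequence $1-t^N$ is nondecreasing in $N$ and lies in $[1-t,1]$. Hence each term $|\log(1-t^N)|$ is at most $|\log(1-t)|$, which is finite because $t<1$. Concretely, $1-q_y = P_{\text{base}}(y\mid x)/F(x,y)\ge P_{\text{base}}(y\mid x)$ and $1-s_y=P_{\text{base}}(y\mid x)/(1-F^-(x,y))\ge P_{\text{base}}(y\mid x)$. This yields
\[
|B_N|\le 2\log\frac{1}{P_{\text{base}}(y\mid x)}+2\log\frac{1}{P_{\text{base}}(y'\mid x)},
\]
a bound independent of $N$.

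The only delicate point is that positivity of $P_{\text{base}}$ at $y$ and $y'$ is what keeps $q,s<1$. The non-zero probability mass hypothesis supplies exactly this. The no-ties assumption is what lets us invoke the two closed-form lemmas in the first place.
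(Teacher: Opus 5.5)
Your proof is correct and follows essentially the same route as the paper. Both invoke the log-ratio identification of $\rstarbwi$, factor $F(x,y)^N$ out of $\pBoN$ and $(1-F^-(x,y))^N$ out of $\pWoN$, and control the leftover $\log(1-t^N)$ terms using their monotonicity in $N$. Your explicit bound $|B_N|\le 2\log(1/P_{\text{base}}(y\mid x))+2\log(1/P_{\text{base}}(y'\mid x))$ is a slightly more quantitative version of the paper's remark that these factors increase to $1$.
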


The most immediate implication is that increasing $N$ amplifies all pairwise margins uniformly, making it a straightforward way to make the target reward easier to learn. But the slope at which the margin grows reveals a more informative and targeted lever. The slope increases when the base distribution places more mass between the two outcomes in the reward ranking (Figure \ref{fig:bon-margin-figure}). We define this quantity as the between mass.
\begin{definition}[Between Mass]
For any context $x$ and outcomes $y, y'$ where $r^*(x,y) > r^*(x,y')$, the between mass between $y$ and $y'$ is given by
\begin{equation}
\Pr_{\tilde{y} \sim P_{\text{base}}(\cdot \mid x)}(r^*(x,y') < r^*(x,\tilde{y}) < r^*(x,y)) = F(x,y) - F(x,y')
\end{equation}
\end{definition}

The higher the between mass between a pair of outcomes, the faster the margin between them grows with $N$.

So far, we have treated $P_{\text{base}}$ as fixed and studied how $N$ shapes the induced reward. The margin decomposition reveals a complementary lever: the base distribution itself enters the margin through the between-mass term. Unlike increasing $N$, which raises all margins indiscriminately, shaping $P_{\text{base}}$ allows one to selectively increase margins on specific pairs. This lever is available whenever the practitioner has some control over the generation distribution. For example, by choosing which model to sample from or applying filtering before BoN. This selectivity is appealing because, in practice, not all pairwise comparisons are equally important: in safety, comparisons that matter most are between harmful and acceptable outputs, in reasoning, the crucial distinctions are between correct solutions and plausible near-misses. Each setting induces a different test distribution, and our result provides a principled guide for designing the base distribution to increase margins precisely where they are needed. Concretely, we provide examples of how to design the base distribution to increase margins for three different applications for reward learning.

\paragraph{Designing the Base Distribution to Increase Margins}

\paragraph{Safety.}
The key comparisons are those that separate harmful responses from acceptable ones. To enlarge these margins, the base distribution should place substantial mass in the mediocre region: responses that are flawed or suboptimal, but not obviously dangerous. This creates more separation between harmful and acceptable behavior.

\paragraph{Reasoning.}
The critical distinction is between plausible-but-incorrect solutions and fully correct ones. If the base distribution produces too many clearly poor responses, then little mass lies between plausible and correct solutions. On the other hand, if it almost always produces perfect solutions, then this intermediate region again becomes small. Thus, the base distribution should be strong enough to reliably generate decent responses, while still concentrating substantial mass on near-correct but imperfect solutions.

\paragraph{General.}
These tasks require distinguishing among responses across a broad mid-to-high quality range. Since the relevant comparisons are distributed over a wider band, the base distribution should spread its mass throughout this region in order to preserve sufficient between-mass across many comparisons.


\subsection{Connectivity}
\label{sec:margin-connectivity-tradeoff}

\begin{figure}[t]
    \centering
    \includegraphics[width=0.70\linewidth]{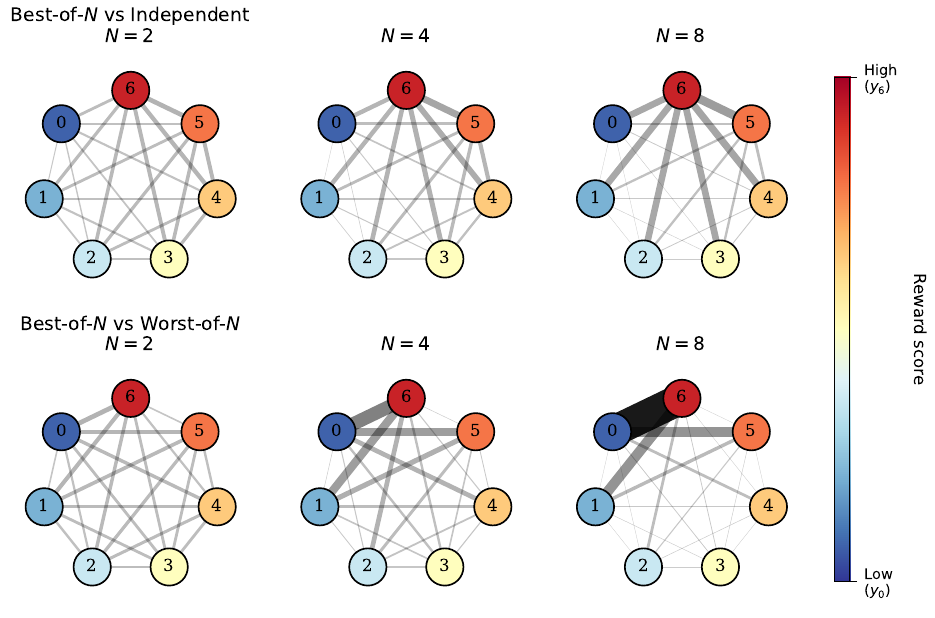}
    \caption{Comparison graph for Best-vs-Independent and Best-vs-Worst-Independent. It captures how often each pair of outcomes are compared in the BoN preference data.}
    \label{fig:comparison-graph}
\end{figure}

Recall from \citet{pukdee2026doespreferencelearningrecover}, the connectivity degree controls the estimation error of the reward learned from a training preference data distribution $P$ on a test distribution $Q$ in the worst-case scenario. It is defined as follows,

\begin{definition}[Connectivity]
The connectivity degree of a comparison distribution $P$ with respect to a hypothesis class $\mathcal{H}$ and a test distribution $Q$ is defined as
    \begin{equation}
        \label{eq:connectivity-def}
        \lambdaconn(P, Q ; \cH) = \inf_{f, g \in \cH} \frac{\bE_{P}[(\Delta f- \Delta g)^2]}{\bE_{Q}[(\Delta f- \Delta g)^2]}
    \end{equation}
    \end{definition}

The original notion of connectivity degree corresponds to the case where the test distribution $Q$ ranges over all comparisons in the outcome space, so that it measures how well the training distribution $P$ captures the pairwise variation over the entire comparison space. Here, we generalize this notion by allowing $Q$ to be an arbitrary, possibly selective test distribution. In this sense, $\lambdaconn(P, Q ; \cH)$ can also be interpreted as worst-case measure of how well $P$ covers the pairwise variation that matters under $Q$.

In contrast to the margin that grows linearly with $N$, we show that connectivity for $\PBI$ operates on an exponentially small scale in $N$. The intuition is that as $N$ increases, BoN sampling concentrates on outcomes with extremely high rewards while WoN sampling concentrates on outcomes with extremely low rewards. As a result, the observed comparisons become increasingly concentrated on a small subset of pairs, rather than covering a broad range of pairwise comparisons. Figure  \ref{fig:comparison-graph} illustrates this in a simple finite-outcome setting, where edge thickness represents the probability of observing each comparison. As $N$ increases, the Best-vs-Independent graph converges to a star graph, while the Best-vs-Worst-Independent graph converges to a single edge between the best and worst outcomes. This loss of comparison diversity leaves intermediate comparisons underrepresented, which in turn degrades connectivity with respect to the test distribution $Q$.

\begin{theorem}[Informal connectivity bounds for $\PBI$]
    \label{thm:connectivity-bound-rstarbi}
    Let $P_{\text{base}}$ be a base distribution. For any $\alpha \in \mathbb{R}$ and define $A_{x,\alpha} = \{y \in \cY : r^*(x, y) \leq \alpha\}$ as a set of outcomes which the reward is less than $\alpha$ for a given context $x$. If $\cH$ is expressive enough and $Q$ satisfies coverage and nondegeneracy conditions, there exist constants $C_Q, D_{Q,\alpha} > 0$ such that for any $N$,
    \[
        C_Q \Big(\min_{x,y} P_{\text{base}}(y \mid x)\Big)^N
        \leq
        \lambdaconn(\PBI, Q ; \cH)
        \leq
        D_{Q,\alpha} \bE_{x \sim P_x}\!\left[
            P_{\text{base}}(A_{x,\alpha} \mid x)^N
            +
            P_{\text{base}}(A_{x,\alpha} \mid x)
        \right].
    \]
    Thus, the connectivity degree for $\PBI$ is controlled by an exponentially small term in $N$, together with a residual term determined by the mass of the cut $A_{x,\alpha}$. 
\end{theorem}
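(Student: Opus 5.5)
The argument has two halves. We lower bound $\lambdaconn(\PBI,Q;\cH)$ by comparing the comparison distribution of $\PBI$ pointwise to that of $Q$. We upper bound it by exhibiting a specific pair $f,g\in\cH$ whose margin difference is concentrated on comparisons that $\PBI$ rarely sees.

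Throughout, write $h=f-g$, so that $\Delta f-\Delta g=\Delta h$. Both numerator and denominator depend only on the unordered comparison distributions $\widetilde{P}_{\text{BI}}$ and $\widetilde Q$, because $(\Delta h)^2$ is symmetric in $(y,y')$.

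\textbf{Lower bound.} By Lemma~\ref{lemma: bon-closed-form}, $\pBoN(y\mid x)=F(x,y)^N-F^-(x,y)^N\ge \pbase^N$. This holds because $F=F^-+P_{\text{base}}$ and $(a+b)^N-a^N\ge b^N$ for $a,b\ge0$. Hence
\[
\PBI(x,y,y')+\PBI(x,y',y)\ \ge\ P(x)\,\pbase^N\,\pbaseyprime \ \ge\ P(x)\,\big(\min_{x,y}P_{\text{base}}(y\mid x)\big)^{N+1}.
\]
With a slightly more careful split (using $\pbaseyprime$ against a $Q$-density bound) this gives $\big(\min_{x,y}P_{\text{base}}(y\mid x)\big)^N$ times a constant. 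The coverage condition on $Q$ supplies that bound: say $Q(x,y,y')\le K\,P(x)\pbaseyprime$, or simply that $Q$ is supported where $P(x)>0$ with bounded density ratio, using finiteness of $\cX,\cY$. It yields $\widetilde{P}_{\text{BI}}\ge C_Q\,(\min P_{\text{base}})^N\,\widetilde Q$ pointwise. Integrating $(\Delta h)^2$ against both sides and taking the infimum gives the left inequality. Nondegeneracy of $Q$ is used only so that the denominator is nonzero for $f\neq g$ modulo constants.

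\textbf{Upper bound.} For each $x$, choose the cut indicator $h(x,y)=\mathbf 1[y\in A_{x,\alpha}]$, centered to satisfy the $\mu$-normalization. Centering is harmless since only margins enter. The expressiveness assumption on $\cH$ is exactly that some $f,g\in\cH$ realize $f-g=h$, or a scalar multiple of it. Then $(\Delta h)^2=\mathbf 1[\text{exactly one of } y,y' \text{ lies in } A_{x,\alpha}]$. Under $\PBI$ this has probability at most
\[
\bE_x\big[\pBoN(A_{x,\alpha}\mid x)+P_{\text{base}}(A_{x,\alpha}\mid x)\big].
\]
Since $A_{x,\alpha}$ is a lower set in reward order, the best of $N$ lies in $A_{x,\alpha}$ iff all $N$ draws do, so $\pBoN(A_{x,\alpha}\mid x)=P_{\text{base}}(A_{x,\alpha}\mid x)^N$. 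This gives the numerator bound. For the denominator, the nondegeneracy condition on $Q$ should be read as guaranteeing that $Q$ crosses the cut with positive probability $q_\alpha=\Pr_Q[\text{pair crosses }A_{x,\alpha}]>0$. Setting $D_{Q,\alpha}=1/q_\alpha$ gives the right inequality.

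\textbf{Main obstacle.} The delicate point is pinning down the formal coverage, nondegeneracy, and expressiveness conditions so that both steps go through. For the upper bound, the cut function, or a bounded multiple of it, must belong to $\cH-\cH$. For the lower bound, the exponent must come out as exactly $N$ rather than $N+1$. I would handle the exponent by absorbing the extra $P_{\text{base}}(y'\mid x)$ factor into $C_Q$ via the density-ratio coverage assumption $Q(x,y,y')\lesssim P(x)\pbaseyprime$. I would state these conditions explicitly in the formal version.
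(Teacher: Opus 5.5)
Your proposal is correct and follows the paper's proof in both halves. The lower bound bounds the pointwise ratio $\PBI/Q$ using $\pBoN(y\mid x)\ge P_{\text{base}}(y\mid x)^N$, with $C_Q=\min_{Q>0}P(x)P_{\text{base}}(y'\mid x)/Q(x,y,y')$, which is exactly your coverage condition. The upper bound uses a cut function $f-g$ that is constant on $A_{x,\alpha}$ and on its complement, together with $\pBoN(A_{x,\alpha}\mid x)=P_{\text{base}}(A_{x,\alpha}\mid x)^N$ and a cut-crossing denominator $\gamma_{Q,\alpha}>0$.

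The one detail you skip is the collision-discard normalizer $Z_I(x)=1-\sum_y \pBoN(y\mid x)P_{\text{base}}(y\mid x)$. It is harmless in the lower bound, since $Z_I\le 1$. In the upper bound, however, it multiplies the crossing probability by $1/Z_I(x)$. For example, with only two outcomes the normalized crossing probability is exactly $1$, while $P_{\text{base}}(A_{x,\alpha}\mid x)^N+P_{\text{base}}(A_{x,\alpha}\mid x)$ can be arbitrarily small. So your constant $D_{Q,\alpha}=1/q_\alpha$ is not enough. The paper additionally assumes $\eta:=1-\max_{x,y}P_{\text{base}}(y\mid x)>0$, uses $Z_I(x)\ge\eta$, and takes $D_{Q,\alpha}=1/(\eta\,\gamma_{Q,\alpha})$.
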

Overall, large $N$ can reduce connectivity by concentrating probability mass on a narrower set of comparisons. The full formal statement and proof, along with a similar result for $\PBWI$, are given in Appendix \ref{app:sample-complexity}.

\subsection{Accuracy Bound}
From previous sections, $N$ is a single knob that controls both margin and connectivity degree. However, there is a trade-off, increasing $N$ leads to a larger margin but decreases the connectivity degree. To make this trade-off explicit, we combine the two results into an accuracy bound that yields a practical guideline for choosing $N$ as a function of the sample size $n$.

\begin{restatable}[Accuracy Bound]{theorem}{AccuracyBound}
    Assume that $\cH$ is convex, bounded by some constant $B$, and realizable, $\rstarbi \in \cH$. Assume also that for every triplet $(x,y,y')$ in the support of $Q$, both $P_{\text{base}}(y \mid x)$ and $P_{\text{base}}(y' \mid x)$ are non-zero. Let $\rhat$ be the empirical risk minimizer of the BT learning objective with respect to $\PBI$. Then there exist constants $D > 0$ and $M_Q > 0$ such that with probability at least $1 - \delta$,
    \begin{equation}\operatorname{Acc}_Q(\rhat) \geq
        \sup_{k > 0}
        \Big[
            \Pr_Q\left( \left|\log \frac{F(x,y)}{F(x,y')} \right| \geq k + \frac{M_Q}{N}\right)
            \;-\;
            D \frac{ \operatorname{Comp}(n, \cH, \delta)}{N^2k^2\lambdaconn}
        \Big]
    \end{equation}
where $\lambdaconn := \lambdaconn(\PBI, Q ; \cH)$.
\end{restatable}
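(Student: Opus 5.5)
The strategy is the margin-versus-error argument: a pair is ranked correctly whenever the target margin of $\rstarbi$ exceeds the estimation error of $\rhat$ on that pair. For every $(x,y,y')$ in the support of $Q$ we have
\[
\operatorname{sign}(\Delta_{\rhat}) = \operatorname{sign}(\DeltarBI) \quad\text{whenever}\quad |\Delta_{\rhat}-\DeltarBI| < |\DeltarBI|.
\]
By Theorem~\ref{thm:identification-rstarbi}, $\DeltarBI$ has the same sign as $\Deltarstar$ on this support. Fix $k>0$; this gives
\[
1-\operatorname{Acc}_Q(\rhat) \le \Pr_Q\big(|\DeltarBI| < Nk\big) + \Pr_Q\big(|\Delta_{\rhat}-\DeltarBI|\ge Nk\big).
\]

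\emph{First term.} Apply Proposition~\ref{thm:margin-rstarbi}:
\[
\DeltarBI = N\log\frac{F(x,y)}{F(x,y')} + B_N, \qquad |B_N|\le M_Q.
\]
The uniform bound on $B_N$ holds over the support of $Q$. The support assumption puts a positive lower bound on $P_{\text{base}}(y\mid x)$ there, so the bounded remainder can be taken uniform; in a finite space one takes the max over pairs. Hence $|\log F(x,y)/F(x,y')|\ge k+M_Q/N$ implies $|\DeltarBI|\ge Nk+M_Q-M_Q= Nk$, and
\[
\Pr_Q\big(|\DeltarBI|<Nk\big) \le 1-\Pr_Q\!\left(\Big|\log\frac{F(x,y)}{F(x,y')}\Big|\ge k+\frac{M_Q}{N}\right).
\]
This produces the first term of the bound.

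\emph{Second term.} By Markov's inequality,
\[
\Pr_Q\big(|\Delta_{\rhat}-\DeltarBI|\ge Nk\big) \le \frac{\bE_Q[(\Delta_{\rhat}-\DeltarBI)^2]}{N^2k^2}.
\]
Since $\rhat,\rstarbi\in\cH$, the definition of connectivity gives
\[
\bE_Q[(\Delta_{\rhat}-\DeltarBI)^2] \le \frac{\bE_{\PBI}[(\Delta_{\rhat}-\DeltarBI)^2]}{\lambdaconn}.
\]
It remains to bound the training-distribution margin error $\lVert \rhat-\rstarbi\rVert_{\Delta,\PBI}$. Because $\cH$ is bounded by $B$, every margin lies in $[-2B,2B]$. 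On this interval the logistic loss is strongly convex with curvature at least $\sigma'(2B)$. By realizability, $\rstarbi$ is the population minimizer, and with convexity of $\cH$ the first-order optimality condition yields
\[
\mathcal{L}_{\PBI}(r)-\mathcal{L}_{\PBI}(\rstarbi)\ge \tfrac{\sigma'(2B)}{2}\lVert r-\rstarbi\rVert_{\Delta,\PBI} \quad\text{for all } r\in\cH.
\]
A standard uniform-convergence or localized argument then bounds the excess risk of the ERM by $\operatorname{Comp}(n,\cH,\delta)$ with probability $1-\delta$. This is the same bound used by \citet{pukdee2026doespreferencelearningrecover}, and I would invoke it directly. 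Collecting the constants into $D$ and taking the supremum over $k$ gives the stated bound.

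The main obstacle is making the bound on $B_N$ uniform on the support of $Q$ and independent of $N$. That is where the nonzero-mass assumption enters, since it controls the terms
\[
\log\big(1-(F^-/F)^N\big) \quad\text{and}\quad \log P_{\text{base}}.
\]
I would handle this by bounding $1-(F^-/F)^N\ge P_{\text{base}}(y\mid x)/F(x,y)$ for $N\ge1$. This yields $|B_N|\le$ a constant depending only on $\min_{\mathrm{supp}\,Q}P_{\text{base}}$, which becomes $M_Q$.
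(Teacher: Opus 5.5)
Your proposal is correct and follows essentially the same route as the paper. The paper likewise lower-bounds $\operatorname{Acc}_Q(\rhat)$ by $\Pr_Q(|\DeltarBI|\ge Nk)$ minus a Markov term, and controls $\bE_Q[(\Delta_{\rhat}-\DeltarBI)^2]$ through connectivity, strong convexity on $[-2B,2B]$ with the first-order condition from convexity of $\cH$, and a Rademacher uniform-convergence bound (its general-$Q$ estimation-error lemma); it then absorbs the remainder via the uniformly bounded $B_N$ of Proposition~\ref{thm:margin-rstarbi}, using the same $1-\alpha^N\ge 1-\alpha$ bound. Your strict event $|\Delta_{\rhat}-\DeltarBI|<|\DeltarBI|$ is in fact slightly more careful than the paper's non-strict version, which fails in the edge case $\Delta_{\rhat}=0$.
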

The complexity term $\operatorname{Comp}(n, \cH, \delta)$ is a standard Rademacher-complexity quantity which we define rigorously in Appendix \ref{app:sample-complexity}. The first term measures how often the absolute margin slope under $Q$ exceeds a threshold, up to a vanishing $O(1/N)$ correction. In contrast, the second term captures the estimation error and decays like $1/(N^2 \lambdaconn)$. The bound therefore preserves the same interpretation: increasing $N$ helps by enlarging the target margin, but hurts through the connectivity degree. The optimal choice of $N$ depends on the sample size $n$, which in turn depends on the resource bottleneck of preference label collection.

\paragraph{Principle for Selecting $N$ Based on Resource Bottlenecks}

\paragraph{Preference Label-Limited Regime.}
In many practical settings, obtaining preference labels requires human annotators, which severely limits the number of triplets $n$ that can be collected. In this small-$n$ regime, the complexity term is large, so it is often advantageous to use a larger $N$. Although a larger $N$ increases the cost of identifying the BoN sample, the overhead can be modest in some applications: in visual tasks, selecting the best item from $N$ images may be relatively easy. In addition, selecting the best item among $N$ candidates is substantially easier than producing a full ranking, and can be done with at most $\lceil \log_2 N \rceil$ pairwise comparisons via a simple tournament procedure. Thus, when human annotation is the dominant bottleneck, it is often worthwhile to spend more effort per comparison triplet in order to improve the quality of each comparison.

\paragraph{Generation-Limited Regime.}
When preference annotations are cheap or can be automated, for example with LLM judges, it becomes possible to collect many triplets. In this large-$n$ regime, increasing $N$ provides diminishing returns because the complexity term is already small. It is therefore more compute-efficient to use a smaller $N$ and spend the saved generation budget on collecting more triplets, rather than on generating many candidates for each triplet.

\begin{summarybox}
    \textbf{Design Principles.}
    \begin{enumerate}
        \item When preference labels are scarce, choose a larger $N$.
        \item When preference labels are abundant, choose a smaller $N$ and use the saved generation budget to collect more triplets.
        \item When the base distribution is under the practitioner's control, shape it to maximize between mass on the comparisons that matter most under the target application.
    \end{enumerate}
    \end{summarybox}

\section{Experiments}
We validate our theoretical findings on synthetic data and real-world data. We provide a brief overview of the experiment setup and results in the main text. For full details, please refer to the Appendix \ref{app:experiment-details}.

\subsection{The optimal choice of $N$ depends on the sample size $n$.}
Our theory identifies $N$ as a single knob that controls both margin and connectivity degree; a larger $N$ increases the margin but also hurts the connectivity. Our design principle suggests that we should select $N$ based on the sample size $n$. A larger $N$ can help when $n$ is small but when $n$ is large, a smaller $N$ can become preferable.

\subsubsection{Synthetic Data}

\begin{figure*}[h]
    \centering
    \begin{subfigure}[b]{0.48\linewidth}
        \includegraphics[width=\linewidth]{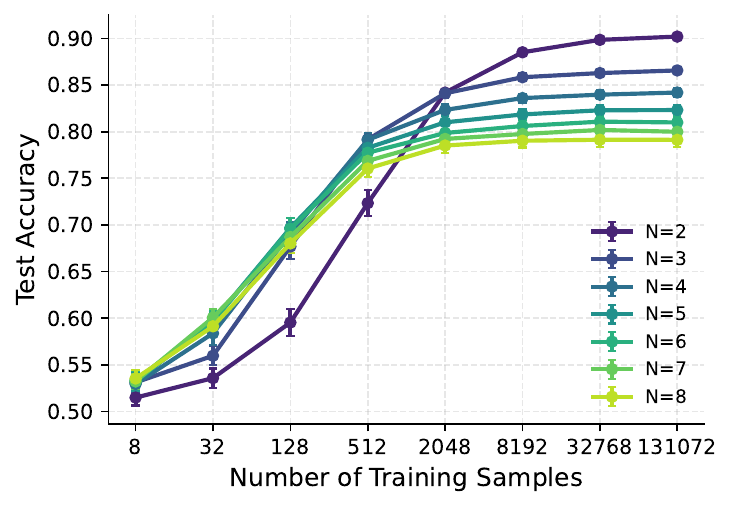}
        \caption{Accuracy}
    \end{subfigure}
    \hspace{0.1cm}
    \begin{subfigure}[b]{0.48\linewidth}
        \includegraphics[width=\linewidth]{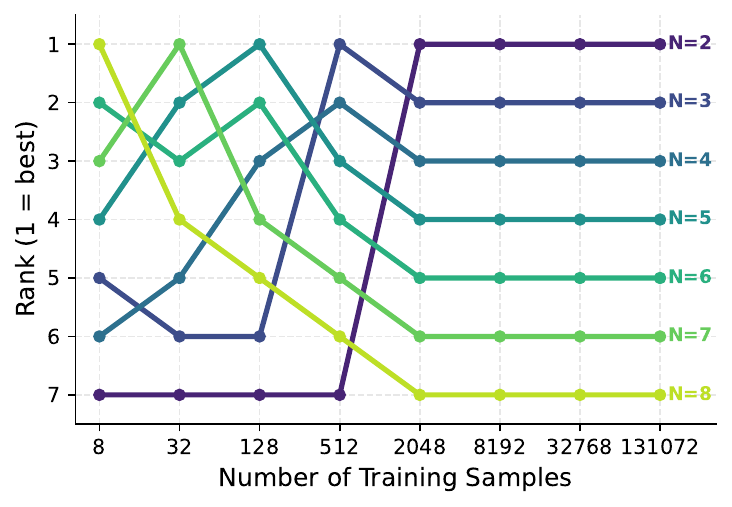}
        \caption{Ranking}
    \end{subfigure}
    \caption{Test accuracy for Best-vs-Independent on synthetic data over 20 random seeds.}
    \label{fig:bon-accuracy-vs-sample-size}
\end{figure*}

\paragraph{Setup.} We first validate this on a synthetic dataset following the setup in \citet{pukdee2026doespreferencelearningrecover}. In this setup, the ground-truth reward function is a cosine similarity between two embedding vectors where this embedding is an output of a two-layer neural network. We generate the training data by sampling $n$ triplets from the Best-vs-Independent setting and use it to train the reward model. Here, both base distribution and test distribution are uniform over the outcome space. 

\paragraph{Results.} Figure \ref{fig:bon-accuracy-vs-sample-size} confirms the predicted tradeoff between margin and connectivity. When $n$ is small, larger $N$ achieves higher accuracy; as $n$ increases, the optimal $N$ decreases. For sufficiently large $n$, smaller $N$ not only catches up to larger $N$ but can outperform it. In particular, for $n \geq 8192$, $N = 2$ matches or exceeds larger $N$, with a gap of up to 10 percent over $N = 8$ at $n = 131072$. This is the regime where the loss in connectivity from larger $N$ outweighs its margin benefit. We also plot the connectivity degree for each $N$, and additional Best-vs-Worst-Independent results in Appendix \ref{app:additional-results-synthetic}. The connectivity degree decreases as $N$ increases and the Best-vs-Worst-Independent setting shows the same pattern.

\subsubsection{Real-world Data}
\begin{figure}[h]
    \centering
    \begin{subfigure}[b]{0.48\textwidth}
        \centering
        \includegraphics[width=\textwidth]{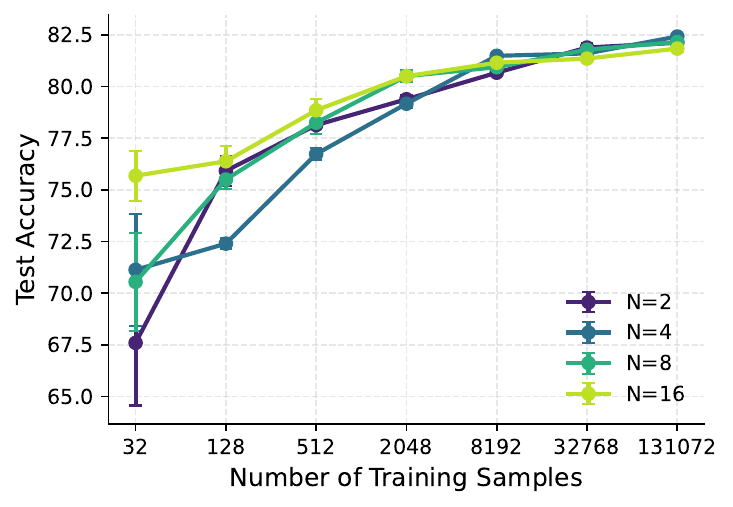}
        \caption{Best-vs-Random}
        \label{fig:accuracy-vs-n-best-vs-random}
    \end{subfigure}
    \hspace{0.1cm}
    \begin{subfigure}[b]{0.48\textwidth}
        \centering
        \includegraphics[width=\textwidth]{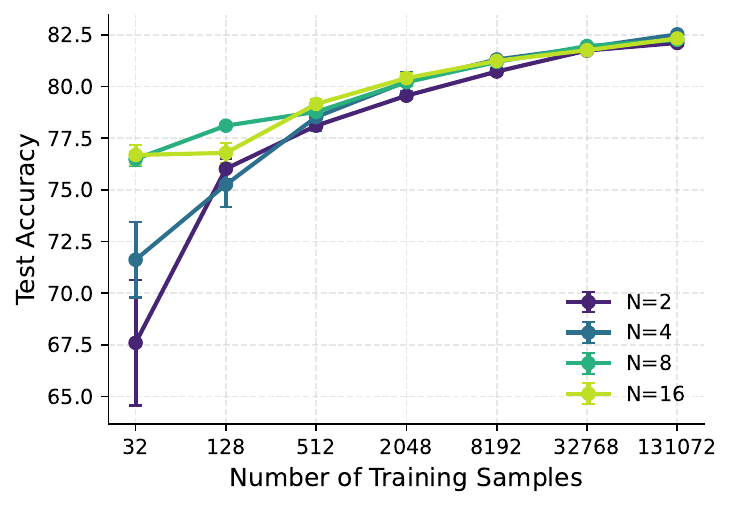}
        \caption{Best-vs-Worst}
        \label{fig:accuracy-vs-n-best-vs-worst}
    \end{subfigure}
    \caption{Accuracy vs. Sample size for each $N$ for the Best-vs-Random (left) and Best-vs-Worst (right) setting on UltraFeedback. The mean accuracy is computed over 3 random seeds and the standard error is reported.}
    \label{fig:accuracy-vs-N-ultrafeedback}
\end{figure}
\paragraph{Setup.}  We use prompts from \texttt{UltraFeedback} \citep{cui2024ultrafeedback} as contexts $x$, and generate responses with \texttt{Llama-3.1-8B-Instruct} \citep{grattafiori2024llama} under Best-vs-Random and Best-vs-Worst sampling. We use \texttt{Skywork-Reward-V2} \citep{liu2025skywork} as the target reward function, and evaluate on a binarized \texttt{UltraFeedback} test set whose preference labels follow this reward. The learned reward model is a sequence classifier built on a frozen \texttt{Llama-3.1-8B-Instruct} backbone with a linear reward head.

\paragraph{Results.} Figure \ref{fig:accuracy-vs-N-ultrafeedback} supports our design principle that the optimal choice of $N$ depends on the sample size $n$. When $n$ is small, larger $N$ yields higher accuracy. When $n$ is large enough, smaller $N$ achieves similar accuracy, allowing the saved compute to be used for generating more samples. For example, $n = 2048, N = 16$ achieves the same accuracy as $n = 8192, N = 2$, while the latter requires only half as much generation compute. In contrast to the synthetic data, we do not observe a crossover where smaller $N$ outperforms larger $N$ when $n$ is large. One possible explanation is that the test distribution here is not uniform over the outcome space, so connectivity may be less important than in the synthetic setting.

\paragraph{Additional Studies.} Appendix \ref{app:model-ablation} reports additional ablation studies. The connectivity degree decreases with $N$, consistent with our theory (Figure \ref{fig:connectivity-vs-n-ultrafeedback}). The Best-vs-Worst setting shows the same overall pattern and outperforms Best-vs-Random when $n$ is small, while the two perform similarly when $n$ is large. Across different base models, including \texttt{gemma-3-12b-it} \citep{kamath2025gemma}, \texttt{Qwen3-8B} \citep{qwen3technicalreport}, and \texttt{Ministral-3-8B-Instruct-2512} \citep{liu2026ministral}, the same broad pattern holds, although the variance is higher in the small-$n$ regime.

\subsection{Increasing the between mass of the base distribution can improve the accuracy. }

Beyond choosing $N$, our margin analysis suggests a second design principle: when the practitioner has control over the base distribution, $P_{\text{base}}$ should place more mass between the comparisons that matter under the target test distribution. We now test whether reshaping $P_{\text{base}}$ in this way translates into accuracy gains.


\paragraph{Setup.} 
We evaluate this design principle on 3 datasets representing safety, reasoning and general applications: \texttt{PKU-SafeRLHF} \citep{ji2025pku}, a dataset of prompts paired with responses annotated for harmlessness; \texttt{GSM8K} \citep{cobbe2021gsm8k}, a collection of grade-school math word problems; and \texttt{UltraFeedback} \citep{cui2024ultrafeedback}, a broad-coverage preference dataset spanning instruction-following, helpfulness, and honesty. Since \texttt{GSM8K} is not natively a preference dataset, we construct preference pairs by pairing a randomly chosen correct response against the highest-scoring incorrect one for each prompt. We refer to Appendix \ref{app:additional-plots-gsm8k} for more details. For each task, we reshape the base distribution by either increasing or decreasing the probability of responses whose scores fall into a chosen percentile band, allowing us to concentrate or disperse mass in different parts of the quality spectrum. The midpoint of the percentile band is parameterized by $c$. We then generate Best-vs-Random preference data from the modified base distribution with $N = 4$, train a reward model using the same setup as in the previous experiments and compare test accuracy across different reshaped distributions. Full details of the upsampling/downsampling procedure and parameterization are given in Appendix \ref{app:distribution-reshaping}

\paragraph{Results.} Our goal is to understand whether reshaping the base distribution following our design principle can improve the accuracy. Figure \ref{fig:pku-saferlhf-combined}, \ref{fig:gsm8k-combined} and \ref{fig:ultrafeedback-combined} compare the test accuracy of reward models trained from BoN preference data with reshaped base distribution against the original base distribution.  Overall, the results support our prediction: increasing between mass on the comparisons that matter for a given task tends to improve accuracy.

 \textbf{Safety.} The clearest improvement comes from upsampling lower-score responses ($c = 15$) while upsampling higher-score responses ($c = 85$) substantially hurts the accuracy. Downsampling lower-score responses also hurts accuracy. The score distributions show the same pattern: the best-performing base distribution places more mass on lower-score responses, whereas the worst-performing one shifts mass toward higher-score responses. This supports our prediction that, for safety tasks, keeping more mediocre responses in the base distribution increases between mass over the comparisons that matter, namely harmful versus acceptable outputs.

\begin{figure}[h]
    \centering
    \begin{subfigure}[b]{0.60\textwidth} 
        \centering
        \includegraphics[width=\textwidth]{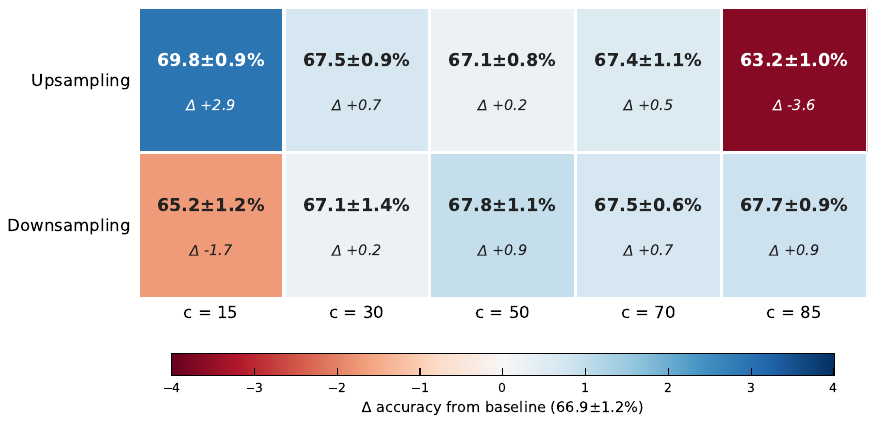}
        \caption{Test accuracy for different base distributions.}
        \label{fig:pku-saferlhf-pminus}
    \end{subfigure}
    \hfill
    \begin{subfigure}[b]{0.38\textwidth} 
        \centering
        \includegraphics[width=\textwidth]{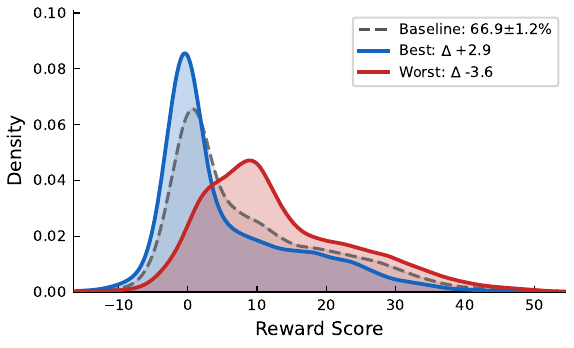}
        \caption{Distribution of the reward score between the best and worst performing  distribution.}
        \label{fig:pku-saferlhf-best-vs-worst}
    \end{subfigure}
    \caption{Heat map of test accuracy for different base distribution averaged over 10 random seeds, reported with a standard error(left) and the comparison between the distribution of reward scores between the best performing and worst performing base distribution (right) for \texttt{PKU-SafeRLHF}.}
    \label{fig:pku-saferlhf-combined}
\end{figure}

\textbf{Reasoning.}  Downsampling medium-score responses ($c = 50$) leads to a higher accuracy while upsampling high-score responses ($c = 85$) leads to a lower accuracy. This suggests that in our setup, the useful comparisons are not between strong and very strong responses, but between plausible but incorrect responses and the correct responses.  The original base distribution is already strong: generated responses are concentrated around reward score 25, while in the test distribution the rejected responses average around 15 and the chosen responses around 25 (Appendix \ref{app:additional-plots-gsm8k}). As a result, pushing more mass toward the very top end reduces the between mass over the comparisons that matter. This pattern is consistent with our theory.

\begin{figure}[h]
    \centering
    \begin{subfigure}[b]{0.60\textwidth} 
        \centering
        \includegraphics[width=\textwidth]{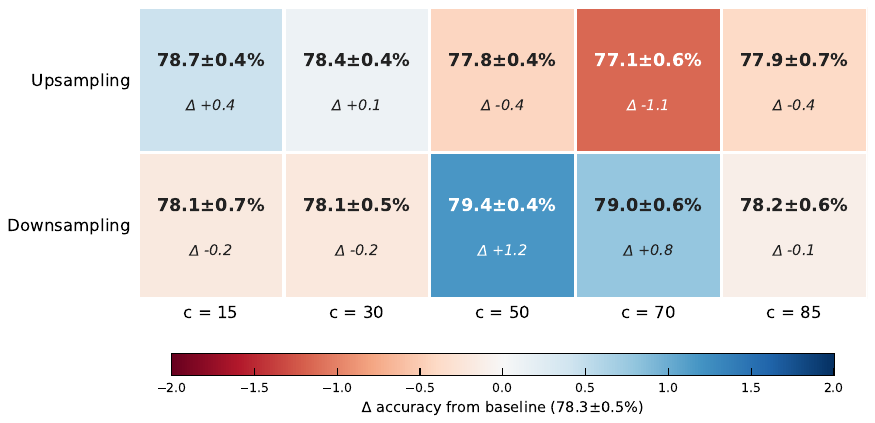}
        \caption{Test accuracy for different base distributions.}
        \label{fig:gsm8k-pminus}
    \end{subfigure}
    \hfill
    \begin{subfigure}[b]{0.38\textwidth} 
        \centering
        \includegraphics[width=\textwidth]{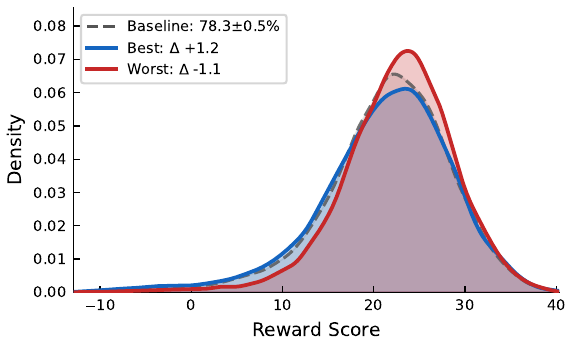}
        \caption{Distribution of the reward score between the best and worst performing  distribution.}
        \label{fig:gsm8k-best-vs-worst}
    \end{subfigure}
    \caption{Heat map of test accuracy for different base distribution averaged over 10 random seeds, reported with a standard error (left) and the comparison between the distribution of reward scores between the best performing and worst performing base distribution (right) for \texttt{GSM8K}.}
    \label{fig:gsm8k-combined}
\end{figure}

\textbf{General.} The key pattern is that upsampling mid-to-high score responses $(c = 30,50,85)$ lowers accuracy, while downsampling in the same region improves the accuracy. This suggests that for this task, it is better to spread probability mass across this part of the quality range rather than concentrate it. 
The distribution of the reward score provides a weaker but compatible evidence: the worst-performing base distribution is slightly more peaked than the best-performing one.

\begin{figure}[h]
    \centering
    \begin{subfigure}[b]{0.60\textwidth} 
        \centering
        \includegraphics[width=\textwidth]{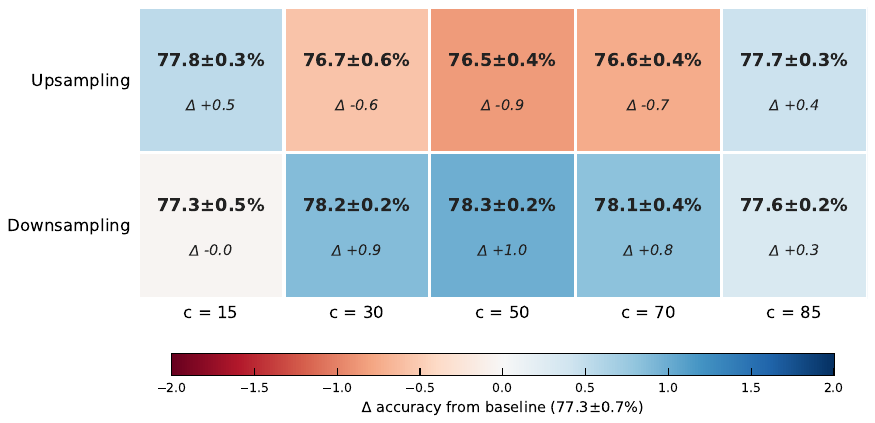}
        \caption{Test accuracy for different base distributions.}
        \label{fig:ultrafeedback-pminus}
    \end{subfigure}
    \hfill
    \begin{subfigure}[b]{0.38\textwidth} 
        \centering
        \includegraphics[width=\textwidth]{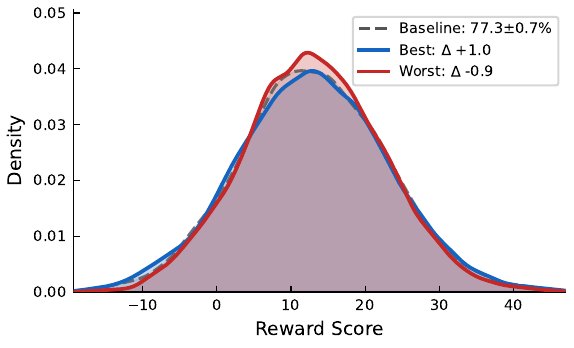}
        \caption{Distribution of the reward score between the best and worst performing  distribution.}
        \label{fig:ultrafeedback-best-vs-worst}
    \end{subfigure}
    \caption{Heat map of test accuracy for different base distribution averaged over 10 random seeds, reported with a standard error (left) and the comparison between the distribution of reward scores between the best performing and worst performing base distribution (right) for \texttt{UltraFeedback}.}
    \label{fig:ultrafeedback-combined}
\end{figure}

\section{Conclusion and Future Work}
We study preference data generated by Best-of-$N$ sampling through the lens of reward learning. This perspective lets us characterize the reward targets induced by the data, identify the trade-off between margin and connectivity, and derive practical principles for data design. In particular, our analysis suggests choosing $N$ based on the resource bottleneck, and shaping the base distribution to place more mass between the comparisons that matter under the target test distribution. Several directions remain open, including understanding the small-$N$ regime, going beyond Bradley--Terry reward learning, and determining how these data-design effects propagate to downstream policy optimization and iterative training. More broadly, our results show that preference-data construction is not merely a collection detail: it shapes both what reward is learned and how efficiently it can be learned.

\subsubsection*{Acknowledgments}
This work was supported in part by Bloomberg Data Science Fellowship, AFRL and DARPA via FA8750-23-2-1015, ONR via N00014-23-1-2368, and NSF via IIS-1955532 and IIS-1901403. The authors thank Korinna Fragkia for her insightful feedback on the draft.

\bibliography{reference.bib}
\bibliographystyle{tmlr}

\appendix
\section{Proofs for Characterizing the Learned Reward}
\label{app:characterizing-learned-reward}
In this section, we provide full proof details for results in Section \ref{sec:characterizing-learned-reward}.

\BoNDistribution*
\begin{proof}
For a specific response $y$ to be selected as the Best-of-N, two conditions must hold:
    \begin{enumerate}
        \item At least one of the $N$ samples, $y_1,\dots, y_N$ equals $y$
        \item No sample has score strictly greater than $r^*(x, y)$
    \end{enumerate}
    Since we assume that there is no tie in the reward score, the conditions are equivalent to when the highest reward score is $r^*(x,y)$ among $N$ samples. 
    \begin{align}
    \pBoN(y \mid x) &=  \Pr(\max_i r^*(x,y_i) = r^*(x,y))\\
    &=\Pr(r^*(x,y_i) \leq r^*(x,y)) - \Pr(r^*(x,y_i) < r^*(x,y)) \\
    &= F(x,y)^N - F^{-}(x,y)^N
    \end{align}
    The final line holds from independence between $y_1, \dots, y_N$.
    \end{proof}
This Lemma resembles Lemma 2.3 of \citet{beiramitheoretical} which was for the BoN policy. We can provide a similar result for the Worst-of-N distribution as well.
\WoNDistribution*
\begin{proof}
Similar to the Best-of-N, the condition at which a response $y$ is the Worst-of-N is when the lowest score is $r^*(x,y)$ among $N$ samples $y_1,\dots, y_N$.
\begin{align}
    \pWoN(y \mid x) &=  \Pr(\min_i r^*(x,y_i) = r^*(x,y))\\
    &=\Pr(r^*(x,y_i) \geq r^*(x,y)) - \Pr(r^*(x,y_i) > r^*(x,y)) \\
    &= (1 - F^{-}(x,y))^N - (1 - F(x,y))^N
    \end{align}
\end{proof}

We introduce the definition of Conditional Preference Distribution (CPRD) from \citet{pukdee2026doespreferencelearningrecover} which encodes the preference information of a comparison distribution. This allows us to refer to the result from \citet{pukdee2026doespreferencelearningrecover} and cleanly identify the population minimizer.

\begin{definition}[Conditional Preference Distribution (CPRD)]
    For a distribution $P$ over $\mathcal{X} \times \mathcal{Y} \times \mathcal{Y}$, the conditional preference distribution (CPRD) of $P$ is defined on any unordered pair $\{y,y'\}$ with $y \neq y'$ and $x \in \mathcal{X}$ as follows
    \begin{equation}
    \omega_P(y \succ y' \mid x) = \frac{P(x,y,y')}{P(x,y,y') + P(x,y',y)}.
    \end{equation}
\end{definition}

\Identificationrstarbi*

\begin{proof}
\textbf{Part 1: Show that $\rstarbi$ is a log ratio.} 

By Corollary 5.4 of \citet{pukdee2026doespreferencelearningrecover}, when the comparison distribution satisfies the positive--negative conditional independence, the population minimizer of the BT learning objective, recovers the CPRD of the comparison distribution.
\begin{equation}
P_{\rstarbi}(y \succ y' \mid x) = \omega_P(y \succ y' \mid x)
\end{equation}
almost everywhere with respect to $\PBI$ when $P_{\rstarbi}(y \succ y' \mid x) = \sigma(\rstarbi(x,y) - \rstarbi(x,y'))$ is the preference distribution induced by the score function $\rstarbi$. 

In addition, by Theorem 4.4 of \citet{pukdee2026doespreferencelearningrecover}, when the comparison distribution satisfies the positive--negative conditional independence, its CPRD is representable by a BT model where the score function is given by the log ratio,

\begin{equation}
    \omega_P(y \succ y' \mid x) = \sigma( \log \frac{\pBoN(y \mid x)}{\pbase} - \log \frac{\pBoN(y' \mid x)}{\pbaseyprime})
\end{equation}

Combining the two results, we can conclude that for all $x,y,y'$ with non-zero probability mass with respect to $\PBI$,
\begin{equation}
    \sigma(\rstarbi(x,y) - \rstarbi(x,y')) = \sigma( \log \frac{\pBoN(y \mid x)}{\pbase} - \log \frac{\pBoN(y' \mid x)}{\pbaseyprime})
\end{equation}

Since $\sigma$ is a strictly increasing function, we also have
\begin{equation}
    \rstarbi(x,y) - \rstarbi(x,y') = \log \frac{\pBoN(y \mid x)}{\pbase} - \log \frac{\pBoN(y' \mid x)}{\pbaseyprime}
\end{equation}

For any context $x$, let $y_0$ be a fixed outcome with a positive probability mass then for any $y$
\begin{equation}
    \rstarbi(x,y) = \log \frac{\pBoN(y \mid x)}{\pbase} + \rstarbi(x,y_0) - \log \frac{\pBoN(y_0 \mid x)}{\pbase}
\end{equation}

Set $c(x) = \rstarbi(x,y_0) - \log \frac{\pBoN(y_0 \mid x)}{\pbase}$ then we have
\begin{equation}
    \rstarbi(x,y) = \log \frac{\pBoN(y \mid x)}{\pbase} + c(x)
\end{equation}
for all $y$ with non-zero probability mass with respect to $\PBI$ as desired.

    \textbf{Part 2: Show that $\rstarbi$ preserves the ranking of $r^*$.} \\

    It suffices to show that $r^*(x, y) > r^*(x, y')$ implies
    \begin{equation}
    \label{eq:bon-ratio-inequality}
    \frac{\pBoN(y \mid x)}{P_\text{base}(y \mid x)} > \frac{\pBoN(y' \mid x)}{P_\text{base}(y' \mid x)}.
    \end{equation}

     By Lemma~\ref{lemma: bon-closed-form}, we have
    \begin{equation}
    \label{eq:bon-ratio-formula}
    \frac{\pBoN(y \mid x)}{P_\text{base}(y \mid x)} = \frac{F(x,y)^N - F^{-}(x,y)^N}{P_\text{base}(y \mid x)}.
    \end{equation}

    Using the factorization $a^N - b^N = (a - b)\sum_{j=0}^{N-1} a^{N-1-j}b^j$ and the fact that $F(x,y) - F^{-}(x,y) = P_\text{base}(y \mid x)$  this simplifies to
    \begin{equation}
    \label{eq:bon-ratio-simplified}
    \frac{\pBoN(y \mid x)}{P_\text{base}(y \mid x)} = \sum_{j=0}^{N-1} F(x,y)^{N-1-j}\, F^{-}(x,y)^j.
    \end{equation}

    Finally, by the definition of $F(x,y)$, we know that whenever $r^*(x,y) > r^*(x,y')$ we also have $F(x,y) > F(x, y')$ and $F^{-}(x,y) > F^{-}(x, y')$. Since  $\frac{\pBoN(y \mid x)}{P_\text{base}(y \mid x)}$ can be written as a polynomial in $F(x,y), F^-(x,y)$ with positive coefficients, we can conclude that 
    \begin{equation}
        \sum_{j=0}^{N-1} F(x,y)^{N-1-j}\, F^{-}(x,y)^j > \sum_{j=0}^{N-1} F(x,y')^{N-1-j}\, F^{-}(x,y')^j.
    \end{equation}
    \begin{equation}
        \frac{\pBoN(y \mid x)}{P_\text{base}(y \mid x)} > \frac{\pBoN(y' \mid x)}{P_\text{base}(y' \mid x)}
    \end{equation}
    This concludes our proof.
    \end{proof}

\Identificationrstarbwi*
\begin{proof}
The first part follows the same idea as in the proof of Theorem \ref{thm:identification-rstarbi} but just replace $P_\text{base}$ with $\pWoN$. For the second part, we need to show that $\rstarbwi$ preserves the ranking of $r^*$ where we will show that whenever $r^*(x,y) > r^*(x,y')$ we have 

\begin{equation}
    \frac{\pBoN(y \mid x)}{\pWoN(y \mid x)} > \frac{\pBoN(y' \mid x)}{\pWoN(y' \mid x)}.
\end{equation}

By Lemma~\ref{lemma: bon-closed-form} and Lemma~\ref{lemma: won-closed-form}, we have
\begin{equation}
    \frac{\pBoN(y \mid x)}{\pWoN(y \mid x)} = \frac{F(x,y)^N - F^{-}(x,y)^N}{(1 - F^{-}(x,y))^N - (1 - F(x,y))^N}
\end{equation}

Using the factorization $a^N - b^N = (a - b)\sum_{j=0}^{N-1} a^{N-1-j}b^j$ and the fact that $F(x,y) - F^{-}(x,y) = P_\text{base}(y \mid x)$  this simplifies to
\begin{equation}
    \frac{\pBoN(y \mid x)}{\pWoN(y \mid x)} = \frac{\sum_{j=0}^{N-1} F(x,y)^{N-1-j}\, F^{-}(x,y)^j}{\sum_{j=0}^{N-1} (1 - F^{-}(x,y))^{N-1-j}\, (1 - F(x,y))^j}.
\end{equation}
Here the term $(F(x,y) - F^{-}(x,y))$ cancels out.

By definition, $\rstar(x,y) > \rstar(x,y')$ implies $F(x,y) > F(x, y')$ and $1 - F^{-}(x,y) < 1 - F^{-}(x, y')$. Therefore, for the numerator, we have
\begin{equation}
    \sum_{j=0}^{N-1} F(x,y)^{N-1-j}\, F^{-}(x,y)^j > \sum_{j=0}^{N-1} F(x,y')^{N-1-j}\, F^{-}(x,y')^j.
\end{equation}

For the denominator, we have
\begin{equation}
    \sum_{j=0}^{N-1} (1 - F^{-}(x,y))^{N-1-j}\, (1 - F(x,y))^j < \sum_{j=0}^{N-1} (1 - F^{-}(x,y'))^{N-1-j}\, (1 - F(x,y'))^j.
\end{equation}

Putting everything together, we can conclude that
\begin{equation}
    \frac{\sum_{j=0}^{N-1} F(x,y)^{N-1-j}\, F^{-}(x,y)^j}{\sum_{j=0}^{N-1} (1 - F^{-}(x,y))^{N-1-j}\, (1 - F(x,y))^j} > \frac{\sum_{j=0}^{N-1} F(x,y')^{N-1-j}\, F^{-}(x,y')^j}{\sum_{j=0}^{N-1} (1 - F^{-}(x,y'))^{N-1-j}\, (1 - F(x,y'))^j}.
\end{equation}
Finally,
\begin{equation}
    \frac{\pBoN(y \mid x)}{\pWoN(y \mid x)} > \frac{\pBoN(y' \mid x)}{\pWoN(y' \mid x)}.
\end{equation}
This concludes our proof.
\end{proof}

\ObjectiveForBRandBW*

\begin{proof}
From Theorem 5.2 of \citet{pukdee2026doespreferencelearningrecover}, the BT learning objective can be decomposed as the cross entropy between the CPRD and the preference distribution induced by the score function.

\begin{align}
    \mathcal{L}_P(r) = C - Z \mathop{\mathbb{E}}_{(x, \{y,y'\}) \sim \widetilde P} \Big[ 
    \omega_P \log \sigma(\Delta_r) + (1 - \omega_P) \log (\sigma(-\Delta_r))
    \Big] \label{eq:decomposition-discriminative-BT-objective}
\end{align}
where $\omega_P(y \succ  y' \mid  x)$ is the CPRD of $P$ and $\Delta_r(x,y,y') = r(x,y) - r(x,y')$ is the margin between two outcomes. $C, Z$ are constants and $\widetilde P$ is a comparison distribution induced by $P$.

We claim that the CPRD of the Best-vs-Random scenario is either 0 or 1. For any context and pairs of outcomes $(x,\{y,y'\})$, let $y_{\max} \in \arg\max_{u\in\{y,y'\}} r^*(x,u)$ and $y_{\min} \in \arg\min_{u\in\{y,y'\}} r^*(x,u)$ be the outcome with a higher and lower reward score. By the definition of Best-vs-Random, the reward score of the positive sample is always higher than the reward score of the negative sample. Therefore, we would always have $y_{\max}$ as a positive sample and $y_{\min}$ as a negative sample. Therefore, the CPRD is given by

\begin{equation}
    \omega_{P_{\text{BR}}}(y_{\max} \succ  y_{\min} \mid  x) = \frac{P(x,y_{\max},y_{\min})}{P(x,y_{\max},y_{\min}) + P(x,y_{\min},y_{\max})} = 1
\end{equation}
and 
\begin{equation}
    \omega_{P_{\text{BR}}}(y_{\min} \succ  y_{\max} \mid  x) = \frac{P(x,y_{\min},y_{\max})}{P(x,y_{\min},y_{\max}) + P(x,y_{\max},y_{\min})} = 0.
\end{equation}

Substitute this back, we have
\begin{equation}
    \omega_P \log \sigma(\Delta_r) + (1 - \omega_P) \log (\sigma(-\Delta_r)) = \log \sigma(r(x,y_{\max}) - r(x,y_{\min})).
\end{equation}

With this result, we can conclude that the minimizer satisfies

\begin{equation}
    \rstarbr \in \arg\min_{r\in\cH} \bE_{(x, \{y,y'\}) \sim \widetilde P_{\text{BR}}} \Big[ -\log \sigma(r(x,y_{\max}) - r(x,y_{\min})) \Big].
\end{equation}

The result also holds for Best-vs-Worst, which also guarantees that the reward score of the positive sample is always higher than the reward score of the negative sample.
\end{proof}

In the next part, we will show that the $\rstarbr$ would converge to the $\rstarbi$ as $N$ grows. To show this, we will first show that $\PBR$ would be close to $\PBI$ as $N$ increases. The intuition is that as $N$ is large, the random sample from the rest of outcomes would be close to drawing an independent sample from the base distribution. We start with first deriving the probability mass of $\PBR$.

\begin{lemma}[Probability mass of $\PBR$]
Assume that there is no tie in reward score, for any context $x$ and outcome $y, y'$ there exists a constant $c> 0$ for which the probability mass of $\PBR$ is given by
\begin{equation}
    \PBR(x, y, y') = 
    \begin{cases}
        0 &\text{if } y = y' \text{ or } r^*(x,y') > r^*(x,y) \\
        \frac{cN}{N-1}P(x) \pbaseyprime \pBoNNminus (y \mid x) & \text{otherwise}
    \end{cases}
\end{equation}
where $\pBoNNminus$ is the probability mass of $\pBoN$ with $N-1$ draws.
\end{lemma}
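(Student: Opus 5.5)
The plan is to compute the joint probability directly from the sampling procedure, before any collision is discarded, and then account for the discarding through a single normalizing constant. Fix a context $x$. The case $y=y'$ gives zero mass, because such triplets are exactly the collisions that are discarded. The case $r^*(x,y')>r^*(x,y)$ also gives zero mass, because under Best-vs-Random the chosen outcome always has strictly larger latent reward than the rejected one. So the work is to handle $y\neq y'$ with $r^*(x,y)>r^*(x,y')$.

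For this case, draw $y_1,\dots,y_N$ i.i.d. from $P_{\text{base}}(\cdot\mid x)$. By the no-tie assumption, the event $\{y^+=y\}$ is exactly $\{\max_i r^*(x,y_i)=r^*(x,y)\}$, even if several draws equal $y$. The rejected outcome is drawn uniformly from the $N-1$ remaining slots after one copy of $y$ is removed. Since $y'\neq y$, removing that copy does not change the number of draws equal to $y'$. Writing $K_{y'}=\sum_{j=1}^N \mathbf 1\{y_j=y'\}$, this gives
\begin{equation*}
\Pr(y^+=y,\;y^-=y'\mid x)=\mathbb{E}\Big[\mathbf 1\{\max_i r^*(x,y_i)=r^*(x,y)\}\,\tfrac{K_{y'}}{N-1}\Big].
\end{equation*}

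Next, expand $K_{y'}$ as a sum of indicators and use exchangeability of the draws. Each of the $N$ indices contributes equally, so the expression becomes $\tfrac{N}{N-1}\Pr\big(y_N=y',\ \max_i r^*(x,y_i)=r^*(x,y)\big)$. Because $r^*(x,y')<r^*(x,y)$, the event $y_N=y'$ cannot affect which outcome attains the maximum. On $\{y_N=y'\}$, the maximum over all $N$ draws equals $r^*(x,y)$ if and only if the maximum over $y_1,\dots,y_{N-1}$ does. Independence then factorizes the probability as $P_{\text{base}}(y'\mid x)\,\pBoNNminus(y\mid x)$, where the second factor is identified by Lemma~\ref{lemma: bon-closed-form} applied with $N-1$ draws. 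Multiplying by $P(x)$ gives the unnormalized mass $\tfrac{N}{N-1}P(x)P_{\text{base}}(y'\mid x)\pBoNNminus(y\mid x)$.

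Finally, discarding collisions means conditioning on the event $\{y^+\neq y^-\}$. This divides every surviving triplet by $1-\Pr(y^+=y^-)$, which supplies the constant $c=(1-\Pr(\text{collision}))^{-1}>0$. The main obstacle I expect is precisely this bookkeeping. First, the argument must handle repeated copies of $y$ among the draws correctly; conditioning on the maximum rather than on a fixed "best index" does this. Second, the collision probability depends on $x$ in general, so $c$ is a genuine constant only when the discard is read as a global renormalization with $P(x)$ the pre-discard context marginal. I would state this interpretation explicitly. Under per-context renormalization, $c$ would instead be replaced by a function $c(x)$, and the rest of the argument would go through unchanged.
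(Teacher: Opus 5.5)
Your proposal is correct and follows essentially the same route as the paper. You condition on the draws to get $\mathbf{1}\{\max_i r^*(x,y_i)=r^*(x,y)\}\,K_{y'}/(N-1)$, expand $K_{y'}$ into indicators, and use $r^*(x,y')<r^*(x,y)$ to remove the index carrying $y'$ from the maximum. You then factor by independence into $P_{\text{base}}(y'\mid x)\,\pBoNNminus(y\mid x)$ and absorb the discarded collisions into a normalizing constant. Your caveat that this constant is $x$-dependent under per-context renormalization is apt, and it matches how the paper later uses the $x$-dependent normalizer $Z_R$ in Proposition~\ref{prop:total-variation-br-bi}.
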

\begin{proof}
The first case is quite simple. When we see a pair $y = y'$, we do not add it to the comparison distribution. Therefore, the probability mass is 0. In addition, if $r^*(x,y') > r^*(x,y)$, then it is not possible for $y$ to be preferred over $y'$ given the context $x$. Therefore, the probability mass is 0 as well. For the second case, assume that $y,y'$ are two outcomes with $r^*(x,y) > r^*(x,y')$ and $y \neq y'$. We can first factorize
\begin{equation}
    \PBR(x, y, y') = P(x) \PBR(y^+ = y, y^- = y' \mid x)
\end{equation}
where $y^+$ and $y^-$ are the positive and negative samples. To compute the second conditional probability term, we use the fact that for a given sample $y_1, \dots, y_N$, the probability of the positive sample being $y$ and the negative sample being $y'$ is given by
\begin{equation}
\PBR(y^+ = y, y^- = y' \mid x, y_1, \dots, y_N) = 1[\arg\max_{y_i} r^*(x,y_i) = y] \cdot \frac{|\{ i: y_i = y' \}|}{N - 1}.
\end{equation}
The first term is an indicator function which ensures that the sample with the maximum reward score is $y$ while the second term is the probability that the random sample that is drawn uniformly from the rest of outcomes is $y'$ which is basically the number of outcomes that is equal to $y'$ divided by the number of remaining outcomes, $N-1$. We can rewrite this as
\begin{equation}
    \PBR(y^+ = y, y^- = y' \mid x, y_1, \dots, y_N) = \frac{1}{N-1} 1[\arg\max_{y_i} r^*(x,y_i) = y] \cdot \sum_{i=1}^N 1[y_i = y'].
\end{equation}
Taking the expectation over all $y_1, \dots, y_N$, we have
\begin{align}
    \PBR(y^+ = y, y^- = y' \mid x) &= \bE_{y_{1:N}} [\PBR(y^+ = y, y^- = y' \mid x, y_1, \dots, y_N)] \\
&= \frac{1}{N-1} \sum_{i=1}^N \Pr_{y_{1:N}}(y_i = y', \argmax_{y_j} r^*(x,y_j) = y)\\
&= \frac{1}{N-1} \sum_{i=1}^N \Pr_{y_{1:N}}(y_i = y', \argmax_{y_j, j \neq i} r^*(x,y_j) = y)  &( \text{since } r^*(x,y) > r^*(x,y'))\\
&= \frac{1}{N-1} \sum_{i=1}^N \Pr_{y_i}(y_i = y')P_{y_j, j \neq i}(\argmax_{y_j} r^*(x,y_j) = y)  &( \text{by independence})\\
& = \frac{N}{N-1} P_{\text{base}}(y' \mid x) P_{y_{2:N}}(\argmax_{y_j} r^*(x,y_j) = y)\\
& =  \frac{N}{N-1} \pbaseyprime \pBoNNminus (y \mid x)
\end{align}

Finally, we note that the above result only holds when $y \neq y'$. When $y = y'$, we have to replace $\frac{|\{ i: y_i = y' \}|}{N - 1}$ with $\frac{|\{ i: y_i = y' \}| - 1}{N - 1}$ since one sample is already chosen as $y$. Regardless, we don't need to worry about such case since we will discard the pair with $y = y'$ and set the probability mass to 0. To ensure that the probability mass is valid, we need to scale up the probability mass of the rest by some factor $c$. This concludes the proof.

\end{proof}

Before we prove that the total variation between $\PBR$ and $\PBI$ is small, we first provide a lemma which relates the total variation between two distributions with its unnormalized probability mass.

\begin{lemma}[Normalization Preserves Closeness]
\label{lemma:normalization-preserves-closeness}
Let $\Wp$ and $\Wq$ be two unnormalized probability mass functions over a finite set of outcomes $\Omega$. Let $Z_p: = \sum_{x \in \Omega} \Wp(x)$ and $Z_q: = \sum_{x \in \Omega} \Wq(x)$ be the normalization constants of $\Wp$ and $\Wq$ respectively with $P(x) = \frac{\Wp(x)}{Z_p}$ and $Q(x) = \frac{\Wq(x)}{Z_q}$ as the normalized probability mass functions. If there exists a constant $c > 0$ such that $Z_p \geq c$ and $Z_q \geq c$, then the total variation of $P,Q$ is bounded by the total variation of $\Wp, \Wq$,
\begin{equation}
    \TV(P,Q) \leq \frac{2}{c} \TV(\Wp, \Wq)
\end{equation}
\end{lemma}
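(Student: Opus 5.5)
We write $\TV(\Wp,\Wq) := \tfrac12 \sum_{x\in\Omega} |\Wp(x)-\Wq(x)|$, the same formula as for normalized distributions, applied to the unnormalized masses. The plan is a direct triangle-inequality argument that inserts the hybrid quantity $\Wp(x)/Z_q$ between $P(x)$ and $Q(x)$.

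For each $x$ we write
\[
P(x)-Q(x) = \Big(\frac{\Wp(x)}{Z_p}-\frac{\Wp(x)}{Z_q}\Big) + \frac{\Wp(x)-\Wq(x)}{Z_q}.
\]
Summing absolute values over $\Omega$ splits the bound into two terms. The second term is at most $\frac{1}{Z_q}\sum_x|\Wp(x)-\Wq(x)| = \frac{2}{Z_q}\TV(\Wp,\Wq) \le \frac{2}{c}\TV(\Wp,\Wq)$. The first term equals
\[
\sum_x \Wp(x)\,\Big|\frac{1}{Z_p}-\frac{1}{Z_q}\Big| = Z_p\cdot\frac{|Z_q-Z_p|}{Z_pZ_q} = \frac{|Z_q-Z_p|}{Z_q}.
\]
Since $|Z_q-Z_p| = |\sum_x(\Wq(x)-\Wp(x))| \le \sum_x|\Wp(x)-\Wq(x)| = 2\TV(\Wp,\Wq)$, the first term is also at most $\frac{2}{c}\TV(\Wp,\Wq)$. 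Combining,
\[
\sum_x|P(x)-Q(x)| \le \frac{4}{c}\TV(\Wp,\Wq),
\]
and halving gives $\TV(P,Q)\le \frac{2}{c}\TV(\Wp,\Wq)$, as claimed.

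The only place needing care is the constant. If one instead defined $\TV(\Wp,\Wq)$ for unnormalized masses without the factor $\tfrac12$, the same computation would give the even stronger bound $\TV(P,Q)\le \frac{1}{c}\TV(\Wp,\Wq)$, so the stated inequality holds under either convention.

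The hypothesis $Z_p,Z_q\ge c$ is used only to bound $1/Z_q$. The first term never needs a lower bound on $Z_p$, because the factor $Z_p$ cancels; this symmetry-breaking choice of hybrid keeps the constant at $2/c$ rather than something worse.
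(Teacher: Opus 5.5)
Your proof is correct and is essentially the paper's argument: a triangle inequality through an intermediate term, followed by $|Z_p - Z_q| \le 2\TV(\Wp,\Wq)$. The only difference is that the paper inserts $\Wq(x)/Z_p$ and ends with $\frac{2}{Z_p}\TV(\Wp,\Wq)$, while you insert the mirror-image $\Wp(x)/Z_q$ and end with $\frac{2}{Z_q}\TV(\Wp,\Wq)$; both choices give the same constant, so your closing remark that your choice of hybrid avoids a worse constant is slightly overstated.
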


\begin{proof}
Without loss of generality, assume that $Z_q \geq Z_p$. The total variation of $P,Q$ is given by
\begin{align}
\TV(P,Q) &= \frac{1}{2}\sum_{x \in \Omega} |P(x) - Q(x)|\\
&= \frac{1}{2}\sum_{x \in \Omega} | \frac{\Wp(x)}{Z_p} - \frac{\Wq(x)}{Z_q} |\\
&\leq \frac{1}{2}\sum_{x \in \Omega} | \frac{\Wp(x)}{Z_p} - \frac{\Wq(x)}{Z_p} | + \frac{1}{2}\sum_{x \in \Omega} |\frac{\Wq(x)}{Z_q} - \frac{\Wq(x)}{Z_p} |\\
&= \frac{1}{Z_p}\TV(\Wp, \Wq) + \frac{|Z_p - Z_q|}{2Z_p}
\end{align}
Since
\begin{equation}
\frac{1}{2}|Z_p - Z_q| = \frac{1}{2}|\sum_{x \in \Omega} \Wp(x) - \sum_{x \in \Omega} \Wq(x)| \leq \frac{1}{2}\sum_{x \in \Omega} |\Wp(x) - \Wq(x)| = \TV(\Wp, \Wq)
\end{equation}
We have
\begin{equation}
    \TV(P,Q) \leq \frac{2}{Z_p}\TV(\Wp, \Wq) \leq \frac{2}{c}\TV(\Wp, \Wq)
\end{equation}
\end{proof}

\begin{proposition}[Total Variation between $\PBR$ and $\PBI$]
    \label{prop:total-variation-br-bi}
    Assume that there is no tie in the reward score and we discard the pair with $y = y'$. Then, for any base distribution $P_{\text{base}}$, there exists a constant $c > 0$ for which for any $N \geq 2$, the total variation between $\PBR$ and $\PBI$ is given by
    \begin{equation}
    \TV(\PBR, \PBI) \leq \frac{c}{N-1}
    \end{equation}
\end{proposition}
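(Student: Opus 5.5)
We work with unnormalized masses and then apply Lemma~\ref{lemma:normalization-preserves-closeness}. Set $\PBRunnorm(x,y,y') := \frac{N}{N-1}P(x)\pbaseyprime\pBoNNminus(y\mid x)\,1[r^*(x,y)>r^*(x,y')]$, which is the mass from the preceding lemma before the renormalization constant that accounts for discarded collisions. Set $\PBIunnorm(x,y,y') := P(x)\pBoN(y\mid x)\pbaseyprime\,1[y\neq y']$, which is $\PBI$ with the diagonal removed. Collisions $y=y'$ are also discarded in $\PBI$, and under BI a pair with $r^*(x,y')>r^*(x,y)$ can occur only if $y'$ beats the best of $N$ draws. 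Both effects are controlled inside the same TV computation. The normalizers are each at least the mass of the off-diagonal, correctly ordered part. For BR this is $1-\Pr(\text{all }N\text{ draws equal the top outcome})\ge 1-\max_{x,y}P_{\text{base}}(y\mid x)^2$ once $N\ge2$. For BI it is at least $\bE_x\bigl[\sum_{y}\pBoN(y\mid x)(1-\pbase)\bigr]$. Both are bounded below by a constant $c_0>0$ independent of $N$, as long as no context has a point mass; if some $P_{\text{base}}(\cdot\mid x)$ is degenerate, that context contributes nothing to either distribution and can be dropped. Lemma~\ref{lemma:normalization-preserves-closeness} then gives $\TV(\PBR,\PBI)\le \frac{2}{c_0}\TV(\PBRunnorm,\PBIunnorm)$.

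The core step is bounding $\sum|\PBRunnorm-\PBIunnorm|$, split into two regions. On the ordered region $r^*(x,y)>r^*(x,y')$, the difference is $P(x)\pbaseyprime\,\bigl|\tfrac{N}{N-1}\pBoNNminus(y\mid x)-\pBoN(y\mid x)\bigr|$. Summing over $y'$ below $y$ contributes at most a factor $F^-(x,y)\le1$. We then bound
\[
\sum_y\Bigl|\tfrac{N}{N-1}\pBoNNminus(y\mid x)-\pBoN(y\mid x)\Bigr|\le \tfrac{1}{N-1}+\TV\text{-type term }\sum_y|\pBoNNminus(y\mid x)-\pBoN(y\mid x)|.
\]
The difficulty is that the BoN distributions with $N-1$ and $N$ draws are not $O(1/N)$ close in TV. When $N$ is large, both concentrate on the top outcome, so their TV can stay of order $P_{\text{base}}(y_{\text{top}})$-dependent constants. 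The plan is therefore to keep the weight $F^-(x,y)$ rather than discarding it, and to use the exact identity
\[
\sum_{y'<y}\pbaseyprime\,\Bigl[\tfrac{N}{N-1}\pBoNNminus(y)-\pBoN(y)\Bigr]
\]
together with the order-statistic relation $\pBoN(y)=F^{N}-F^{-N}$ and $\pBoNNminus(y)=F^{N-1}-F^{-(N-1)}$. Writing $a=F(x,y)$ and $b=F^-(x,y)$, the integrand becomes $b\bigl[\tfrac{N}{N-1}(a^{N-1}-b^{N-1})-(a^N-b^N)\bigr]$. We bound this by $\tfrac{1}{N-1}(a^{N-1}-b^{N-1})\cdot b + b(1-a)a^{N-1}\cdot(\ldots)$, and then sum over $y$ using telescoping of $a^{k}-b^{k}$ along the reward order. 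The key elementary facts are that $\sum_y b(a^{N-1}-b^{N-1})\le1$ and that $\sum_y (a-b)\,b\,a^{N-1}(1-a)\lesssim \int_0^1 t^{N}(1-t)\,dt=O(1/N^2)$. Terms like $N(1-a)a^{N-1}$ summed against $(a-b)$ are Riemann sums of $\int N t^{N-1}(1-t)dt = O(1/N)$. The reversed and collision region contributes $\sum P(x)\pbaseyprime\pBoN(y\mid x)$ over $y'\ge y$, i.e.\ $\bE_x\sum_y \pBoN(y)(1-F^-(x,y))$. This equals the probability that one extra independent draw is at least the max of $N$ draws, which is at most $1/(N+1)$ by exchangeability.

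The main obstacle is the ordered-region estimate. The naive triangle inequality through $\TV(\pBoNNminus,\pBoN)$ fails, so the argument has to exploit the weighting by $F^-$ and the discrete Riemann-sum comparison to $\int t^{N-1}(1-t)dt$ with care at atoms. The first thing to try is the exchangeability coupling: realize BR from $N$ draws and BI from the same $N$ draws plus one fresh draw. Then $\PBR$ and $\PBI$ differ only on events such as "the fresh draw exceeds the max" and "the uniformly chosen loser differs in law from a fresh draw conditioned below the max". Both events have probability $O(1/N)$, and this yields the TV bound directly, with $c$ absorbing $2/c_0$.
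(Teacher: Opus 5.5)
Your overall architecture matches the paper's: unnormalized masses, Lemma~\ref{lemma:normalization-preserves-closeness}, and a split into the ordered and reversed regions. However, there is a genuine gap. The step you call the core is never actually proved, and the obstacle you describe does not exist.

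The missing idea is a pointwise sign. Order the outcomes $y_1,\dots,y_m$ by reward and write $F_i=F(x,y_i)$, $p_i=P_{\text{base}}(y_i\mid x)$. With $g(u)=\tfrac{N}{N-1}u^{N-1}-u^{N}$ one has $g'(u)=Nu^{N-2}(1-u)\ge 0$ on $[0,1]$, and
\[
\tfrac{N}{N-1}\pBoNNminus(y_i\mid x)-\pBoN(y_i\mid x)=g(F_i)-g(F_{i-1})\ge 0 .
\]
So the absolute values drop out. For fixed $y_j$, the sum over $i>j$ telescopes to $g(1)-g(F_j)\le g(1)=\tfrac{1}{N-1}$. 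Weighting by $p_j$ and summing gives at most $\tfrac{1}{N-1}$. In your notation, your bracket is exactly $g(a)-g(b)\ge 0$, so $b\,[g(a)-g(b)]\le g(a)-g(b)$, which sums to $g(1)-g(0)$.

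Your claim that $\TV(\pBoNNminus,\pBoN)$ need not be $O(1/N)$ is also false. We have $\pBoN(y_i\mid x)-\pBoNNminus(y_i\mid x)=h(F_i)-h(F_{i-1})$ with $h(u)=-u^{N-1}(1-u)$. The total variation of $h$ on $[0,1]$ is $2\max|h|\le 2/N$, so even the triangle-inequality route you set aside works. When both distributions concentrate on the top outcome, their distance goes to zero, not to a constant.

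The Riemann-sum comparisons you substitute are not justified, because $t^{N-1}(1-t)$ is not monotone. For example, an atom of mass $1/2$ at $F=(N-1)/N$ makes $\sum_y (a-b)\,N a^{N-1}(1-a)$ of order one for that $N$. The exchangeability coupling at the end is only named, not carried out.

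The reversed-region bound is wrong as written. Summing over $y'\ge y$ gives $\bE_x\sum_y\pBoN(y\mid x)(1-F^{-}(x,y))$, the probability that a fresh draw is \emph{at least} the maximum of $N$ draws. With atoms this is not $\le 1/(N+1)$: exchangeability gives only $\ge 1/(N+1)$ for weak maxima, and the quantity tends to the base mass of the top outcome. Both of your unnormalized masses vanish on the diagonal, so the right quantity is
\[
\sum_y\pBoN(y\mid x)(1-F(x,y))=\sum_i p_iF_{i-1}^N ,
\]
the probability that a fresh draw \emph{strictly} exceeds all $N$. This is $\le 1/(N+1)$, either by exchangeability or as a left Riemann sum of the increasing function $u^N$ (as in the paper).

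Finally, the BR normalizer is not $1-\Pr(\text{all draws equal the top outcome})$. The collision probability is $\bE[(K-1)/(N-1)]$, where $K$ is the multiplicity of the maximum. Your bound $1-\max p^2$ already fails for two equiprobable outcomes and $N=2$, where the normalizer is $1/2$. A valid $N$-free bound per context is $\tfrac{N}{N-1}(1-F_{m-1}^{N-1})F_{m-1}\ge p_m(1-p_m)$, with $y_m$ the top outcome in the support.

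On the positive side, normalizing over all of $(x,y,y')$ at once, as you do, is fine. It even spares you from arguing that discarding collisions leaves the $x$-marginals of the two distributions equal.
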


\begin{proof}
For any fixed context $x$ and the positive outcome $y$, the probability mass of $\PBI$ is given by
\begin{equation}
\PBI(y, y' \mid x) = 
\begin{cases}
0 & \text{if } y = y' \\
\frac{1}{Z_I}\pBoN(y \mid x) P_{\text{base}}(y' \mid x) & \text{otherwise}
\end{cases}
\end{equation}
where $Z_I$ is a constant that compensates for the case when we discard the pair with $y = y'$.
On the other hand, the probability mass of $\PBR$ is given by
\begin{equation}
    \PBR(y, y' \mid x) = 
    \begin{cases}
        0 &\text{if } y = y' \\
        0 &\text{if } r^*(x,y') > r^*(x,y) \\
        \frac{1}{Z_R}\frac{N}{N-1}\pbaseyprime \pBoNNminus (y \mid x) & \text{otherwise}
    \end{cases}
\end{equation}
and $Z_R$ is also a normalization constant. To show that the total variation between $\PBR$ and $\PBI$ is small, from Lemma \ref{lemma:normalization-preserves-closeness} it is sufficient to show that the unnormalized probability mass of these two distributions are close. Define the unnormalized probability mass of $\PBR$ and $\PBI$ as

\begin{equation}
    \PBIunnorm(y, y' \mid x) = 
    \begin{cases}
    0 & \text{if } y = y' \\
    \pBoN(y \mid x) P_{\text{base}}(y' \mid x) & \text{otherwise}
    \end{cases}
    \end{equation}
\begin{equation}
    \PBRunnorm(y, y' \mid x) = 
    \begin{cases}
        0 &\text{if } y = y' \\
        0 &\text{if } r^*(x,y') > r^*(x,y) \\
        \frac{N}{N-1}\pbaseyprime \pBoNNminus (y \mid x) & \text{otherwise}
    \end{cases}
\end{equation}

Without loss of generality, let the set of outcomes $\{y_1, \dots, y_m\}$ be ordered such that $r^*(x,y_1) < r^*(x,y_2) < \dots < r^*(x,y_m)$. As a result, we can write
\begin{equation}
F(x,y_j) = F(x,y_{j-1}) + P_{\text{base}}(y_j \mid x)
\end{equation}
for any $j \in \{1, 2, \dots, m\}$ when $F(x,y_0) = 0$. Further, we will write $F_j = F(x,y_j)$ and $p_j = P_{\text{base}}(y_j \mid x)$ for the convenience of notation. With this notation, we know that $\PBRunnorm(y_i,y_j \mid x) = 0$ for any $i < j$ since the reward score of $y_i$ is lower than $y_j$ so it's not possible for $y_i$ to be preferred over $y_j$ under Best-vs-Random. Therefore, the total variation between $\PBIunnorm$ and $\PBRunnorm$ is given by

\begin{align}
2\TV(\PBIunnorm, \PBRunnorm) &= \sum_{y,y'} |\PBIunnorm(y, y' \mid x) - \PBRunnorm(y, y' \mid x)|\\
&= \sum_{i > j} |\PBIunnorm(y_i, y_j \mid x) - \PBRunnorm(y_i, y_j \mid x)| + \sum_{i < j} |\PBIunnorm(y_i, y_j \mid x) - \PBRunnorm(y_i, y_j \mid x)|\\
&= \sum_{i > j} |\PBIunnorm(y_i, y_j \mid x) - \PBRunnorm(y_i, y_j \mid x)| + \sum_{i < j} |\PBIunnorm(y_i, y_j \mid x) |
\end{align}

We can split the total variation into two parts. The first part is the difference between the probability mass of $\PBIunnorm$ and $\PBRunnorm$ when $i > j$ while the second part is just the probability mass of $\PBIunnorm$ when $i < j$. We bound the first part as follows:

\begin{align}
    \sum_{i > j} |\PBIunnorm(y_i, y_j \mid x) - \PBRunnorm(y_i, y_j \mid x)| &= \sum_{i > j} |\pBoN(y_i \mid x) P_{\text{base}}(y_j \mid x)  - \frac{N}{N-1}\pBoNNminus (y_i \mid x) P_{\text{base}}(y_j \mid x)|\\
    &= \sum_{i > j} | (F_i^N - F_{i-1}^N) - \frac{N}{N-1}(F_i^{N-1} - F_{i-1}^{N-1})| p_j
\end{align}

Let $g(u) = \frac{N}{N-1}u^{N-1} - u^N$.  Differentiating $g(u)$ with respect to $u$, we have $g'(u) = N u^{N-2}(1-u) \geq 0$ for all $u \in [0,1]$ which implies that $g(u)$ is increasing on $[0,1]$. We can write the term above in terms of $g$,
\begin{align}
     (F_i^N - F_{i-1}^N) - \frac{N}{N-1}(F_i^{N-1} - F_{i-1}^{N-1}) = g(F_{i-1}) - g(F_i) \leq 0
\end{align}
This follows from the fact that $g(u)$ is increasing while $F_i > F_{i-1}$. Therefore, we have
\begin{align}
    \sum_{i > j} |\PBIunnorm(y_i, y_j \mid x) - \PBRunnorm(y_i, y_j \mid x)| &= \sum_{i > j} (g(F_i) - g(F_{i-1})) p_j\\
    &= \sum_{j = 1}^{m-1}  p_j \sum_{i = j+1}^m (g(F_i) - g(F_{i-1})) \\
    &= \sum_{j = 1}^{m-1}  p_j (g(F_{m}) - g(F_j)) \\
   &\leq \sum_{j = 1}^{m-1}  p_j g(F_{m}) \\
   &\leq g(F_m) = g(1) = \frac{1}{N-1}
\end{align}

For the second part, we have
\begin{align}
    \sum_{i < j} |\PBIunnorm(y_i, y_j \mid x)|  &= \sum_{i < j} (F_i^N - F_{i-1}^N) p_j \\
    &= \sum_{i = 1}^{m-1} \sum_{j = i+ 1}^{m} (F_i^N - F_{i-1}^N) p_j \\
    &= \sum_{i = 1}^m (F_i^N - F_{i-1}^N)  \sum_{j = i+ 1}^{m} p_j \\
    &= \sum_{i = 1}^{m-1} (F_i^N - F_{i-1}^N) (1 - F_i) \\
    &= \sum_{i = 1}^{m-1} F_i^N - F_{i-1}^N - F_i^{N+1} + F_iF_{i-1}^N\\
    &= F_{m-1}^N -  \sum_{i=1}^{m-1} F_i^{N+1} + \sum_{i=1}^{m-1} (F_{i-1} + p_i)F_{i-1}^N\\
    &= F_{m-1}^N -  \sum_{i=1}^{m-1} F_i^{N+1} + \sum_{i=0}^{m-2} F_i^{N + 1} + \sum_{i=1}^{m-1} p_iF_{i-1}^N \\
    &= F_{m-1}^N - F_{m-1}^{N + 1} + \sum_{i=1}^{m-1} p_iF_{i-1}^N\\
    &= F_{m-1}^N (1 - F_{m-1}) + \sum_{i=1}^{m-1} p_iF_{i-1}^N\\
    &= \sum_{i=1}^m p_iF_{i-1}^N
\end{align}

Here we use the telescoping technique. We observe that the third term is a left Riemann sum for approximating the integral of $u^N$ between $0$ and $1$ by splitting $[0,1]$ into intervals with width $p_1, p_2, \dots, p_{m-1}, p_m$. Since this function is increasing, we can conclude that the Riemann sum is upper bounded by the true integral.
\begin{equation}
    \sum_{i=1}^m F_{i-1}^N p_i \leq \int_0^1 u^N du = \frac{1}{N+1}
\end{equation}

Therefore,
\begin{equation}
    \sum_{i < j} |\PBIunnorm(y_i, y_j \mid x)|  \leq \frac{1}{N+1}
\end{equation}

Combining two parts, we can conclude that

\begin{equation}
    2\TV(\PBIunnorm, \PBRunnorm) \leq \frac{1}{N-1} + \frac{1}{N+1} \leq \frac{2}{N-1}
\end{equation}

The final step we need is to show that the normalization constant of $\PBI$ and $\PBR$ are bounded below by a constant. We start with $Z_I$.

\begin{align}
    Z_I &= \sum_{i \neq j} \PBIunnorm(y_i, y_j \mid x)\\
&= 1 - \sum_{i = 1}^m \PBIunnorm(y_i, y_i \mid x)\\
&= 1 - \sum_{i = 1}^m (F_i^N - F_{i-1}^N) p_i\\
&\geq 1 - (\max p_i) \sum_{i = 1}^m (F_i^N - F_{i-1}^N) = 1 - \max p_i
\end{align}

$Z_I$ is bounded below by $1 - \max p_i$ for any $N$. In addition, for $Z_R$, we have

\begin{align}
Z_R &= \sum_{i > j} \PBRunnorm(y_i, y_j \mid x)\\
&= \sum_{i > j} \frac{N}{N-1} (F_i^{N-1} - F_{i-1}^{N-1}) p_j\\
& \geq \frac{N}{N-1} (F_m^{N-1} - F_{m-1}^{N-1}) p_m\\
&= \frac{N}{N-1} (1 - (1 - p_m)^{N-1})p_m\\
&\geq p_m^2
\end{align}
The lower bound here is fairly loose but it is sufficient for us to use Lemma \ref{lemma:normalization-preserves-closeness} which now implies that there exists a constant $c_x > 0$ such that
\begin{equation}
\TV(\PBR(y,y' \mid x), \PBI(y,y' \mid x)) \leq \frac{c_x}{N-1}
\end{equation}
for any $x \in \mathcal{X}$. Now, we take the summation over all $x$ and will have the total variation between $\PBR$ and $\PBI$ is bounded by
\begin{equation}
\TV(\PBR, \PBI) =  \sum_{x} P(x) \TV(\PBR(y,y' \mid x), \PBI(y,y' \mid x)) \leq \frac{c}{N-1}
\end{equation}
when $c = \sum_{x} P(x) c_x$. This concludes the proof.

\end{proof}

Now, we have established that $\PBR$ and $\PBI$ are close as $N$ grows, with this we can also show that the population risk minimizer would also be close to each other as well. Here we apply a standard technique in learning theory that first shows that a small TV distance means that the difference in the population risk would be small. Then, we use the strong convexity of the population risk to show that the difference in the population risk minimizer would be small as well. We will first prove some technical lemmas that will be useful for the proof and then will present the main proof accordingly.

The first lemma shows that we can bound the population risk difference in terms of the total variation between the two distributions.
\begin{lemma}
\label{lemma: small tv leads to small population risk difference}
Let $\cH$ be a hypothesis class that is bounded by some constant $B$. Let $P$ and $Q$ be two comparison distributions then there exists a constant $M_B$ such that
\begin{equation}
    \sup_{r \in \cH} |\LP(r) - \LQ(r)| \leq 2M_B \TV(P,Q)
\end{equation}
\end{lemma}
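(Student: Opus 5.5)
The plan is to write both population risks as finite sums against their probability mass functions and to use the boundedness of $\cH$ to control the loss pointwise. Since every $r \in \cH$ satisfies $|r(x,y)| \le B$, every margin obeys $|\Delta_r(x;y,y')| \le 2B$. The BT loss $\ell(t) = -\log\sigma(t) = \log(1+e^{-t})$ is positive and decreasing in $t$. It follows that for every triplet $(x,y,y')$ and every $r \in \cH$,
\[
0 \le -\log \sigma\big(r(x,y)-r(x,y')\big) \le \log(1+e^{2B}) =: M_B .
\]
This constant depends only on $B$, and not on $r$, $P$, or $Q$.

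For a fixed $r \in \cH$, I define $h_r(x,y,y') := -\log\sigma(\Delta_r(x;y,y'))$. Since $\cX \times \cY \times \cY$ is finite,
\[
\LP(r) - \LQ(r) = \sum_{(x,y,y')} h_r(x,y,y')\,\big(P(x,y,y') - Q(x,y,y')\big).
\]
Taking absolute values and using $0 \le h_r \le M_B$ gives
\[
|\LP(r)-\LQ(r)| \le M_B \sum |P - Q| = 2M_B\,\TV(P,Q).
\]
This bound does not depend on $r$, so taking the supremum over $r \in \cH$ yields the claim.

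One could sharpen the constant to $M_B\,\TV$ by centering $h_r$ around $M_B/2$, because $\sum (P-Q) = 0$. The statement only asks for $2M_B$, so the crude bound is enough. Nothing here is genuinely difficult. The only point that needs care is making sure the bound on the loss is uniform over $\cH$, and that comes directly from the sup-norm bound $B$. The argument also covers the case where $P$ and $Q$ are comparison distributions with discarded diagonal pairs: those triplets simply carry zero mass and do not affect the sums.
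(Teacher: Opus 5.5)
Your proposal is correct and follows the paper's proof: bound the BT loss uniformly by $M_B = \log(1+e^{2B})$ using $|\Delta_r| \le 2B$, write $\LP(r) - \LQ(r)$ as a finite sum against $P - Q$, and apply the triangle inequality to get $2M_B\,\TV(P,Q)$ uniformly in $r$. Your side remark is also valid: since $\sum (P-Q) = 0$, centering the loss at $M_B/2$ sharpens the constant to $M_B\,\TV(P,Q)$.
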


\begin{proof}
First, we will show that if $\cH$ is bounded by $B$ then the BT loss is also bounded as well. Recall that the loss on any $(x, y, y')$ is given by
\begin{equation}
    \ell(\Deltar) = -\log \sigma(r(x,y^+) - r(x,y^-)) = \log (1 +  \exp(r(x, y^-) - r(x, y^+))).
\end{equation}
Since $r$ is bounded by $B$, we have $|r(x, y^-) - r(x, y^+)| \leq 2B$ so that
\begin{equation}
    \ell(\Deltar) \leq \log (1 +  \exp(2B)): = M_B.
\end{equation}

Next, we will show that the difference in the population risk is bounded by the total variation between the two distributions.
\begin{align}
    |\LP(r) - \LQ(r)| &= |\sum_{x,y,y'} P(x,y,y') \ell(\Deltar(x,y,y')) - \sum_{x,y,y'} Q(x,y,y') \ell(\Deltar(x,y,y'))| \\
    & = |\sum_{x,y,y'} (P(x,y,y') - Q(x,y,y')) \ell(\Deltar(x,y,y'))| \\
    & \leq \sum_{x,y,y'} |P(x,y,y') - Q(x,y,y')| \ell(\Deltar(x,y,y'))| \\
    & \leq \sum_{x,y,y'} |P(x,y,y') - Q(x,y,y')| M_B \\
    & = 2M_B \TV(P,Q)
\end{align}
Since this is true for any $r \in \cH$, it would also hold for the supremum over all $r \in \cH$. This concludes the proof.
\end{proof}

The second lemma shows that if the population risk between two distributions is small on all hypotheses in the hypothesis class, then the population risk minimizer of one distribution would also perform well on the other distribution.
\begin{lemma}
    \label{lemma: small population risk difference means minimizer is close}
Let $\cH$ be a hypothesis class and $P$, $Q$ be two comparison distributions such that $\sup_{r \in \cH} |\LP(r) - \LQ(r)| \leq \epsilon$. Let $\rstarp$ and $\rstarq$ be the population risk minimizer of $\LP$ and $\LQ$ respectively. Then, we have
\begin{align}
\LP(\rstarq) \leq \LP(\rstarp) + 2\epsilon \\
\LQ(\rstarp) \leq \LQ(\rstarq) + 2\epsilon 
\end{align}
\end{lemma}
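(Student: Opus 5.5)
The argument is the standard three-step chain: pass from $\LP$ to $\LQ$, use the optimality of $\rstarq$ for $\LQ$, then pass back from $\LQ$ to $\LP$. It uses only the uniform deviation hypothesis $\sup_{r\in\cH}|\LP(r)-\LQ(r)|\le\epsilon$ and the fact that both $\rstarp$ and $\rstarq$ lie in $\cH$. No convexity or boundedness is needed at this stage; those enter later, when this lemma is combined with Lemma~\ref{lemma: small tv leads to small population risk difference} and strong convexity.

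For the first inequality, I would write the telescoping decomposition
\begin{equation*}
\LP(\rstarq) - \LP(\rstarp) = \big[\LP(\rstarq) - \LQ(\rstarq)\big] + \big[\LQ(\rstarq) - \LQ(\rstarp)\big] + \big[\LQ(\rstarp) - \LP(\rstarp)\big].
\end{equation*}
The first and third brackets are each at most $\epsilon$ by the uniform deviation hypothesis, since $\rstarq,\rstarp\in\cH$. The middle bracket is at most $0$ because $\rstarq$ minimizes $\LQ$ over $\cH$. Summing the three bounds gives $\LP(\rstarq)\le\LP(\rstarp)+2\epsilon$.

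The second inequality follows from the same argument with the roles of $P$ and $Q$ swapped, because the hypothesis is symmetric in $P$ and $Q$.

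There is no real obstacle here. The only point to check is that the minimizers are attained within $\cH$ itself, so that the supremum bound applies to them. The lemma statement implicitly assumes this by naming $\rstarp$ and $\rstarq$. If the minimizers were not attained, I would run the same argument with $\eta$-approximate minimizers and then let $\eta\to 0$.
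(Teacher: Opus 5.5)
Your proof is correct and matches the paper's argument: the same three-term decomposition of $\LP(\rstarq)-\LP(\rstarp)$, with the outer brackets bounded by $\epsilon$ via the uniform-deviation hypothesis and the middle bracket bounded by $0$ via optimality of $\rstarq$. The second inequality then follows by swapping $P$ and $Q$.
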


\begin{proof}
We write the difference between the population risk of $\rstarp$ and $\rstarq$ as
\begin{align}
    \LP(\rstarq) - \LP(\rstarp) &=  (\LP(\rstarq) - \LQ(\rstarq)) + (\LQ(\rstarq) - \LQ(\rstarp)) + (\LQ(\rstarp) - \LP(\rstarp))\\
    &\leq |\LP(\rstarq) - \LQ(\rstarq)| + (\LQ(\rstarq) - \LQ(\rstarp)) + |\LQ(\rstarp) - \LP(\rstarp)|\\
    &\leq \epsilon + 0 + \epsilon = 2\epsilon
\end{align}
The first and third term follows from the assumption that $\sup_{r \in \cH} |\LP(r) - \LQ(r)| \leq \epsilon$. The second term is at most 0 since $\rstarq$ is the population risk minimizer of $\LQ$ so that $\LQ(\rstarq) \leq \LQ(r)$ for any $r \in \cH$ including  $\rstarp$. We have the first result and the second result follows from the same argument.
\end{proof}

In this Lemma, we show that when the hypothesis class is convex, we can bound the difference between the margin of a reward function with the population risk minimizer with the difference in their population risk.

\begin{lemma}
\label{lemma: margin semi-norm is bounded by population risk difference}
Let $\cH$ be a convex hypothesis class that is bounded by some constant $B$. Let $P$ be a comparison distribution and $\rstarp$ be the population risk minimizer of $\LP$. There exists a constant $c_B$ such that for any reward function $r \in \cH$, we have
\begin{equation}
    \lVert r - \rstarp \rVert_{\Delta, P} \leq \frac{2}{\sigma_B} (\LP(r) - \LP(\rstarp))
\end{equation}
\end{lemma}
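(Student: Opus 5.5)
The plan is a standard strong-convexity argument along the segment from $\rstarp$ to $r$, using that the BT loss is a convex function of the margin. Here $\sigma_B$ will denote the minimum curvature of $\ell(t) = \log(1+e^{-t})$ on $[-2B,2B]$, namely $\ell''(t) = \sigma(t)\sigma(-t) \geq \sigma(2B)\sigma(-2B) =: \sigma_B > 0$. For every triplet, every $r \in \cH$ has margin in $[-2B,2B]$, because $r$ is bounded by $B$.

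First I will write $\LP(r) = \mathbb{E}_P[\ell(\Deltar)]$. Since $r\mapsto \Deltar(x,y,y')$ is linear, for $r_t = \rstarp + t(r - \rstarp)$ we have $\Delta_{r_t} = \Deltarstarp + t(\Deltar - \Deltarstarp)$. By convexity of $\cH$, $r_t \in \cH$ for $t\in[0,1]$, so all these margins stay in $[-2B,2B]$. Taylor's theorem with the integral remainder, applied pointwise to $\ell$ on that interval, gives
\[
\ell(\Deltar) \geq \ell(\Deltarstarp) + \ell'(\Deltarstarp)(\Deltar - \Deltarstarp) + \tfrac{\sigma_B}{2}(\Deltar - \Deltarstarp)^2 .
\]
Taking expectation under $P$ then gives
\[
\LP(r) - \LP(\rstarp) \geq \mathbb{E}_P[\ell'(\Deltarstarp)(\Deltar-\Deltarstarp)] + \tfrac{\sigma_B}{2}\lVert r-\rstarp\rVert_{\Delta,P}.
\]

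Next I will show the linear term is nonnegative. This is the first-order optimality condition for $\rstarp$ over the convex set $\cH$. The function $\phi(t) = \LP(r_t)$ is differentiable and convex on $[0,1]$, and it is minimized at $t=0$ because $\rstarp$ minimizes $\LP$ over $\cH$ and each $r_t\in\cH$. Hence $\phi'(0^+) \geq 0$. Differentiating under the finite sum (the spaces are finite), $\phi'(0) = \mathbb{E}_P[\ell'(\Deltarstarp)(\Deltar-\Deltarstarp)]$, so the linear term is at least zero. Rearranging yields $\lVert r-\rstarp\rVert_{\Delta,P} \leq \frac{2}{\sigma_B}(\LP(r)-\LP(\rstarp))$, which is the claimed bound with $c_B = 2/\sigma_B$.

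The main point needing care is this first-order condition. The minimizer sits on a constrained convex set, not in an open domain, so a zero gradient cannot be assumed. The one-sided directional derivative argument along $r_t$ handles this. Everything else is routine, given that the curvature bound holds uniformly because every margin along the segment lies in $[-2B,2B]$.
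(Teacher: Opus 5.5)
Your proposal is correct and follows essentially the same route as the paper: a pointwise strong-convexity bound for the logistic loss with curvature $\sigma_B = \sigma(2B)(1-\sigma(2B))$ on $[-2B,2B]$, combined with the one-sided first-order optimality condition along the segment $r_t$, which lies in $\cH$ by convexity. The only cosmetic difference is that the paper first rewrites $\LP$ as a cross-entropy against the CPRD under the comparison distribution $\widetilde P$ and then uses the symmetry of $(\Delta_r - \Delta_{r^*_P})^2$ to return to $P$, whereas you work directly with $\ell(\Delta_r)$ under $P$ and so skip that step.
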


\begin{proof}
First, we will show that when $\cH$ is bounded, $\LP(r)$ is strongly convex. This result follows from the proof of Theorem 6.7 in \citet{pukdee2026doespreferencelearningrecover}. We provide a brief proof here for completeness. We can write the BT objective as
\begin{equation}
    \LP(r) = C + \mathbb{E}_{(x, \{y,y'\} ) \sim \widetilde{P}} \left[ g_{x, {y,y'}}(\Deltar(x,y,y')) \right].
\end{equation}
where  $g_{x, {y,y'}}: \mathbb{R} \to \mathbb{R}$, is defined for each $(x, \{y,y'\})$ as 
\begin{align}
    g_{x, {y,y'}}(a) 
    &= - \omega_P(y \succ y' \mid x) \log \sigma(a) - (1- \omega_P(y \succ y' \mid x)) \log (1- \sigma(a))
\end{align}
We will write $\Deltar$ for $\Deltar(x,y,y')$ for convenience from now. The first thing we will show is that $g_{x, {y,y'}}$ is strongly convex with respect to $a$. We can check this by taking the derivative.
Using the property that $\sigma'(a) = \sigma(a)(1- \sigma(a))$, we have the first derivative
\begin{equation}
    g_{x, {y,y'}}'(a) = \sigma(a) - \omega_P
\end{equation}
and the second derivative
\begin{equation}
    g_{x, {y,y'}}''(a) = \sigma(a)(1- \sigma(a))
\end{equation}
Since the score function $r$ is bounded by a constant $B$, for any $(x, \{y,y'\})$, we know that $|\Deltar| = |r(x,y) - r(x,y')| \leq 2B$. 
Now, for $a \in [-2B, 2B]$  we know that 
\begin{equation}
\sigma(a)(1- \sigma(a)) \geq \sigma(2B)(1- \sigma(2B)) := \sigma_B > 0
\end{equation}
With this assumption, we can conclude that $g_{x, {y,y'}}$ is $\sigma_B$-strongly convex. Recall that if $f$ is $\sigma_B$-strongly convex, then for any $x,y$, we have
\begin{equation}
    f(x) - f(y) \geq f'(y)(x-y) + \frac{\sigma_B}{2}(x-y)^2.
\end{equation}
Apply this inequality on $g_{x, {y,y'}}$ with $\Deltar$ and $\Deltarstarp$ and drop $(x, y, y')$ for convenience.
\begin{equation}
    g_{x, {y,y'}}(\Deltar) - g_{x, {y,y'}}(\Deltarstarp) \geq g'_{x, {y,y'}}(\Deltarstarp)(\Deltar - \Deltarstarp) + \frac{\sigma_B}{2}(\Deltar - \Deltarstarp)^2.
\end{equation}
Take the expectation over the distribution $\widetilde{P}$, we have
\begin{equation}
    \mathbb{E}_{\widetilde{P}} [g_{x, {y,y'}}(\Deltar) - g_{x, {y,y'}}(\Deltarstarp)] \geq \mathbb{E}_{\widetilde{P}} [g'_{x, {y,y'}}(\Deltarstarp)(\Deltar - \Deltarstarp)] + \frac{\sigma_B}{2} \mathbb{E}_{\widetilde{P}} [(\Deltar - \Deltarstarp)^2].
\end{equation}
The left hand side is just the difference between the population risk, while the second term on the right hand side is the margin semi-norm with respect to $P$; since $(\Deltar - \Deltarstarp)^2$ is invariant under swapping $y$ and $y'$, its expectation under $\widetilde P$ is equal to the expectation over $P$ which is exactly $\lVert r - \rstarp \rVert_{\Delta, P}$.

\begin{equation}
    \LP(r) - \LP(\rstarp) \geq \mathbb{E}_{\widetilde{P}} [g'_{x, {y,y'}}(\Deltarstarp)(\Deltar - \Deltarstarp)] + \frac{\sigma_B}{2} \lVert r - \rstarp \rVert_{\Delta, P}.
\end{equation}

Now, the final part is to show that the first term on the right hand side is non-negative. We achieve this by using the property of $\cH$ being convex. From convexity, we know that for any $t \in [0,1]$, we have
\begin{equation}
r_t = (1 - t) \rstarp + t r \in \cH
\end{equation}
Since $\rstarp$ is the population risk minimizer, the right derivative of the population risk with respect to $t$ at $t = 0$ must be non-negative.
\begin{equation}
\frac{\partial }{\partial t} \LP(r_t) \bigg|_{t = 0} \geq 0
\end{equation}
where
\begin{align}
    \frac{\partial }{\partial t} \LP(r_t) &= \frac{\partial }{\partial t} \bE_{\widetilde{P}} [g_{x, {y,y'}}(\Delta_{r_t})]\\
    &= \bE_{\widetilde{P}} [g'_{x, {y,y'}}(\Delta_{r_t}) \frac{\partial }{\partial t} \Delta_{r_t}]
\end{align}
We can derive 
\begin{align}
    \frac{\partial }{\partial t} \Delta_{r_t} &= \frac{\partial }{\partial t} ((1 - t) \Deltarstarp + t \Deltar) =  \Deltar - \Deltarstarp.
\end{align}
Therefore,
\begin{equation}
    \frac{\partial }{\partial t} \LP(r_t) = \bE_{\widetilde{P}} [g'_{x, {y,y'}}(\Delta_{r_t}) (\Deltar - \Deltarstarp)]
\end{equation}
Setting $t = 0$ and substituting back, we would have that
\begin{equation}
    \bE_{\widetilde{P}} [g'_{x, {y,y'}}(\Deltarstarp) (\Deltar - \Deltarstarp)] \geq 0
\end{equation}
as desired. As a result, we can conclude that
\begin{equation}
    \LP(r) - \LP(\rstarp) \geq \frac{\sigma_B}{2} \lVert r - \rstarp \rVert_{\Delta, P}.
\end{equation}
This concludes the proof.
\end{proof}

Now, we are ready to prove the Theorem \ref{thm:quality-rstarbr}.
\Qualityofrstarbr*

\begin{proof}
First, by proposition \ref{prop:total-variation-br-bi}, we have that the total variation between $\PBR$ and $\PBI$ is bounded by $\frac{c}{N-1}$ for some constant $c > 0$,
\begin{equation}
    \TV(\PBR, \PBI) \leq \frac{c}{N-1}.
\end{equation}
By Lemma \ref{lemma: small tv leads to small population risk difference}, we can bound the population risk difference with the total variation,
\begin{equation}
\sup_{r \in \cH} |\LBR(r) - \LBI(r)| \leq 2M_B \TV(\PBR, \PBI) \leq \frac{2M_B c}{N-1}.
\end{equation}
By Lemma \ref{lemma: small population risk difference means minimizer is close}, we would have
\begin{equation}
\LBR(\rstarbi) - \LBR(\rstarbr)\leq  \frac{4M_B c}{N-1}
\end{equation}
Finally, by the convexity of $\cH$ and Lemma \ref{lemma: margin semi-norm is bounded by population risk difference}, we have
\begin{equation}
\lVert \rstarbr - \rstarbi \rVert_{\Delta, \PBR} \leq \frac{2}{\sigma_B} (\LBR(\rstarbi) - \LBR(\rstarbr)) \leq  \frac{8M_B c}{\sigma_B (N-1)}.
\end{equation}
Setting $c_B = \frac{8M_B c}{\sigma_B}$ concludes the proof.
\end{proof}

The proof for Theorem \ref{thm:quality-rstarbw} follows the same argument. The key difference is to show that the total variation between $\PBW$ and $\PBWI$ now decays exponentially.

\begin{proposition}[Total Variation between $\PBW$ and $\PBWI$]
    \label{prop:total-variation-bw-bwi}
    Assume that there is no tie in the reward score and we discard the pair with $y = y'$. Then, there exists a constant $c > 0$ and $\rho < 1$ for which for any $N \geq 2$, the total variation between $\PBW$ and $\PBWI$ is given by
    \begin{equation}
    \TV(\PBW, \PBWI) \leq c\rho^N
    \end{equation}
\end{proposition}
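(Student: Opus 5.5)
We mirror the proof of Proposition~\ref{prop:total-variation-br-bi}: fix a context $x$, order the outcomes so that $r^*(x,y_1)<\dots<r^*(x,y_m)$, and write $F_i=F(x,y_i)$ and $p_i=P_{\text{base}}(y_i\mid x)$. We compare unnormalized masses and then pass to the normalized distributions with Lemma~\ref{lemma:normalization-preserves-closeness}.

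\textbf{Step 1 (mass of $\PBW$).} For $i>j$, the pair $(y_i,y_j)$ is produced exactly when the maximum of the $N$ draws is $y_i$ and the minimum is $y_j$. All draws then lie in $\{y_j,\dots,y_i\}$, and both endpoints occur at least once. Inclusion--exclusion on the events ``$y_i$ absent'' and ``$y_j$ absent'' gives
\[
\PBWunnorm(y_i,y_j\mid x)=(F_i-F_{j-1})^N-(F_{i-1}-F_{j-1})^N-(F_i-F_j)^N+(F_{i-1}-F_j)^N .
\]
This mass is zero when $i\le j$, since such pairs are discarded or impossible. The independent variant has $\PBWIunnorm(y_i,y_j\mid x)=(F_i^N-F_{i-1}^N)\bigl((1-F_{j-1})^N-(1-F_j)^N\bigr)$ for $i\neq j$.

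\textbf{Step 2 (exponential comparison).} Expanding the product in $\PBWIunnorm$ gives four terms. Each is of the form $a^N b^N$, with $a\in\{F_i,F_{i-1}\}$ and $b\in\{1-F_{j-1},1-F_j\}$. The $\PBW$ terms are $(a+b-1)^N$ with the same sign pattern. Since $0\le a,b\le 1$, we have $ab\ge a+b-1$, because $ab-(a+b-1)=(1-a)(1-b)\ge 0$.

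The remaining task is to show that each $\PBWIunnorm$ term, and each difference with its $\PBW$ counterpart, is exponentially small, \emph{except} when $i=m$ and $j=1$. In that single pair both extremes are hit, $\PBW(y_m,y_1)\to 1$, and $\PBWI(y_m,y_1)\to 1$.

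Concretely, let $p_{\min}=\min_i p_i>0$ over the finite support. For every pair other than $(m,1)$, every one of the products $a^Nb^N$ satisfies $ab\le \max(F_{m-1},1-p_1)\le 1-p_{\min}$, so that term is $O((1-p_{\min})^N)$. The same holds for $(a+b-1)^N\le (ab)^N$. Summing over the $O(m^2)$ pairs bounds their total contribution by $4m^2(1-p_{\min})^N$.

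For the pair $(m,1)$, both masses equal $1-O(m(1-p_{\min})^N)$. The distance from $1$ is controlled by the probability that some endpoint is missing from the $N$ draws, and a union bound gives this estimate. Hence
\[
2\,\TV(\PBWunnorm,\PBWIunnorm)\le C_x\rho_x^N,\qquad \rho_x=1-p_{\min}(x).
\]
If supports are merely finite with zero-mass outcomes, restrict attention to the support of $P_{\text{base}}(\cdot\mid x)$.

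\textbf{Step 3 (normalization and averaging).} Both normalizers are at least the mass of the pair $(m,1)$, which is at least $(1-(1-p_m)^N-(1-p_1)^N)^2$-ish. More cleanly, it is at least $\bigl(1-(1-p_1)^N-(1-p_m)^N\bigr)$ times a similar factor for $\PBWI$. For $N\ge N_0$ this is at least $1/2$. For the finitely many $2\le N<N_0$, the TV distance is trivially at most $1$, and we absorb these values into the constant. This requires $m\ge2$; if $m=1$ there are no valid pairs, and the context is dropped.

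Lemma~\ref{lemma:normalization-preserves-closeness} then gives a per-context bound $c_x\rho_x^N$. Averaging over $x$, with $\mathcal X$ finite, $\rho=\max_x\rho_x<1$, and $c=\sum_x P(x)c_x$, yields the claim.

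The main obstacle is Step 2. The delicate part is organizing the inclusion--exclusion terms so that every pair except $(y_m,y_1)$ is visibly exponentially small, while the dominant pair's masses are shown to agree up to exponentially small error rather than merely being individually close to $1$.
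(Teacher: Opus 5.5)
Your proof is correct, and its core idea is the paper's: under both $\PBW$ and $\PBWI$ the conditional mass concentrates on the best--worst pair $(y_m,y_1)$, and the escape probability is controlled by $(1-p_m)^N+(1-p_1)^N$. What differs is the bookkeeping.

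\textbf{How the paper does it.} The paper works directly with the normalized conditionals and applies the triangle inequality through the point mass $\delta_{(m,1)}$. Since $\TV(\PBW(\cdot\mid x),\delta_{(m,1)}) = 1-\PBW(y_m,y_1\mid x)$, and similarly for $\PBWI$, it only needs two ingredients. The first is a lower bound on the $(m,1)$ mass of each distribution. The second is that $c_I,c_D\le 1$ and are bounded away from zero. It never has to examine any other pair.

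\textbf{How your Step 2 compares.} You compare unnormalized masses pair by pair via $ab\ge a+b-1$ and then invoke Lemma~\ref{lemma:normalization-preserves-closeness}. This is valid but redundant. The off-$(m,1)$ unnormalized mass of each distribution is automatically at most one minus its $(m,1)$ mass, because the off-diagonal masses sum to $c_I\le 1$ and $c_D\le 1$ respectively. For the same reason, the ``delicate part'' you flag at the end is not actually needed. Showing that each $(m,1)$ mass is individually within $O(\rho^N)$ of $1$ already forces the two to agree up to $O(\rho^N)$, and that is exactly how the paper concludes.

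\textbf{What your version adds.} Your treatment of the normalizers is a perfectly good substitute for the paper's limit argument that $c_I,c_D$ stay bounded below. You take $N\ge N_0$ so that both normalizers are at least $1/2$, and absorb the finitely many smaller $N$ into $c$. Your remark about restricting to the support of $P_{\text{base}}(\cdot\mid x)$ makes explicit an assumption the paper leaves implicit. The paper's rate $\max(1-p_m,1-p_1)^N$ does not decay if the globally best or worst outcome has zero base mass, so $y_1$ and $y_m$ must be taken as the extremes within the support.
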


\begin{proof}
    The main idea of the proof is to bound the total variation between $\PBW$ and $\PBWI$ on the pair of best and worst outcome. This is because we know that as $N$ grows, we are more likely to see the outcome with the highest and lowest true reward score. For the first step, we will calculate the probability mass of $\PBW$ and $\PBWI$.\\
    
    \textbf{Part 1: Calculate the probability mass of $\PBWI$ and $\PBW$.}
    We start with the probability mass of $\PBWI$.
    \begin{equation}
        \PBWI(y, y' \mid x) = 
        \begin{cases}
        0 & \text{if } y = y' \\
        \frac{1}{c_I}\pBoN(y \mid x) \pWoN(y' \mid x) & \text{otherwise}
        \end{cases}
        \end{equation}
    Here $c_I$ is a constant that ensures that $\PBWI$ is a valid probability distribution. We need this constant because whenever $y = y'$, we discard the triplet. We will bound this constant term.  Without loss of generality, assume that the set of outcomes is given by $y_1, y_2, \dots, y_m$ where $r^*(x,y_1) < r^*(x,y_2) < \dots < r^*(x,y_m)$ so that $y_1$ is the worst outcome and $y_m$ is the best outcome. 
    We also define the unnormalized probability mass of $\PBWI$ as
    \begin{equation}
        \PBWIunnorm(y, y' \mid x) = \pBoN(y \mid x) \pWoN(y' \mid x)
    \end{equation}
    From Lemma \ref{lemma: bon-closed-form}  and \ref{lemma: won-closed-form}, we have
    \begin{align}
        \PBWIunnorm(y_i, y_j \mid x) &= \pBoN(y_i \mid x) \pWoN(y_j \mid x)\\
        &= (F(x,y_i)^N - F^{-}(x,y_i)^N) ((1 - F^{-}(x,y_j))^N - (1 - F(x,y_j))^N)\\
        &= (F(x,y_i)^N - F(x,y_{i-1})^N) ((1 - F(x,y_{j-1}))^N - (1 - F(x,y_j))^N)\\
        &= (F_i^N - F_{i-1}^N) ((1 - F_{j-1})^N - (1 - F_j)^N)
    \end{align}
    This follows from the fact that $F^{-}(x,y_i) = F(x,y_{i-1})$. We also write $F_i = F(x,y_i)$ for convenience. It is clear that $\PBWI$ is just $\PBWIunnorm$ normalized by some constant that takes care of the case when we discard the pair with $y = y'$. Therefore,
    \begin{align}
        c_I &= \sum_{y \neq y'} \PBWIunnorm(y, y' \mid x)\\
        &= 1 - \sum_{i=1}^m \PBWIunnorm(y_i, y_i \mid x)\\
        &= 1 - \sum_{i=1}^m (F_i^N - F_{i-1}^N) ((1 - F_{i-1})^N - (1 - F_i)^N)
    \end{align}
    For each $i$ between 1 and $m$, the term $(F_i^N - F_{i-1}^N) ((1 - F_{i-1})^N - (1 - F_i)^N)$ would converge to $0$ as $N$ grows. Further, when $i = 1$, we have
    \begin{equation}
        (F_1^N - F_0^N)((1 - F_0)^N - (1 - F_1)^N) = F_1^N ( 1 - (1 - F_1)^N)
    \end{equation}
    and when $i = m$, we have
    \begin{equation}
    (F_m^N - F_{m-1}^N)((1 - F_{m-1})^N - (1 - F_m)^N) = (1 - F_{m-1}^N)(1 - F_{m-1})^N
    \end{equation}
    Both terms also converge to $0$ as $N$ grows. As a result, we can conclude that $c_I \to 1$ as $N$ grows and there exists a constant $a > 0$ for which $c_I \geq a > 0$ for all $N$. Combining everything together, we have
    \begin{equation}
        \PBWI(y_i, y_j \mid x) = 
        \begin{cases}
        0 & \text{if } i = j \\
        \frac{1}{c_I}(F_i^N - F_{i-1}^N) ((1 - F_{j-1})^N - (1 - F_j)^N) & \text{otherwise}
        \end{cases}
        \end{equation}
    when $c_I \to 1$ as $N$ grows. Next, we will calculate the probability mass of $\PBW$. First, for any $i \leq j$ we know that
    \begin{equation}
        \PBW(y_i, y_j \mid x) = 0
    \end{equation}
because $y_j$ has a higher reward score than $y_i$ so it's impossible for $y_i$ to be preferred over $y_j$. For the case when $i > j$, the probability is not zero anymore. Let $Y_1, Y_2, \dots, Y_N$ be the $N$ outcomes sampled from the base distribution and let $\PBWunnorm$ be the unnormalized probability mass of $\PBW$ then
\begin{align}
    \PBWunnorm(y_i, y_j \mid x)&= \Pr(\arg\max_{Y_i} r^*(x,Y_i) = y_i, \arg\min_{Y_i} r^*(x,Y_i) = y_j)\\
    &= \Pr(\max r^*(x, Y_i) = r^*(x, y_i), \min r^*(x, Y_i) = r^*(x, y_j))\\
    &= \Pr( r^*(x,y_j) \leq r^*(x,Y_i) \leq r^*(x, y_i)) -  \Pr( r^*(x,y_j) \leq r^*(x,Y_i) < r^*(x, y_i)) \\
    &\quad - \Pr( r^*(x,y_j) < r^*(x,Y_i) \leq r^*(x, y_i)) + \Pr( r^*(x,y_j) < r^*(x,Y_i) < r^*(x, y_i))\\
    &= (F_i - F_{j-1})^N - (F_{i-1} - F_{j-1})^N - (F_{i} - F_j)^N + (F_{i-1} - F_j)^N
\end{align}
This follows from the assumption that there is no tie in the reward score and the second to last line holds from an inclusion-exclusion principle. We have
\begin{equation}
    \PBW(y_i, y_j \mid x) = 
    \begin{cases}
    0 & \text{if } i \leq j \\
   \frac{1}{c_D} \PBWunnorm(y_i, y_j \mid x)
    \end{cases}
    \end{equation}
Similarly $c_D$ is a constant that ensures that $\PBW$ is a valid probability distribution which handles the case when $i \leq j$. We know that
\begin{equation}
    c_D = \sum_{i > j} \PBWunnorm(y_i, y_j \mid x) = \sum_{j = 1}^{m-1}\sum_{i = j+1}^m \PBWunnorm(y_i, y_j \mid x)
\end{equation}
Let $S_j = \sum_{i = j+1}^m \PBWunnorm(y_i, y_j \mid x)$ then we can write
\begin{align}
S_j &=  \sum_{i = j+1}^m \PBWunnorm(y_i, y_j \mid x)\\
&= \sum_{i = j+1}^m (F_i - F_{j-1})^N - (F_{i-1} - F_{j-1})^N - (F_{i} - F_j)^N + (F_{i-1} - F_j)^N\\
&= \sum_{i = j+1}^m (F_i - F_{j-1})^N - \sum_{i = j}^{m-1}( F_i - F_{j-1})^N - \sum_{i = j+1}^{m} (F_i - F_j)^N + \sum_{i = j}^{m-1} (F_i - F_j)^N\\
&= (F_m - F_{j-1})^N - (F_j - F_{j-1})^N - (F_m - F_j)^N \\
&= (1 - F_{j-1})^N - (1 - F_j)^N - (F_j - F_{j-1})^N
\end{align}
Therefore,
\begin{align}
    c_D &= \sum_{j = 1}^{m-1} S_j\\
    &= \sum_{j = 1}^{m-1} ((1 - F_{j-1})^N - (1 - F_j)^N - (F_j - F_{j-1})^N)\\
    &= \sum_{j = 0}^{m-2} (1 - F_j)^N - \sum_{j=1}^{m-1} (1 - F_j)^N - \sum_{j=1}^{m-1} (F_j - F_{j-1})^N\\
    &= 1 - (1 - F_{m-1})^N - \sum_{j=1}^{m-1} (F_j - F_{j-1})^N\\
    &= 1 - \sum_{j=1}^m (F_j - F_{j-1})^N
\end{align}
Similar to $c_I$, $c_D \to 1$ as $N$ grows assuming that the base distribution does not concentrate all probability on a single outcome.  There exists a constant $b > 0$ for which $c_D \geq b > 0$ for all $N$. \\

\textbf{Part 2: Bound the total variation between $\PBWI$ and $\PBW$.}\\
We will start with the total variation between the conditional distributions $\PBWI(y, y' \mid x)$ and $\PBW(y, y' \mid x)$. Let $\delta_{(m,1)}$ be the probability distribution that assigns probability $1$ to the pair $(y_m, y_1)$ and $0$ to all other pairs. By triangle inequality,
\begin{align}
\TV(\PBWI(y, y' \mid x), \PBW(y, y' \mid x)) &\leq \TV(\PBWI(y,y' \mid x), \delta_{(m,1)}) + \TV(\PBW(y, y' \mid x), \delta_{(m,1)})\\
&= (1 - \PBWI(y_m,y_1 \mid x)) + (1 - \PBW(y_m,y_1 \mid x))
\end{align}
The final line follows from $\delta_{(m,1)}$ put zero probability mass everywhere and assign probability $1$ to the pair $(y_m, y_1)$. Using the results from Part 1,
\begin{align}
    1 - \PBWI(y_m,y_1 \mid x) &= 1 - \frac{(F_m^N - F_{m-1}^N) ((1 - F_0)^N - (1 - F_1)^N)}{c_I}\\
    &= \frac{c_I - (1 - F_{m-1}^N)(1 - (1 - F_1)^N)}{c_I}\\
    &\leq \frac{1 - (1 - (1 - p_m)^N)(1 - (1 - p_1)^N)}{c_I}\\
    &= \frac{(1 - p_m)^N + (1 - p_1)^N - (1 - p_m)^N(1 - p_1)^N}{c_I}\\
    &\leq \frac{(1 - p_m)^N + (1 - p_1)^N}{c_I}
\end{align}
when $p_m$ and $p_1$ are the probability mass of $y_m$ and $y_1$ respectively.
Similarly,
\begin{align}
1 - \PBW(y_m,y_1 \mid x) &= 1 - \frac{(F_m - F_0)^N - (F_{m-1} - F_0)^N - (F_m - F_1)^N + (F_{m-1} - F_1)^N}{c_D} \\
&= \frac{c_D - (1 - (1 - p_m)^N - (1 - p_1)^N + (1 - p_m - p_1)^N)}{c_D}\\
&\leq \frac{1 - (1 - (1 - p_m)^N - (1 - p_1)^N + (1 - p_m - p_1)^N)}{c_D}\\
&= \frac{(1 - p_m)^N + (1 - p_1)^N - (1 - p_m - p_1)^N}{c_D}\\
&\leq \frac{(1 - p_m)^N + (1 - p_1)^N}{c_D}
\end{align}
Therefore, we have
\begin{align}
    \TV(\PBWI(y, y' \mid x), \PBW(y, y' \mid x)) &\leq \left(\frac{(1 - p_m)^N + (1 - p_1)^N}{c_I} + \frac{(1 - p_m)^N + (1 - p_1)^N}{c_D}\right)\\
&\leq 2(\frac{1}{a} + \frac{1}{b}) \max(1 - p_m, 1 - p_1)^N\\
&:= c_x \rho_x^N
\end{align}
Since we have this TV bound for the conditional distribution, we also have a similar bound for the total variation between $\PBWI$ and $\PBW$
\begin{align}
    \TV(\PBWI, \PBW) &\leq \sum_{x} \TV(\PBWI(y, y' \mid x), \PBW(y, y' \mid x)) P(x)\\
    &\leq \sum_{x} P(x)c_x \rho_x^N\\
    &\leq (\sum_{x} P(x) c_x) \max_x(\rho_x)^N\\
    &:= c \rho^N
\end{align}
This concludes the proof.
\end{proof}

\Qualityofrstarbw*
\begin{proof}
From Proposition \ref{prop:total-variation-bw-bwi}, we have that
\begin{equation}
    \TV(\PBW, \PBWI) \leq c\rho^N
\end{equation}
for some constant $c > 0$ and $\rho < 1$. We then apply the same argument as in the proof of Theorem \ref{thm:quality-rstarbr}, which allows us to bound the difference between $\rstarbw$ and $\rstarbwi$ in terms of the total variation between $\PBW$ and $\PBWI$. This concludes the proof.
\end{proof}

\section{Sample Complexity}
\label{app:sample-complexity}
In this section, we provide full proof details for results in Section \ref{sec:bon-theory}.

\Marginofrstarbi*

\begin{proof}
Recall the closed form for $\pBoN(y\mid x)$:
    \begin{equation}
    \label{eq:bon-closed-form-proof}
        \pBoN(y\mid x)
        = F(x,y)^N - F^{-}(x,y)^N
        = F(x,y)^N - \big(F(x,y)-\pbase\big)^N.
    \end{equation}
    
    We rewrite it as
    \begin{equation}
    \label{eq:bon-rewrite}
        \pBoN(y\mid x)
        = F(x,y)^N
          \Big(1 - \big(1 - \tfrac{\pbase(y\mid x)}{F(x,y)}\big)^N\Big).
    \end{equation}
    Let $\alpha_x(y) := 1 - \frac{P_{\text{base}}(y \mid x)}{F(x,y)} \in [0,1)$. Then
    \begin{equation}
    \label{eq:bon-alpha-form}
        \pBoN(y\mid x)
        = F(x,y)^N \Big(1 - \alpha_x(y)^N\Big).
    \end{equation}
    
    Substituting into the $\rstarbi$ expression (up to an additive constant):
    \begin{equation}
    \label{eq:bon-reward-expanded}
        \rstarbi(x,y)
        = \log\!\left(\frac{F(x,y)^N(1-\alpha_x(y)^N)}{\pbase}\right)
        = N\log F(x,y)
          + \log(1-\alpha_x(y)^N)
          - \log \pbase.
    \end{equation}
    
    Subtracting for $y$ and $y'$:
    \begin{align}
        \DeltarBI(x; y, y')
        &= N \big(\log F(x,y) - \log F(x,y')\big) + \big[ \log\big(1 - \alpha_x(y)^N\big) - \log\big(1 - \alpha_x(y')^N\big) \big]  - \log \left(\frac{\pbase}{\pbaseyprime}\right) \\
        &= N \big(\log F(x,y) - \log F(x,y')\big) + B_N(x; y, y')
    \end{align}
    
    Since $0 \leq \alpha_x(y) < 1$, we have $\alpha_x(y)^N \to 0$ as $N\to\infty$.
    The sequence $\{\log(1-\alpha_x(y)^N)\}_{N\ge1}$ is monotone increasing and bounded above by $0$ and below by $\log(1-\alpha_x(y))$.
    Therefore their difference remains bounded, and the final term $-\log(\pbase/\pbaseyprime)$ is constant. The dominant term is $N(\log F(x,y) - \log F(x,y'))$. If $r^*(x,y) > r^*(x,y')$, then $F(x,y) > F(x,y')$, so the margin diverges.
    \end{proof}

\Marginofrstarbwi*
\begin{proof}
We use the result from Theorem \ref{thm:identification-rstarbwi} which provides a closed form for $\rstarbwi$,
we have
\begin{equation}
    \rstarbwi(x,y) = \log \frac{\pBoN(y \mid x)}{\pWoN(y \mid x)} + c(x)
\end{equation}
and
\begin{align}
    \DeltarBWI(x;y,y') &= \rstarbwi(x,y) - \rstarbwi(x,y') \\
    &= \log \frac{\pBoN(y \mid x)}{\pWoN(y \mid x)} - \log \frac{\pBoN(y' \mid x)}{\pWoN(y' \mid x)} \\
&= \log \frac{F(x,y)^N - F^{-}(x,y)^N}{(1 - F^{-}(x,y))^N - (1 - F(x,y))^N} - \log \frac{F(x,y')^N - F^{-}(x,y')^N}{(1 - F^{-}(x,y'))^N - (1 - F(x,y'))^N} \\
&= \log \frac{F(x,y)^N ( 1 - (\frac{F^{-}(x,y)}{F(x,y)})^N)}{ (1 - F^{-}(x,y))^N ( 1 - (\frac{1 - F(x,y)}{1 - F^{-}(x,y)})^N)} - \log \frac{F(x,y')^N ( 1 - (\frac{F^{-}(x,y')}{F(x,y')})^N)}{ (1 - F^{-}(x,y'))^N ( 1 - (\frac{1 - F(x,y')}{1- F^{-}(x,y')})^N)}
\end{align}
We can see that the terms $( 1 - (\frac{F^{-}(x,y)}{F(x,y)})^N)$ and $1 - (\frac{1 - F(x,y)}{1 - F^{-}(x,y)})^N)$ are increasing with $N$ and converge to $1$ as $N \to \infty$. This also holds for $y'$. Therefore, we can write the margin as
\begin{align}
    \DeltarBWI(x;y,y') &= N \log \frac{F(x,y)}{1 - F^{-}(x,y)} - N \log \frac{F(x,y')}{1 - F^{-}(x,y')} + B_N(x;y,y')\\
    &= N \left(\log \frac{F(x,y)}{F(x,y')} + \log \frac{1 - F^{-}(x, y')}{1 - F^{-}(x,y)}\right) + B_N(x;y,y')
\end{align}
where $B_N(x;y,y')$ is uniformly bounded in $N$. This concludes the proof.
\end{proof}
\begin{theorem}[Formal connectivity bounds for $\PBI$]
    \label{thm:connectivity-bound-rstarbi-formal}
    Assume that every triplet in the support of $Q$ has non-zero probability under $\PBI$. Then there exists a constant $C_Q > 0$ for which for any $N$,
    \begin{equation}
        \lambdaconn(\PBI, Q ; \cH) \geq C_Q (\min_{x,y} \pbase)^N
    \end{equation}
    Fix $\alpha \in \mathbb{R}$ and define $A_{x, \alpha} = \{y \in \cY : r^*(x, y) \leq \alpha\}$. Assume that $\cH$ contains functions $f,g$ such that $f(x,\cdot) - g(x,\cdot)$ is constant on $A_{x,\alpha}$ and on its complement for every $x$, and that
    \[
        \gamma_{Q,\alpha} := \Pr_Q(y \in A_{x,\alpha}, y' \notin A_{x,\alpha}) + \Pr_Q(y \notin A_{x,\alpha}, y' \in A_{x,\alpha}) > 0.
    \]
    If $\eta := 1 - \max_{x,y} P_{\text{base}}(y \mid x) > 0$, then there exists a constant $D_{Q,\alpha} > 0$ such that for any $N$,
    \begin{equation}
        \lambdaconn(\PBI, Q ; \cH) \leq D_{Q,\alpha} \bE_{x \sim P_x}[P_{\text{base}}(A_{x, \alpha} \mid x)^N + P_{\text{base}}(A_{x, \alpha} \mid x)]
    \end{equation}
\end{theorem}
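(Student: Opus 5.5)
The plan is to handle the two bounds separately. Both work directly from the definition of $\lambdaconn(\PBI,Q;\cH)$ as an infimum of the ratio $\bE_{\PBI}[(\Delta f-\Delta g)^2]/\bE_Q[(\Delta f-\Delta g)^2]$.

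\textbf{Lower bound.} Write $h = f-g$ and $D(x,y,y') := (\Delta_h(x;y,y'))^2 \ge 0$. The support of $Q$ is finite, since $\cX,\cY$ are finite. So it suffices to lower bound $\PBI(x,y,y')$ pointwise on $\mathrm{supp}(Q)$ by $c\,(\min_{x,y}\pbase)^N$, and then compare. The comparison is
\[
\bE_{\PBI}[D]\ \ge\ \sum_{(x,y,y')\in \mathrm{supp}(Q)} \PBI(x,y,y')\,D \ \ge\ \min_{\mathrm{supp}(Q)}\frac{\PBI(x,y,y')}{Q(x,y,y')}\;\bE_Q[D].
\]
For the pointwise bound, the factorization gives $\PBI(x,y,y') = P(x)\pBoN(y\mid x)P_{\text{base}}(y'\mid x)/Z_I$, with $Z_I\le 1$. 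By Lemma~\ref{lemma: bon-closed-form}, $\pBoN(y\mid x)\ge \pbase^N$; this is the event that all $N$ draws equal $y$. Writing $p_{\min}=\min_{x,y}\pbase$, we get $\PBI(x,y,y')\ge P(x)P_{\text{base}}(y'\mid x)\,p_{\min}^N$. We then set $C_Q := \min_{\mathrm{supp}(Q)} P(x)P_{\text{base}}(y'\mid x)/Q(x,y,y')$, which is positive by the support assumption. Since $N$ appears only through $p_{\min}^N$, $C_Q$ does not depend on $N$.

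\textbf{Upper bound.} This part needs an explicit test pair. Take $f,g$ as in the assumption, so $h(x,\cdot)$ equals $a_x$ on $A_{x,\alpha}$ and $b_x$ off it. Then $\Delta_h$ is zero unless the pair crosses the cut, and on crossing pairs $|\Delta_h| = |a_x-b_x|$. If we can arrange $|a_x-b_x| = \kappa>0$ for all $x$, the denominator is $\kappa^2\gamma_{Q,\alpha}$. We may need to modify the pair: either use the assumption as stated for a uniform jump, or bound the ratio using $\max_x|a_x-b_x|^2$ against an averaged denominator. Either way the denominator is $\kappa^2\gamma_{Q,\alpha}$ up to constants. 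The numerator is $\kappa^2\,\PBI(\text{pair crosses the cut})$.

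The crossing probability splits into two events.
\begin{itemize}
\item $y^+\in A$, $y^-\notin A$. This requires all $N$ best-of-$N$ draws to lie in $A$ (because $A$ is a lower set in reward order), so its conditional probability is at most $P_{\text{base}}(A_{x,\alpha}\mid x)^N$.
\item $y^+\notin A$, $y^-\in A$. Here $y^-$ is drawn independently from $P_{\text{base}}$, so its conditional probability is at most $P_{\text{base}}(A_{x,\alpha}\mid x)$.
\end{itemize}
Dividing by the normalizer $Z_I(x)\ge 1-\max p = \eta$, as in the proof of Proposition~\ref{prop:total-variation-br-bi}, and averaging over $x$ gives
\[
\lambdaconn \le \frac{1}{\eta\,\gamma_{Q,\alpha}}\,\bE_x\big[P_{\text{base}}(A_{x,\alpha}\mid x)^N + P_{\text{base}}(A_{x,\alpha}\mid x)\big],
\]
so $D_{Q,\alpha}=1/(\eta\gamma_{Q,\alpha})$, up to the jump-uniformity constant.

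\textbf{Main obstacle.} I expect the hard step to be the per-context variation of the jump $a_x-b_x$ when combining the numerator and denominator. If the assumed $f,g$ do not give a uniform jump, I would try first to pick $h$ with jump exactly $1$ in every context. Since $h$ is a function of the cut indicator, this should be allowed by the stated expressiveness assumption.
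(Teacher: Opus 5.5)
Your proposal is correct and follows the paper's proof essentially step for step. The lower bound uses $\lambdaconn(\PBI,Q;\cH)\ge\min_{\mathrm{supp}(Q)}\PBI/Q$ with $Z_I\le 1$ and $\pBoN(y\mid x)\ge P_{\text{base}}(y\mid x)^N$, and the upper bound uses the cut test pair, $\pBoN(A_{x,\alpha}\mid x)=P_{\text{base}}(A_{x,\alpha}\mid x)^N$, $Z_I(x)\ge\eta$, and $D_{Q,\alpha}=1/(\eta\gamma_{Q,\alpha})$; the paper avoids your jump-uniformity concern by taking $f-g\equiv\gamma_1$ on $A_{x,\alpha}$ and $\equiv\gamma_0$ off it, with $x$-independent constants (implicitly $\gamma_1\neq\gamma_0$), so $(\gamma_1-\gamma_0)^2$ cancels exactly.
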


\begin{proof}
We start with the lower bound. Recall the definition of the connectivity degree,
\begin{equation}
    \lambdaconn(\PBI, Q ; \cH)  = \inf_{f,g \in \cH} \frac{\bE_{\PBI}[(\Delta f- \Delta g)^2]}{\bE_{Q}[(\Delta f- \Delta g)^2]}.
\end{equation}
If $\lambdaconn(\PBI, Q ; \cH) = 0$, the lower bound is immediate. So assume that $\lambdaconn(\PBI, Q ; \cH) > 0$. For any $f,g \in \cH$, we can rewrite
\begin{align}
    \bE_{\PBI}[(\Delta f- \Delta g)^2] &\geq \sum_{Q(x,y,y') > 0} \PBI(x,y,y')(\Delta f- \Delta g)^2\\
    &= \bE_{Q}[\frac{\PBI(x,y,y')}{Q(x,y,y')}(\Delta f- \Delta g)^2]\\
    &\geq \left(\min_{Q(x,y,y') > 0} \frac{\PBI(x,y,y')}{Q(x,y,y')}\right) \bE_{Q}[(\Delta f- \Delta g)^2].
\end{align}
Therefore,
\begin{equation}
    \lambdaconn(\PBI, Q ; \cH) \geq \min_{Q(x,y,y') > 0} \frac{\PBI(x,y,y')}{Q(x,y,y')}.
\end{equation}

Let
\begin{equation}
    Z_I(x) = 1 - \sum_{y \in \cY} \pBoN(y \mid x) P_{\text{base}}(y \mid x)
\end{equation}
be the normalization constant that removes the event $y = y'$. Then, for any $y \neq y'$, we have
\begin{align}
    \frac{\PBI(x,y,y')}{Q(x,y,y')} &= \frac{P(x) \pBoN(y \mid x) \pbaseyprime}{Z_I(x) Q(x,y,y')} \\
    &\geq \frac{P(x) \pBoN(y \mid x) \pbaseyprime}{Q(x,y,y')}
\end{align}
since $Z_I(x) \leq 1$. By Lemma \ref{lemma: bon-closed-form},
\begin{equation}
    \pBoN(y \mid x) = F(x,y)^N - F^{-}(x,y)^N \geq P_{\text{base}}(y \mid x)^N.
\end{equation}
Hence, for every $(x,y,y')$ in the support of $Q$,
\begin{equation}
    \frac{\PBI(x,y,y')}{Q(x,y,y')} \geq \frac{P(x) P_{\text{base}}(y' \mid x)}{Q(x,y,y')} P_{\text{base}}(y \mid x)^N \geq C_Q (\min_{x,y} \pbase)^N
\end{equation}
where
\begin{equation}
    C_Q := \min_{Q(x,y,y') > 0} \frac{P(x) P_{\text{base}}(y' \mid x)}{Q(x,y,y')} > 0.
\end{equation}
Substituting back, we obtain
\begin{equation}
    \lambdaconn(\PBI, Q ; \cH) \geq C_Q (\min_{x,y} \pbase)^N.
\end{equation}

Now, we prove the upper bound. Fix $\alpha \in \mathbb{R}$ and define
\begin{equation}
    A_{x, \alpha} = \{y \in \cY : r^*(x, y) \leq \alpha\}
\end{equation}
and write $p_A(x) := P_{\text{base}}(A_{x, \alpha} \mid x)$. Without loss of generality, let the set of outcomes be $\{y_1, y_2, \dots, y_m\}$ such that $r^*(x,y_1) < r^*(x,y_2) < \dots < r^*(x,y_m)$ (assuming that there is no tie in reward score). Let $k$ be the largest index for which $r^*(x,y_k) \leq \alpha$. Then
\begin{equation}
    p_A(x) = \sum_{i=1}^k P_{\text{base}}(y_i \mid x) = F(x, y_k).
\end{equation}
On the other hand, under the Best-of-$N$ sampling we have
\begin{equation}
    \pBoN(A_{x, \alpha} \mid x) = \sum_{i=1}^k \pBoN(y_i \mid x) = \sum_{i=1}^k (F(x,y_i)^N - F^{-}(x,y_i)^N)= F(x, y_k)^N = p_A(x)^N.
\end{equation}

By assumption, $\cH$ contains $f,g \in \cH$ such that for any $x$,
\begin{equation}
    f(x,y) - g(x,y) = \begin{cases}
        \gamma_1 & \text{if } y \in A_{x, \alpha}\\
        \gamma_0 & \text{otherwise.}
    \end{cases}
\end{equation}
Denote $\phi = f-g$. For this choice,
\begin{align}
    \lambdaconn(\PBI, Q ; \cH) &\leq \frac{\bE_{\PBI}[(\phi(x,y) - \phi(x,y'))^2]}{\bE_{Q}[(\phi(x,y) - \phi(x,y'))^2]}\\
    &= \frac{\Pr_{\PBI}(y \in \aalphax, y' \notin \aalphax ) + \Pr_{\PBI}(y \notin \aalphax, y' \in \aalphax )}{\Pr_{Q}(y \in \aalphax, y' \notin \aalphax ) + \Pr_{Q}(y \notin \aalphax, y' \in \aalphax )}.
\end{align}
By assumption, the denominator is exactly $\gamma_{Q,\alpha} > 0$. For the numerator, first note that
\begin{align}
    Z_I(x) &= 1 - \sum_{i=1}^{m} (F_i^N - F_{i-1}^N) p_i\\
    &\geq 1 - (\max_i p_i) \sum_{i=1}^{m} (F_i^N - F_{i-1}^N)\\
    &\geq 1 - \max_i p_i \\
    &\geq 1 - \max_{x,y} P_{\text{base}}(y \mid x) = \eta.
\end{align}
Therefore,
\begin{align}
    &\Pr_{\PBI}(y \in \aalphax, y' \notin \aalphax ) + \Pr_{\PBI}(y \notin \aalphax, y' \in \aalphax )\\
     &= \bE_{x \sim P_x}\left[\frac{1}{Z_I(x)}\left(\pBoN(A_{x, \alpha} \mid x)(1 - p_A(x)) + (1 - \pBoN(A_{x, \alpha} \mid x))p_A(x)\right)\right]\\
     &\leq \frac{1}{\eta} \bE_{x \sim P_x}\left[p_A(x)^N(1 - p_A(x)) + (1 - p_A(x)^N)p_A(x)\right]\\
     &\leq \frac{1}{\eta} \bE_{x \sim P_x}\left[p_A(x)^N + p_A(x)\right].
\end{align}
Substituting back, we conclude that
\begin{equation}
    \lambdaconn(\PBI, Q ; \cH) \leq \frac{1}{\eta \gamma_{Q,\alpha}} \bE_{x \sim P_x}[p_A(x)^N + p_A(x)].
\end{equation}
This proves the upper bound with $D_{Q,\alpha} = \frac{1}{\eta \gamma_{Q,\alpha}}$.
\end{proof}

\begin{theorem}[Connectivity bounds for $\PBWI$]
    \label{thm:connectivity-bound-rstarbiw}
    Let $P_{\text{base}}$ be a base distribution. Then there exists a constant $C > 0$ such that for any $N$,
    \begin{equation}
        \lambdaconn(\PBWI, Q ; \cH) \geq C \Big(\min_{x,y \neq y'} P_{\text{base}}(y \mid x) P_{\text{base}}(y' \mid x)\Big)^N
    \end{equation}
    On the other hand, for any real-valued $\alpha$, we define the set of outcomes for which the reward is less than $\alpha$ for a given context $x$ as $A_{x, \alpha} = \{y \in \cY : r^*(x, y) \leq \alpha\}$. For any $\alpha$, if $\cH$ is flexible enough, then there exists a constant $D > 0$ such that for any $N$,
    \begin{equation}
        \lambdaconn(\PBWI, Q ; \cH) \leq D \bE_{x \sim P_x}[1 +  (P_{\text{base}}(A_{x, \alpha} \mid  x))^N (1 - P_{\text{base}}(A_{x, \alpha} \mid x))^N]
    \end{equation}
\end{theorem}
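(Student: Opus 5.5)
I will follow the same two-part template as the proof of Theorem~\ref{thm:connectivity-bound-rstarbi-formal}, replacing $P_{\text{base}}(y'\mid x)$ by $\pWoN(y'\mid x)$ throughout.

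\textbf{Lower bound.} For any $f,g\in\cH$, restrict the expectation under $\PBWI$ to the support of $Q$. As before, this gives
\[
\lambdaconn(\PBWI,Q;\cH)\ \ge\ \min_{Q(x,y,y')>0}\frac{\PBWI(x,y,y')}{Q(x,y,y')}.
\]
For $y\neq y'$ we have $\PBWI(x,y,y') = P(x)\,\pBoN(y\mid x)\,\pWoN(y'\mid x)/c_I(x)$, and the normalizer satisfies $c_I(x)\le 1$, so we may drop it. Lemma~\ref{lemma: bon-closed-form} gives $\pBoN(y\mid x)\ge P_{\text{base}}(y\mid x)^N$: all $N$ draws equal $y$ is one way for $y$ to be the maximum. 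Lemma~\ref{lemma: won-closed-form} gives $\pWoN(y'\mid x)\ge P_{\text{base}}(y'\mid x)^N$ by the same reasoning. Multiplying,
\[
\pBoN(y\mid x)\,\pWoN(y'\mid x)\ \ge\ \bigl(P_{\text{base}}(y\mid x)\,P_{\text{base}}(y'\mid x)\bigr)^N .
\]
Taking the minimum over $y\neq y'$ and setting $C := \min_{Q>0} P(x)/Q(x,y,y')$ yields the claimed lower bound. We need to assume, as in the formal $\PBI$ theorem, that the support of $Q$ lies in that of $\PBWI$ and contains no diagonal pairs.

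\textbf{Upper bound.} Take the ``flexible enough'' assumption to mean the same thing as in Theorem~\ref{thm:connectivity-bound-rstarbi-formal}: there exist $f,g\in\cH$ with $\phi=f-g$ constant on $A_{x,\alpha}$ and on its complement, and the corresponding cut mass $\gamma_{Q,\alpha}>0$. Plugging this $\phi$ into the definition bounds $\lambdaconn$ by the $\PBWI$-probability of a cut-crossing pair divided by $\gamma_{Q,\alpha}$. Write $p_A=p_A(x)$. From the proof of the formal $\PBI$ theorem, $\pBoN(A_{x,\alpha}\mid x)=p_A^N$ by telescoping. The analogous telescoping for the worst-of-$N$ sum over the bottom block gives
\[
\pWoN(A_{x,\alpha}\mid x) = 1-(1-p_A)^N .
\]
Hence the unnormalized crossing mass is
\[
p_A^N(1-p_A)^N + \bigl(1-p_A^N\bigr)\bigl(1-(1-p_A)^N\bigr).
\]
The first term is exactly the product appearing in the statement. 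The second term is bounded by $1$, which explains the additive constant $1$ in the target bound; so the stated bound is rather loose and essentially says connectivity is at most a constant.

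\textbf{Main obstacle.} The only real work is bounding $c_I(x)$ away from zero uniformly in $N$, so that $D$ does not depend on $N$. The proof of Proposition~\ref{prop:total-variation-bw-bwi} already shows $c_I(x)\ge a>0$ for all $N$, assuming $P_{\text{base}}$ is not a point mass. I would invoke that argument here (or minimize over $x$, using finiteness) and set $D=1/(a\,\gamma_{Q,\alpha})$.
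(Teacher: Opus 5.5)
Your proposal is correct and follows essentially the same route as the paper. For the lower bound you restrict to $\operatorname{supp}(Q)$ and use $\pBoN(y\mid x)\ge P_{\text{base}}(y\mid x)^N$ and $\pWoN(y'\mid x)\ge P_{\text{base}}(y'\mid x)^N$; for the upper bound you use the cut function together with $\pBoN(A_{x,\alpha}\mid x)=p_A^N$ and $\pWoN(A_{x,\alpha}\mid x)=1-(1-p_A)^N$. The only difference is that you handle the normalizer explicitly: you drop it via $c_I\le 1$ for the lower bound and use the uniform bound $c_I\ge a>0$ from Proposition~\ref{prop:total-variation-bw-bwi} for the upper bound, giving $D=1/(a\,\gamma_{Q,\alpha})$, whereas the paper simply absorbs it into the constants.
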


\begin{proof}
    Here we absorb the normalization constant of $\PBWI$ into the constants. The proof is similar to the proof of Theorem \ref{thm:connectivity-bound-rstarbi-formal}, except that we replace the calculation of the probability mass of $\PBI$ with that of $\PBWI$. We start with the lower bound,

\begin{align}
    \frac{\PBWI(x,y,y')}{Q(x,y,y')} &= \frac{P(x) \pBoN(y \mid x) \pWoN(y' \mid x)}{Q(x,y,y')}\\
    &= \frac{P(x)(F(x,y)^N - F^{-}(x,y)^N) ((1 - F^{-}(x,y'))^N - (1 - F(x,y'))^N)}{Q(x,y,y')}\\
    &\geq \frac{P(x)\pbase^N\pbaseyprime^N}{Q(x,y,y')}.
\end{align}

Again, the term that decays with $N$ is $\pbase^N\pbaseyprime^N$. So there exists a constant $C > 0$ such that
\begin{equation}
    \frac{\PBWI(x,y,y')}{Q(x,y,y')} \geq C \Big(\min_{x,y \neq y'} P_{\text{base}}(y \mid x) P_{\text{base}}(y' \mid x)\Big)^N.
\end{equation}

For the upper bound, we first calculate the probability mass of $\pWoN(A_{x, \alpha} \mid x)$. We assume the same outcome ordering as in the proof of $\PBI$,
\begin{align}
\pWoN(A_{x, \alpha} \mid x) &= \sum_{i=1}^k \pWoN(y_i \mid x)\\
&= \sum_{i=1}^k ((1 - F^{-}(x,y_i))^N - (1 - F(x,y_i))^N)\\
&= 1 - (1 - F(x, y_k))^N \\
&= 1 - (1 - P_{\text{base}}(A_{x, \alpha}))^N.
\end{align}
With this, we have
\begin{align}
    &\Pr_{\PBWI}(y \in \aalphax, y' \notin \aalphax ) + \Pr_{\PBWI}(y \notin \aalphax, y' \in \aalphax )\\
     &= (\pBoN(A_{x, \alpha} \mid x))(1 - \pWoN(A_{x, \alpha} \mid x)) + (1 - \pBoN(A_{x, \alpha} \mid x))(\pWoN(A_{x, \alpha} \mid x))\\
     &= (P_{\text{base}}(A_{x, \alpha}))^N (1 - P_{\text{base}}(A_{x, \alpha}))^N + (1 - (P_{\text{base}}(A_{x, \alpha}))^N) (1 - (1 - P_{\text{base}}(A_{x, \alpha}))^N)\\
     &\leq  1 +  (P_{\text{base}}(A_{x, \alpha}))^N (1 - P_{\text{base}}(A_{x, \alpha}))^N.
\end{align}
\end{proof}

\begin{lemma}[General-$Q$ estimation error]
\label{lem:general-q-estimation-error}
Let $P$ be a comparison distribution, let $\cH$ be a convex hypothesis class bounded by some constant $B$, and let $\rstarp \in \arg\min_{r \in \cH} \LP(r)$. Let $S$ be a set of $n$ i.i.d.\ samples drawn from $P$, and let $\rhat$ be the empirical risk minimizer of the BT learning objective based on $S$. Then there exists a constant $D > 0$ such that with probability at least $1 - \delta$,
\begin{equation}
    \bE_Q[(\Delta_{\rhat} - \Deltarstarp)^2] \leq D \frac{\operatorname{Comp}(n, \cH, \delta)}{\lambdaconn(P, Q ; \cH)}
\end{equation}
where
\begin{equation}
    \operatorname{Comp}(n, \cH, \delta) := \frac{1}{\sigma_B}\left(\hat{\mathfrak{R}}_S(\cHpair) + M_B\sqrt{\frac{\log(2/\delta)}{n}}\right),
\end{equation}
with $M_B := \log(1 + \exp(2B))$ and $\sigma_B := \sigma(2B)(1 - \sigma(2B))$.
\end{lemma}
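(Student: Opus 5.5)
The argument chains three steps: an excess-risk bound for $\rhat$ under $P$, a transfer of that bound to the margin semi-norm under $P$ via the strong convexity lemma, and a change of measure from $P$ to $Q$ via the definition of $\lambdaconn$.

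\textbf{Step 1: excess risk.} Write the standard decomposition
\[
\LP(\rhat) - \LP(\rstarp) = \bigl[\LP(\rhat) - \hat{\mathcal{L}}_S(\rhat)\bigr] + \bigl[\hat{\mathcal{L}}_S(\rhat) - \hat{\mathcal{L}}_S(\rstarp)\bigr] + \bigl[\hat{\mathcal{L}}_S(\rstarp) - \LP(\rstarp)\bigr].
\]
The middle bracket is nonpositive by the definition of the empirical risk minimizer. The remaining two brackets are bounded by a uniform deviation over $\cH$.

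The BT loss depends on $r$ only through the margin $\Delta_r$. It takes values in $[0, M_B]$ because $|\Delta_r| \le 2B$, as computed in Lemma~\ref{lemma: small tv leads to small population risk difference}. It is also $1$-Lipschitz in the margin, since the derivative of $-\log\sigma$ lies in $(-1,0)$.

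I therefore view the loss as composed with the pair class $\cHpair := \{(x,y,y') \mapsto r(x,y) - r(x,y') : r \in \cH\}$. Symmetrization, Talagrand's contraction lemma, and McDiarmid's inequality with range $M_B$ then give, with probability at least $1-\delta$,
\[
\sup_{r\in\cH} \bigl|\LP(r) - \hat{\mathcal{L}}_S(r)\bigr| \lesssim \hat{\mathfrak{R}}_S(\cHpair) + M_B\sqrt{\log(2/\delta)/n}.
\]
This yields $\LP(\rhat) - \LP(\rstarp) \le c\,(\hat{\mathfrak{R}}_S(\cHpair) + M_B\sqrt{\log(2/\delta)/n})$ for an absolute constant $c$.

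\textbf{Step 2: margin semi-norm under $P$.} Since $\cH$ is convex and bounded and $\rhat \in \cH$, Lemma~\ref{lemma: margin semi-norm is bounded by population risk difference} applies with $r = \rhat$. It gives
\[
\bE_P\bigl[(\Delta_{\rhat} - \Deltarstarp)^2\bigr] = \lVert \rhat - \rstarp\rVert_{\Delta,P} \le \frac{2}{\sigma_B}\bigl(\LP(\rhat)-\LP(\rstarp)\bigr) \le 2c\,\operatorname{Comp}(n,\cH,\delta).
\]
This is exactly where the factor $1/\sigma_B$ inside $\operatorname{Comp}$ comes from.

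\textbf{Step 3: change of measure to $Q$.} Both $\rhat$ and $\rstarp$ lie in $\cH$, so the pair $(f,g) = (\rhat, \rstarp)$ is admissible in the infimum defining $\lambdaconn(P,Q;\cH)$. Hence
\[
\bE_Q\bigl[(\Delta_{\rhat}-\Deltarstarp)^2\bigr] \le \frac{\bE_P\bigl[(\Delta_{\rhat}-\Deltarstarp)^2\bigr]}{\lambdaconn(P,Q;\cH)}.
\]
If the $Q$-denominator is zero, the claim holds trivially. Combining the three steps gives the lemma with $D = 2c$.

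The main obstacle is Step 1. The uniform-convergence constants have to be tracked so that the bound matches the stated form of $\operatorname{Comp}$, in particular that the contraction step applies to the margin class rather than to $\cH$ itself. I would first try citing the corresponding generalization bound from \citet{pukdee2026doespreferencelearningrecover}. Failing that, I would give the symmetrization, contraction and McDiarmid argument directly, absorbing the factor $2$ from symmetrization into $D$.
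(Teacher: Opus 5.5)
Your proposal is correct and follows the paper's proof essentially step for step. The paper chains the same three ingredients in reverse order: the change of measure from the definition of $\lambda_{\text{conn}}(P,Q;\mathcal{H})$ with $(f,g)=(\hat r, r^*_P)$, then the strong-convexity lemma applied with $r=\hat r$, then the ERM excess-risk decomposition bounded by $2\sup_{r\in\mathcal{H}}|\mathcal{L}_P(r)-\widehat{\mathcal{L}}_P(r)|$ via Rademacher complexity and contraction for the $1$-Lipschitz loss on $\mathcal{H}_{\text{pair}}$, with numerical constants absorbed into $D$ as you do (your treatment of a vanishing $Q$-side quantity is, if anything, slightly more careful than the paper's remark that the claim is vacuous when $\lambda_{\text{conn}}=0$).
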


\begin{proof}
If $\lambdaconn(P, Q ; \cH) = 0$, the claim is vacuous. So assume that $\lambdaconn(P, Q ; \cH) > 0$. By the definition of the connectivity degree,
\begin{equation}
    \bE_Q[(\Delta_{\rhat} - \Deltarstarp)^2] \leq \frac{1}{\lambdaconn(P, Q ; \cH)} \bE_P[(\Delta_{\rhat} - \Deltarstarp)^2].
\end{equation}
By Lemma \ref{lemma: margin semi-norm is bounded by population risk difference},
\begin{equation}
    \bE_P[(\Delta_{\rhat} - \Deltarstarp)^2] = \lVert \rhat - \rstarp \rVert_{\Delta, P} \leq \frac{2}{\sigma_B}(\LP(\rhat) - \LP(\rstarp)).
\end{equation}

Let $\widehat{\LP}$ denote the empirical BT risk on the sample $S$. Since $\rhat$ is the empirical risk minimizer,
\begin{align}
    \LP(\rhat) - \LP(\rstarp) &= (\LP(\rhat) - \widehat{\LP}(\rhat)) + (\widehat{\LP}(\rhat) - \widehat{\LP}(\rstarp)) + (\widehat{\LP}(\rstarp) - \LP(\rstarp))\\
    &\leq 2 \sup_{r \in \cH} |\LP(r) - \widehat{\LP}(r)|.
\end{align}
Now let $\ell(a) = -\log \sigma(a)$. Since $\cH$ is bounded by $B$, we have $|r(x,y) - r(x,y')| \leq 2B$, so
\begin{equation}
    \ell(a) \leq \log(1 + \exp(2B)) := M_B
\end{equation}
for all relevant $a$. Moreover, $\ell$ is $1$-Lipschitz. Therefore, by the standard Rademacher-complexity bound and the contraction lemma, with probability at least $1 - \delta$,
\begin{equation}
    \sup_{r \in \cH} |\LP(r) - \widehat{\LP}(r)| \leq 2 \hat{\mathfrak{R}}_S(\cHpair) + 3M_B\sqrt{\frac{\log(2/\delta)}{n}}.
\end{equation}
Combining the last three displays, we obtain
\begin{equation}
    \bE_Q[(\Delta_{\rhat} - \Deltarstarp)^2] \leq \frac{1}{\lambdaconn(P, Q ; \cH)} \left(\frac{8}{\sigma_B}\hat{\mathfrak{R}}_S(\cHpair) + \frac{12M_B}{\sigma_B}\sqrt{\frac{\log(2/\delta)}{n}}\right).
\end{equation}
Absorbing the numerical factors into a constant $D$ proves the lemma.
\end{proof}

\AccuracyBound*

\begin{proof}
Since $\rstarbi$ preserves the ranking induced by $\rstar$, the sign-based accuracy with respect to $\rstar$ is the same as the sign-based accuracy with respect to $\rstarbi$. By Lemma \ref{lem:general-q-estimation-error} applied with $P = \PBI$ and $\rstarp = \rstarbi$, there exists a constant $D > 0$ such that with probability at least $1 - \delta$,
\begin{equation}
    \bE_Q[(\Delta_{\rhat} - \DeltarBI)^2] \leq D \frac{\operatorname{Comp}(n, \cH, \delta)}{\lambdaconn(\PBI, Q ; \cH)}.
\end{equation}

For any threshold $u > 0$, order preservation implies
\begin{align}
    \operatorname{Acc}_Q(\rhat) &\geq \Pr_Q(|\Delta_{\rhat} - \DeltarBI| \leq |\DeltarBI|)\\
    &\geq \Pr_Q(|\DeltarBI| \geq u) - \Pr_Q(|\Delta_{\rhat} - \DeltarBI| > u)\\
    &\geq \Pr_Q(|\DeltarBI| \geq u) - \frac{\bE_Q[(\Delta_{\rhat} - \DeltarBI)^2]}{u^2}\\
    &\geq \Pr_Q(|\DeltarBI| \geq u) - D \frac{\operatorname{Comp}(n, \cH, \delta)}{u^2\lambdaconn(\PBI, Q ; \cH)},
\end{align}
where the third line follows from Markov's inequality.

Next, we use Proposition \ref{thm:margin-rstarbi}. Because $\cX$ and $\cY$ are finite and every triplet in the support of $Q$ has non-zero probability mass under $P_{\text{base}}(\cdot \mid x)$, the uniform boundedness of $B_N(x;y,y')$ implies that
\begin{equation}
    M_Q := \max_{(x,y,y') \in \operatorname{supp}(Q)} \sup_{N \geq 1} |B_N(x;y,y')| < \infty.
\end{equation}
For every $(x,y,y')$ in the support of $Q$, Proposition \ref{thm:margin-rstarbi} gives
\begin{equation}
    \DeltarBI(x;y,y') = N \log \frac{F(x,y)}{F(x,y')} + B_N(x;y,y').
\end{equation}
Therefore,
\begin{equation}
    |\DeltarBI(x;y,y')| \geq N \left|\log \frac{F(x,y)}{F(x,y')}\right| - M_Q.
\end{equation}
In particular, for any $k > 0$,
\begin{equation}
    \left|\log \frac{F(x,y)}{F(x,y')}\right| \geq k + \frac{M_Q}{N}
    \quad \Longrightarrow \quad
    |\DeltarBI(x;y,y')| \geq Nk.
\end{equation}
Hence,
\begin{equation}
    \Pr_Q(|\DeltarBI| \geq Nk) \geq \Pr_Q\left(\left|\log \frac{F(x,y)}{F(x,y')}\right| \geq k + \frac{M_Q}{N}\right).
\end{equation}

Substituting $u = Nk$ into the accuracy bound above yields
\begin{align}
    \operatorname{Acc}_Q(\rhat) \geq \sup_{k > 0}
    \Big[
        \Pr_Q\left(\left|\log \frac{F(x,y)}{F(x,y')}\right| \geq k + \frac{M_Q}{N}\right)
        -
        D \frac{\operatorname{Comp}(n, \cH, \delta)}{N^2k^2\lambdaconn(\PBI, Q ; \cH)}
    \Big].
\end{align}
This is the desired result.
\end{proof}

\section{Experiment details}
\label{app:experiment-details}

In this section, we provide full experimental details and additional results that supplement the main experiments. We conduct experiments on synthetic data to validate our theoretical findings, following the setup of \citet{pukdee2026doespreferencelearningrecover}.

\subsection{Synthetic Data Setup}

\textbf{Context and Response Space.}
Let $\mathcal{S} = \{u_1, u_2, \dots, u_m\}$ be a finite set of items with $u_i \in \mathbb{R}^d$, and let $\mathcal{X} = \mathcal{Y} = \mathcal{S}$. We use the same space for contexts and responses to simplify the synthetic setting, so the score function can be interpreted as a similarity function between items. We set $m = 16$ and $d = 128$ for simplicity. To generate the dataset, we sample each $u_i$ from $\mathcal{N}(0, I_d)$.

\textbf{Ground-truth Score Function.}
The ground-truth score is defined via a two-layer neural network $f^*: \mathbb{R}^d \to \mathbb{R}^{e}$ with hidden dimension $h = 32$ and output embedding dimension $e = 8$, using ReLU activations:
\begin{equation}
    r^*(x, y) = \operatorname{cosine-similarity}(f^*(x), f^*(y)),
\end{equation}
ensuring scores lie in $[-1, 1]$.

\textbf{Triplet Generation.} We generate $n$ triplets $S = \{(x_i, y_i^+, y_i^-)\}_{i=1}^n$ by sampling $x_i$ uniformly from $\mathcal{X}$ and drawing $y_i^+$ and $y_i^-$ from a negative distribution $p^-(\cdot \mid x_i)$ and a positive distribution $p^+(\cdot \mid x_i)$ that is BT-consistent with $r^*$, i.e., $\log(p^+(y \mid x) / p^-(y \mid x)) = r^*(x, y)$. By default, we use a uniform negative distribution $p^-(\cdot \mid x) = \frac{1}{m}$. The validation set is also generated in the same way.

\textbf{Training and Evaluation.} We train a two-layer neural network with the same architecture as $f^*$ to minimize the empirical BT loss on $S$. We use the Adam optimizer with learning rate selected from $\{10^{-4}, 10^{-3}, 10^{-2}\}$. We train for 200 epochs and select hyperparameters via a validation set of 2048 triplets drawn from the same distribution based on validation loss. We report accuracy with respect to the uniform distribution $Q$ over all pairs $(u_i, u_j)$, with standard error over 5 random seeds.

\clearpage

\subsection{Comparison between Best-vs-Random and Best-vs-Worst for each $N$}

\begin{figure*}[h]
    \centering
    \begin{subfigure}[b]{0.32\linewidth}
        \includegraphics[width=\linewidth]{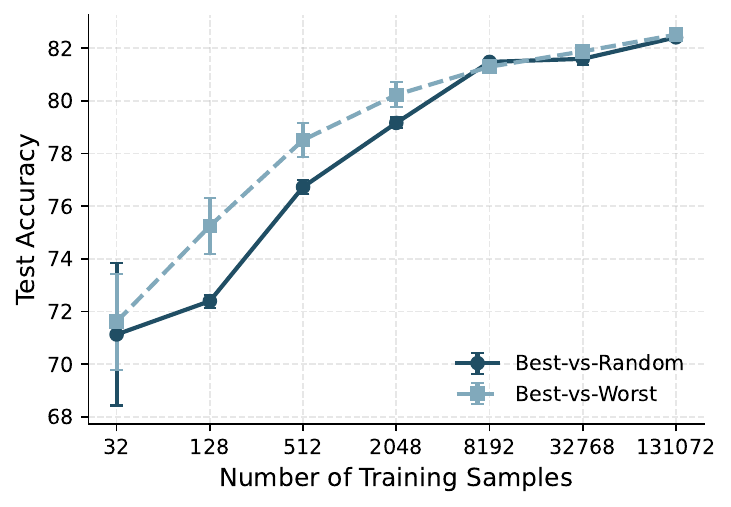}
        \caption{$N = 4$}
    \end{subfigure}
    \hfill
    \begin{subfigure}[b]{0.32\linewidth}
        \includegraphics[width=\linewidth]{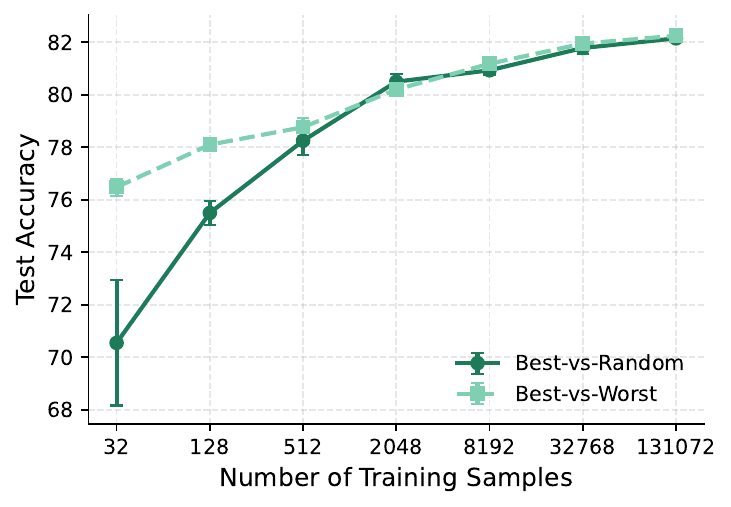}
        \caption{$N = 8$}
    \end{subfigure}
    \hfill
    \begin{subfigure}[b]{0.32\linewidth}
        \includegraphics[width=\linewidth]{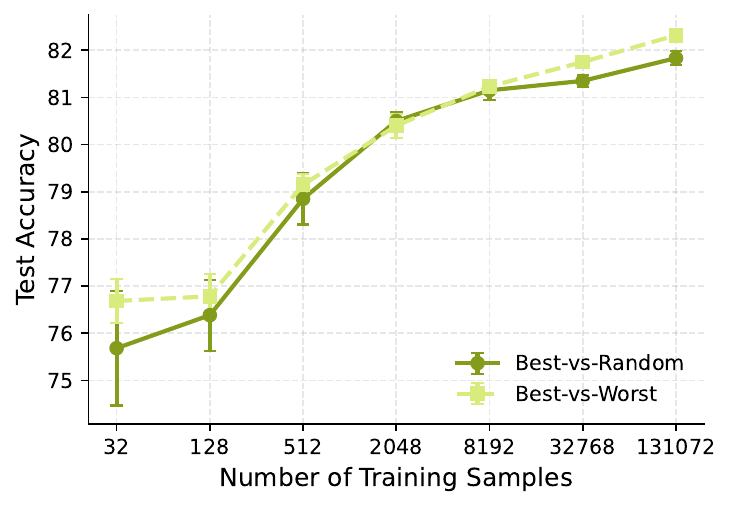}
        \caption{$N = 16$}
    \end{subfigure}
    \caption{Test accuracy for Best-vs-Random and Best-vs-Worst on UltraFeedback for each $N \in \{4, 8, 16\}$.}
\end{figure*}

\subsection{Model ablation}
\label{app:model-ablation}

\begin{figure*}[h]
    \centering
    \begin{subfigure}[b]{0.32\linewidth}
        \centering
        \includegraphics[width=\linewidth]{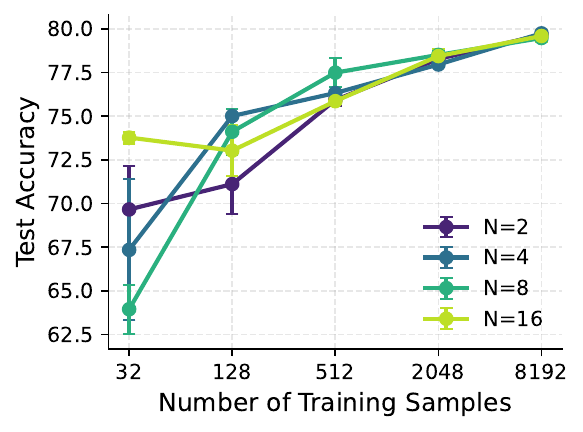}
        \caption{\texttt{gemma-3-12b-it}}
    \end{subfigure}
    \hfill
    \begin{subfigure}[b]{0.32\linewidth}
        \centering
        \includegraphics[width=\linewidth]{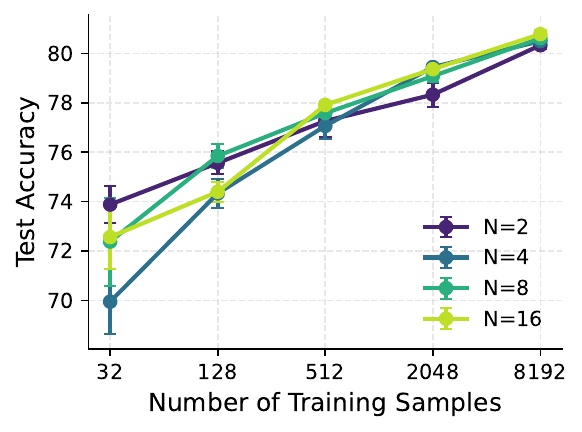}
        \caption{\texttt{Ministral-3-8B-Instruct-2512}}
    \end{subfigure}
    \hfill
    \begin{subfigure}[b]{0.32\linewidth}
        \centering
        \includegraphics[width=\linewidth]{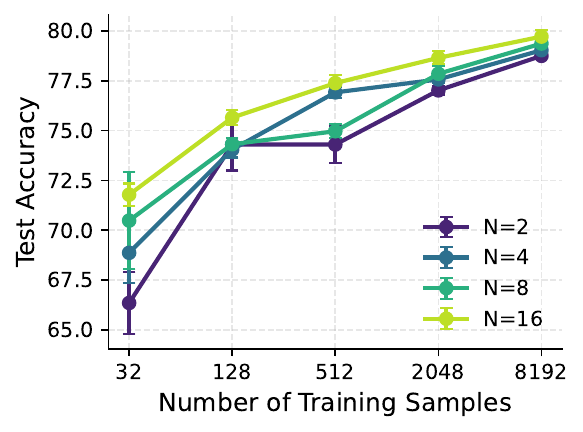}
        \caption{\texttt{Qwen3-8B}}
    \end{subfigure}
    \caption{Test accuracy for Best-vs-Random across different base models.}
\end{figure*}

\subsection{Connectivity degree of UltraFeedback}
\begin{figure}[h]
    \centering
    \includegraphics[width=0.5\textwidth]{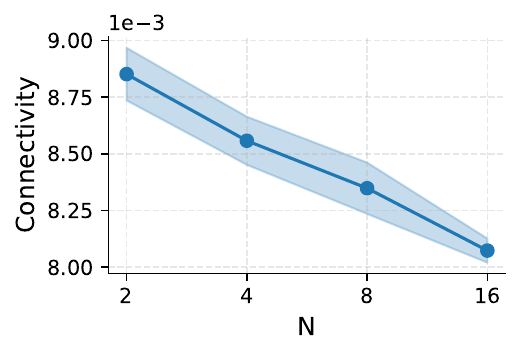}
    \caption{Connectivity vs. $N$ for \texttt{UltraFeedback}.}
    \label{fig:connectivity-vs-n-ultrafeedback}
\end{figure}

\clearpage
\subsection{Additional results for synthetic data}
\label{app:additional-results-synthetic}
\begin{figure*}[h]
    \centering
    \begin{subfigure}[b]{0.45\linewidth}
        \includegraphics[width=\linewidth]{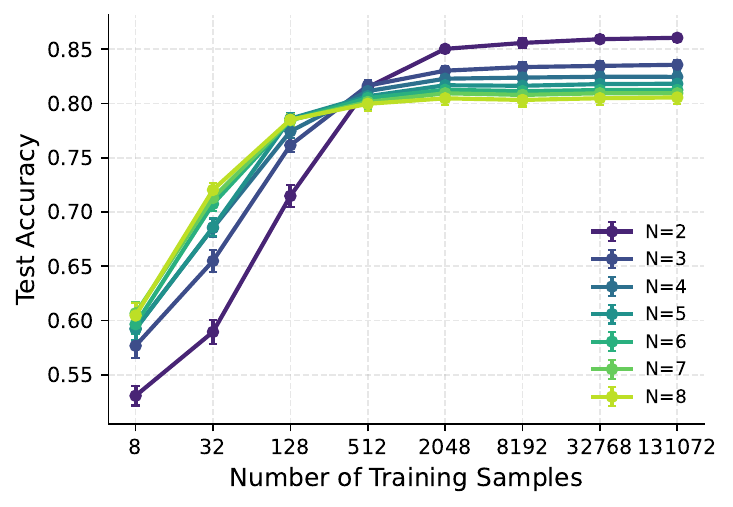}
        \caption{Accuracy}
    \end{subfigure}
    \hfill
    \begin{subfigure}[b]{0.45\linewidth}
        \includegraphics[width=\linewidth]{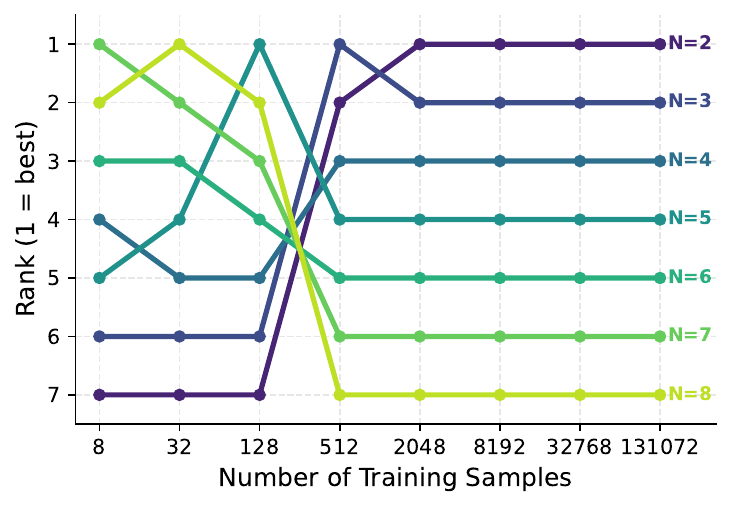}
        \caption{Ranking}
    \end{subfigure}
    \caption{Test accuracy for Best-vs-Worst-Independent on synthetic data.}
\end{figure*}

\begin{figure}[h]
    \centering
    \includegraphics[width=\textwidth]{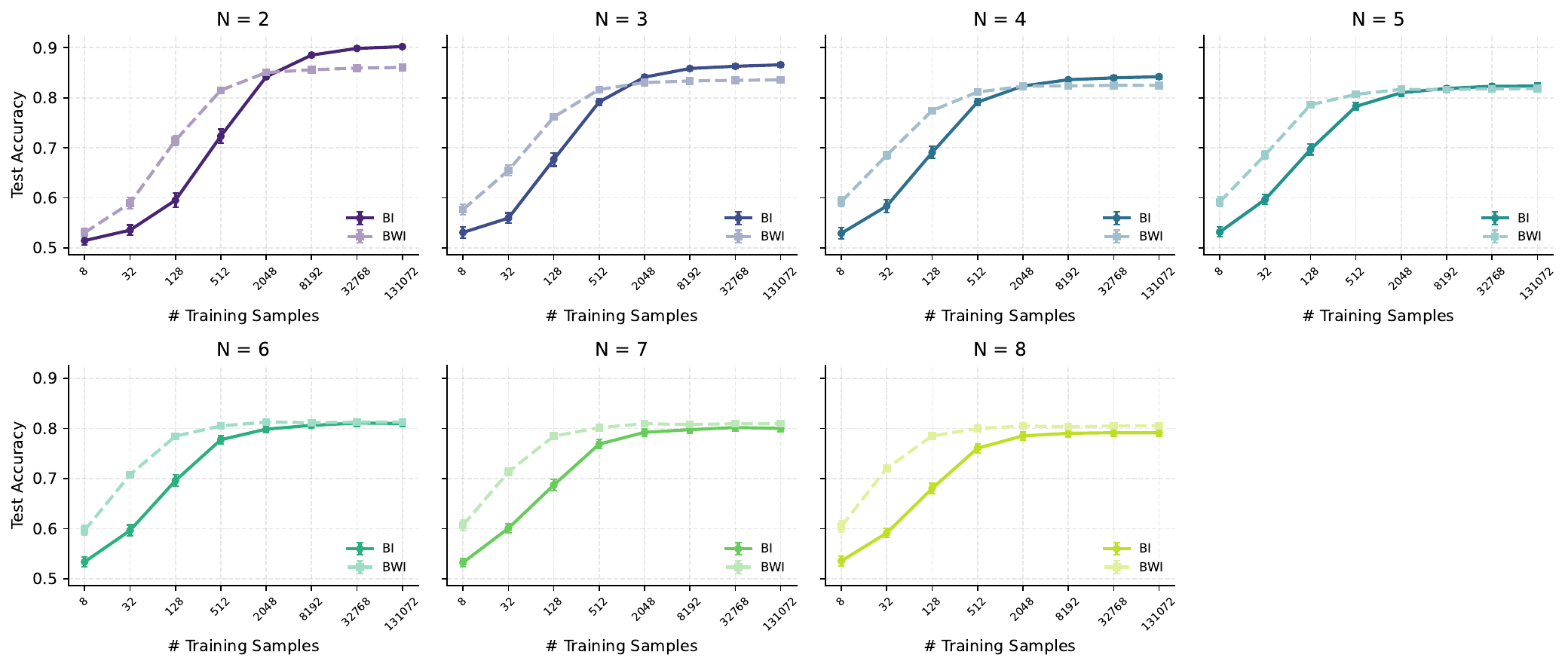}
    \caption{Test accuracy for Best-vs-Independent vs. Best-vs-Worst-Independent for each $N$.}
    \label{fig:bon-vs-won-per-n}
\end{figure}

\begin{figure}[h]
    \centering
    \includegraphics[width=0.40\textwidth]{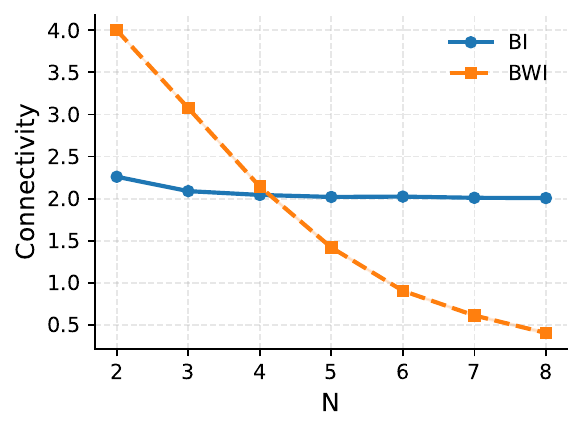}
    \caption{Connectivity degree for Best-vs-Independent vs. Best-vs-Worst-Independent for each $N$.}
    \label{fig:bon-vs-won-connectivity}
\end{figure}

\subsection{Details on distribution reshaping}
\label{app:distribution-reshaping}

To modify the shape of the base distribution in a standardized way, we use upsampling and downsampling. Each time we sample a response from the base distribution, we accept it with probability $1-p$. Otherwise, we evaluate its reward score. If the score lies in the percentile range $[c - w/2, c + w/2]$ for the given context $x$, then we accept the response in the upsampling setting and reject it in the downsampling setting; outside this range, we do the opposite. We repeat this process until we obtain $N = 4$ responses, and then apply the Best-vs-Random mechanism to generate the training data. Equivalently, this procedure increases or decreases the probability mass in a specified score band, yielding the modified base distributions
\begin{equation}
    P_{\text{base,up}} \propto (1 - p) P_{\text{base}} + p P_{c, w}
\end{equation}
\begin{equation}
    P_{\text{base,down}} \propto (1 - p) P_{\text{base}} - p P_{c, w}
\end{equation}
where $P_{c, w}$ denotes the base distribution with probability mass concentrated in the region whose reward score lies in the percentile range $[c - w/2, c + w/2]$. Here $c$ is the percentile center and $w$ is the width of the band. We use $c \in \{15, 30, 50, 70, 85\}$, $w = 30$, and $p = 0.8$. This up/downsampling procedure provides a standardized way to reshape the base distribution. For example, setting $c = 15$ targets lower-score responses, while setting $c = 85$ targets higher-score responses. We use $n = 512$ samples for each dataset and train a reward model with the same setup as in the previous experiments.

\subsection{Distribution of the reward score under different base distributions}

\begin{figure}[h]
    \centering
    \includegraphics[width=0.9\textwidth]{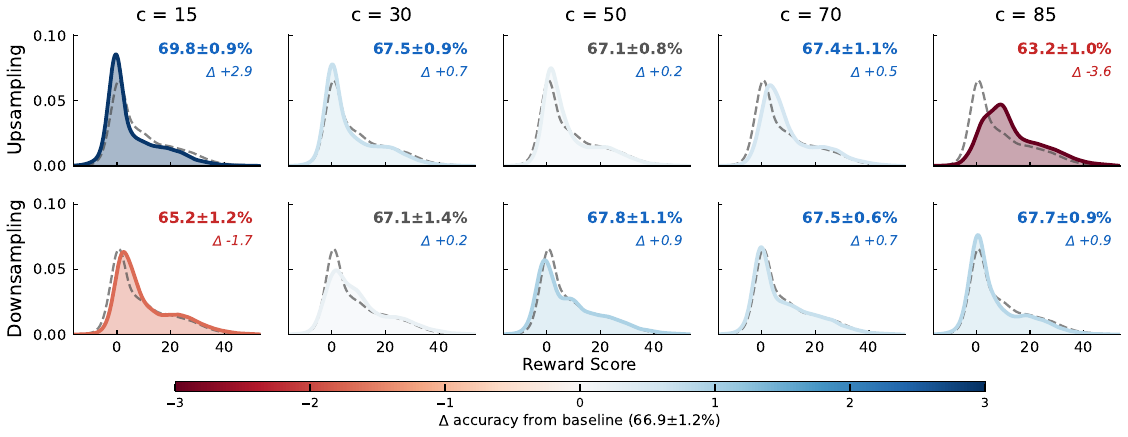}
\caption{Distribution of the reward score for different base distributions for \texttt{PKU-SafeRLHF}.}
\end{figure}

\begin{figure}[h]
    \centering
    \includegraphics[width=0.9\textwidth]{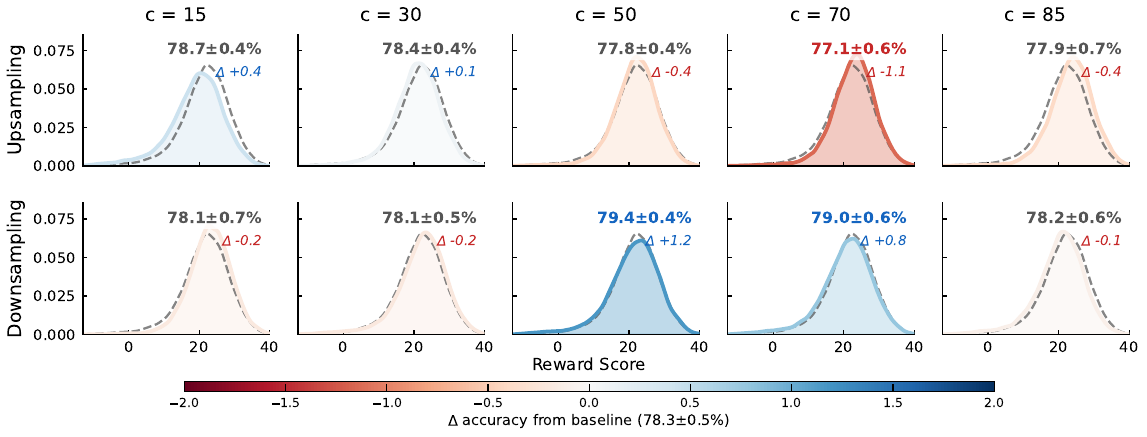}
\caption{Distribution of the reward score for different base distributions for \texttt{GSM8K}.}
\end{figure}

\begin{figure}[h]
    \centering
    \includegraphics[width=0.9\textwidth]{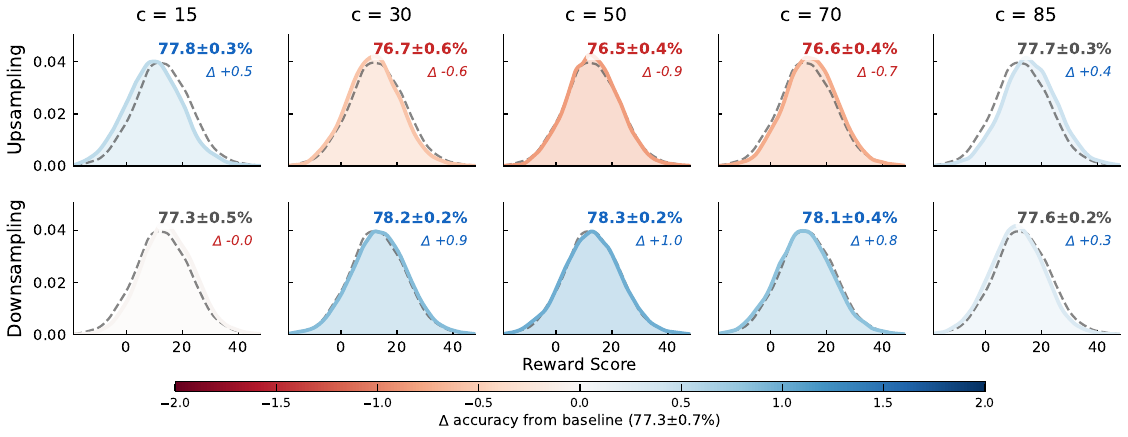}
\caption{Distribution of the reward score for different base distributions for \texttt{UltraFeedback}.}
\end{figure}

\clearpage
\subsection{Additional details for GSM8K}
\label{app:additional-plots-gsm8k}

\texttt{GSM8K} \citep{cobbe2021gsm8k} is a question-answering dataset rather than a preference dataset, so we construct preference pairs from model-generated responses. For each prompt in the test set, we sample multiple responses from the base generator and score them with the reward model. Each response is labeled as correct or incorrect by extracting the final numeric answer and comparing it to the gold answer. We then pair a randomly selected correct response against the highest-scoring incorrect response (i.e., the most convincing wrong answer according to the reward model), and discard prompts that lack either a correct or incorrect response.

A complication is that the reward model is poorly calibrated for mathematical correctness on \texttt{GSM8K}: the score distributions for correct and incorrect responses overlap substantially, meaning a non-trivial fraction of correct responses receive lower reward scores than incorrect ones from the same prompt. To avoid contaminating the test set with pairs where the reward model effectively disagrees with the ground-truth labeling, we additionally require the chosen (correct) response to outscore the rejected (incorrect) response by a reward margin of at least 2. Figure~\ref{fig:gsm8k-scores-distribution} shows the resulting score distributions: even after this filter, the chosen and rejected distributions remain heavily overlapping, indicating that the test set is far from trivial.

\begin{figure}[ht]
    \centering
    \begin{subfigure}[t]{0.48\textwidth}
        \centering
        \includegraphics[width=\linewidth]{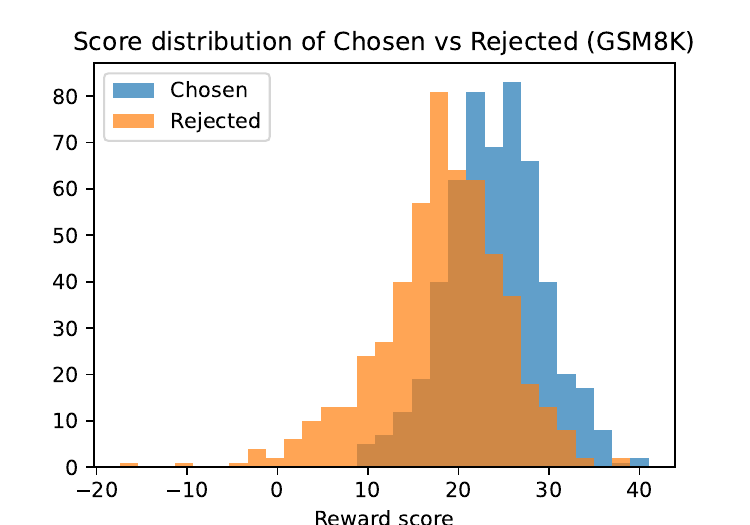}
        \caption{Distribution of the reward score for \texttt{GSM8K}.}
        \label{fig:gsm8k-scores-distribution}
    \end{subfigure}\hfill
    \begin{subfigure}[t]{0.48\textwidth}
        \centering
        \includegraphics[width=\linewidth]{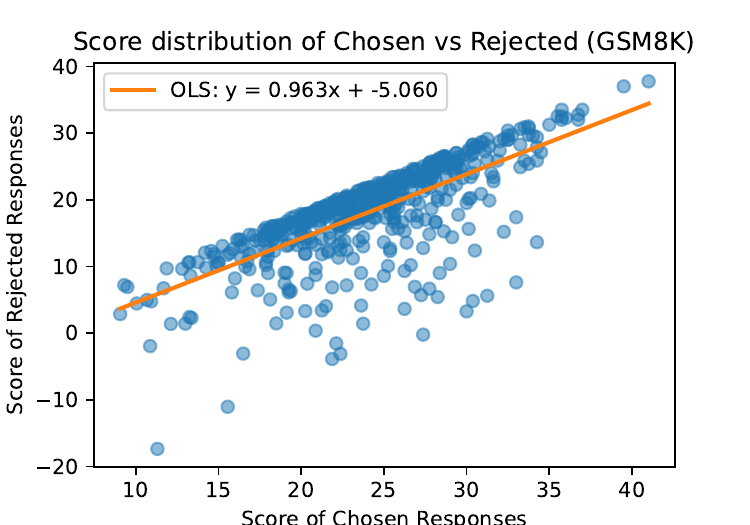}
        \caption{Scatter plot of the reward score for \texttt{GSM8K}.}
        \label{fig:gsm8k-scores-distribution-scatter}
    \end{subfigure}
    \caption{Reward scores on \texttt{GSM8K}: histogram and scatter.}
    \label{fig:gsm8k-scores-combined}
\end{figure}

\clearpage
\end{document}